\newcommand{\trace}{\ensuremath{\operatorname{tr}}}
\newcommand{\real}{\ensuremath{\mathbb{R}}}
\newcommand{\ind}{\mathbb{I}}
\newcommand{\En}{\vphantom{p}\mathbb{E}} 
\newcommand{\Eu}[1]{\underset{#1}{\vphantom{p}\mathbb{E}}}  
\DeclareMathOperator*{\argmin}{argmin}
\DeclareMathOperator*{\arginf}{arginf}
\DeclareMathOperator*{\argsup}{argsup}
\DeclareMathOperator*{\pinf}{\vphantom{p}inf}
\DeclareMathOperator*{\psup}{\vphantom{p}sup}
\newcommand\inner[2]{\langle #1, #2 \rangle}
\newcommand{\multiminimax}[1]{\ensuremath{\left\llangle #1 \right\rrangle}} 
\newcommand{\alg}{\mathcal{A}} 
\DeclareMathOperator{\polregT}{Reg^{\mathsf{pol}}_T}
\newcommand{\ztr}{\mathbf{\z}}
\newcommand{\zdtr}{\bm{\zd}}
\newcommand{\stf}[2]{\st_{#1}[{#2}]}
\newcommand{\stt}{\tilde{\st}}
\newcommand{\F}{\Pi}
\newcommand{\Fgen}{\mathcal{F}}
\newcommand{\psw}{p^{\sf{sw}}}
\newcommand{\Ksw}{K^{\sf{sw}}}
\newcommand{\ldb}{\bar{\ad}}
\newcommand{\zb}{\bar{\z}}
\newcommand{\zdb}{\bar{\zd}}
\newcommand{\wvec}{\mathbf{w}}
\newcommand{\losscf}{\loss^\Phi}
\newcommand{\U}{\mathcal{U}}
\newcommand{\Z}{\mathcal{Z}}
\newcommand{\Zl}{\Z_{\loss}}
\newcommand{\Zdyn}{\Z_{\dyn}}
\renewcommand{\O}{\mathcal{O}}
\newcommand{\Ot}{\widetilde{\mathcal{O}}}
\newcommand{\xcov}{\mathbf{x}}
\newcommand{\X}{\mathcal{X}}
\newcommand{\distl}{\mathcal{Q}}
\newcommand{\dista}{\mathcal{P}}
\newcommand{\loss}{\ell}
\newcommand{\losst}{\tilde{\ell}}
\newcommand{\losss}{\loss^\dyn_*}
\newcommand{\fs}{\f^*}
\newcommand{\z}{z}
\newcommand{\zd}{\zeta}
\newcommand{\f}{\pi}
\newcommand{\ld}{q}
\newcommand{\ad}{p}
\newcommand{\st}{x}
\newcommand{\ac}{u}
\newcommand{\dyn}{\Phi}
\newcommand{\T}{T}
\newcommand{\val}{\mathcal{V}}
\newcommand{\Rseq}{\mathfrak{R}^{\sf seq}}
\newcommand{\w}{w}
\newcommand{\Dnoise}{\mathcal{D}_{\w}}
\newcommand{\rad}{\epsilon}
\newcommand{\stable}{\beta}
\newcommand{\stables}{\beta^*}
\newcommand{\regf}{\Omega}
\newcommand{\regp}{\lambda}
\newcommand{\com}{\circ}
\newcommand{\Ergo}{Ergodic\xspace}
\newcommand{\defn}{:\,=}
\newtheorem{theorem}{Theorem}
\newtheorem{lemma}{Lemma}
\newtheorem{corollary}{Corollary}
\newtheorem{proposition}{Proposition}
\newtheorem{remark}{Remark}
\newtheorem{definition}{Definition}
\newtheorem{assumption}{Assumption}
\newcommand{\ERM}{\mathsf{ERM}}
\newcommand{\RERM}{\mathsf{RERM}}
\newcommand{\detm}{\mathsf{det}}
\newcommand{\1}{\ensuremath{{\sf (i)}}}
\newcommand{\2}{\ensuremath{{\sf (ii)}}}
\newcommand{\3}{\ensuremath{{\sf (iii)}}}
\newcommand{\sdim}{d}
\newcommand{\adim}{k}
\newcommand{\lqrQ}{Q}
\newcommand{\Qbnd}{\sigma_q}
\newcommand{\Rbnd}{\sigma_r}
\newcommand{\Bbnd}{\sigma_b}
\newcommand{\lqrA}{A}
\newcommand{\lqrB}{B}
\newcommand{\zbnd}{c_z}
\newcommand{\Ftrack}{\F_{\mathsf{track}}}
\newcommand{\K}{K}
\newcommand{\Kbnd}{c_\K}
\newcommand{\bias}{\eta}
\newcommand{\cbnd}{c_\bias}
\newcommand{\rholq}{\rho}
\newcommand{\Meff}{M}
\newcommand{\stablea}{C_{\mathsf{tar}, 1}}
\newcommand{\stableb}{C_{\mathsf{tar}, 2}}
\newcommand{\valf}{V}
\newcommand{\Flq}{\F_{\mathsf{LQR}}}
\newcommand{\stk}{\kappa}
\newcommand{\stg}{\gamma}
\newcommand{\lqH}{H}
\newcommand{\lqL}{L}
\newcommand{\lqA}{A}
\newcommand{\lqB}{B}
\newcommand{\lqQ}{Q}
\newcommand{\lqR}{R}
\newcommand{\N}{\mathcal{N}}
\newcommand{\W}{W}
\newcommand{\Wbnd}{\sigma_w}
\newcommand{\trw}{\Psi_w}
\newcommand{\lqr}{\mathsf{LQR}}
\newcommand{\cov}{X}
\newcommand{\pert}{\sigma}
\newcommand{\expd}{\text{Exp}}
\newcommand{\lip}{L_{\mathsf{Lip}}}
\newcommand{\stK}{\regp_\K}
\newcommand{\sigl}{\tau_{w}}
\newcommand{\Ha}{\alpha_h}
\newcommand{\Hb}{\beta_h}
\newcommand{\errX}{\Delta_\cov}
\newcommand{\err}{\Delta}
\newcommand{\Kt}{\tilde{\K}_{t+1}}
\newcommand{\tK}{\tilde{\K}}
\newcommand{\covt}{\widetilde{\cov}}
\newcommand{\ssize}{S}
\newcommand{\asize}{A}
\newcommand{\Fmdp}{\F_{\mathsf{MDP}}}
\newcommand{\trans}{P}
\newcommand{\mdpd}{d}
\newcommand{\mdp}{\textsf{MDP}}
\newcommand{\mixt}{\tau}
\newcommand{\hf}{\hat{\f}}
\newcommand{\batch}{\tau}
\newcommand{\Tb}{\hat{\T}}
\newcommand{\Loss}{L}
\newcommand{\Losscf}{\Loss^\dyn}
\newcommand{\Kst}{K^*}
\newcommand{\nlsf}{\mathsf{NL}}
\newcommand{\signl}{\sigma_\nlsf}
\newcommand{\Fnl}{\F_{\nlsf}}
\newcommand{\lipx}{L_{l, \st}}
\newcommand{\liplf}{L_{l, \param}}
\newcommand{\lipf}{L_{\f}}
\newcommand{\xbnd}{c_x}
\newcommand{\lipnl}{L_{\nlsf}}
\newcommand{\tmix}{\mixt}
\newcommand{\param}{\theta}
\newcommand{\FTnl}{\Theta_{\nlsf}}
\newcommand{\pdim}{d_{\param}}
\newcommand{\stp}{\lambda_{\param}}
\newcommand{\pbnd}{c_\param}
\long\def\@makecaption#1#2{
        \vskip 0.8ex
        \setbox\@tempboxa\hbox{\small {\bf #1:} #2}
        \parindent 1.5em  
        \dimen0=\hsize
        \advance\dimen0 by -3em
        \ifdim \wd\@tempboxa >\dimen0
                \hbox to \hsize{
                        \parindent 0em
                        \hfil
                        \parbox{\dimen0}{\def\baselinestretch{0.96}\small
                                {\bf #1.} #2
                                }
                        \hfil}
        \else \hbox to \hsize{\hfil \box\@tempboxa \hfil}
        \fi
        }
\newcommand{\ff}{f}
\newcommand{\ball}{\mathbb{B}}
\newcommand{\sig}{\sigma}
\newcommand{\lin}{\textsf{Lin}}
\begin{document}

\begin{center}
{\bf {\LARGE{Online learning with dynamics: \\A minimax perspective}}}
\vspace*{.2in}

\large{
\begin{tabular}{cc}
Kush Bhatia$^\dagger$ & Karthik Sridharan$^\ddagger$
\end{tabular}
}
\vspace*{.2in}

\begin{tabular}{c}
Department of Electrical Engineering and Computer Sciences, UC
Berkeley$^\dagger$ \\
Department of Computer Science, Cornell University$^\ddagger$
\end{tabular}
\vspace*{.2in}

\today
\vspace*{.2in}

\end{center}

\begin{abstract}
We study the problem of {\em online learning with dynamics}, where a learner interacts with a stateful environment over multiple rounds. In each round of the interaction, the learner selects a  policy to deploy and incurs a cost that depends on both the chosen policy and current state of the world. The state-evolution dynamics and the costs are allowed to be time-varying, in a possibly adversarial way. In this setting, we study the problem of minimizing policy regret and provide non-constructive upper bounds on the minimax rate for the problem.



Our main results provide sufficient conditions for online learnability for this setup with corresponding rates. The rates are characterized by 1) a complexity term capturing the expressiveness of the underlying policy class under the dynamics of state change, and  2) a dynamics stability term measuring the deviation of the instantaneous loss from a certain counterfactual loss. Further, we provide matching lower bounds which show that both the complexity terms are indeed necessary.

Our approach provides a unifying analysis that recovers regret bounds for several well studied problems including online learning with memory, online control of linear quadratic regulators, online Markov decision processes, and tracking adversarial targets. In addition, we show how our tools help obtain tight regret bounds for a new problems (with non-linear dynamics and non-convex losses) for which such bounds were not known prior to our work.

\end{abstract}


\section{Introduction}\label{sec:intro}
Machine learning systems deployed in the real-world interact with people through their decision making. Such systems form a feedback loop with their environment: they learn to make decisions from real-world data and decisions made by these systems in turn affect the data that is collected. In addition, people often learn to adapt to such automated decision makers in an attempt to maximize their own utility rendering any assumption on the data generation process futile. Motivated by these aspects of decision making, we propose the problem of \emph{online learning with dynamics} which involves repeated interaction between a learner and an environment with an underlying state. The decisions made by the learner affect this state of the environment which evolves as a dynamical system. Further, we place no distributional assumptions on  the learning data and allow this to be adversarial.

Given such a setup, a natural question to ask is how does one measure the performance of the learner? Classical online learning studies one such notion of performance known as regret. This measure compares the performance of the learner to that of a fixed best policy in hindsight, when evaluated on the \emph{same} states which were observed by the learner. Such a measure of performance clearly does not work for the above setup: if we would have deployed a different policy, we would have observed different states of the environment. To overcome this, we study a counterfactual notion of regret, called \emph{Policy Regret}, where the comparator term is the performance of a policy on the states one would have observed if this policy was deployed from the beginning of time.

Such a notion of regret has been studied in the online learning literature for understanding memory based adversaries~\cite{merhav2002, anava2015, arora2012} and more recently, for the study of specific reinforcement learning models~\cite{even2009,abbasi2014,cohen2018}. However, a vast majority of these works have focused on known and fixed models of state evolution, often restricting the scope to linear dynamical systems. Further, these works have focused on simplistic policy classes as the comparators in their notion of policy regret. Contrast this with the vast literature on statistical learning~\cite{vapnik1971, bartlett2002} and classical online learning~\cite{rakhlin2010} which study the question of learnability in full generality; for arbitrary losses and general function classes.

Our work is a step towards addressing this gap. We study the problem of learnability for a class of online learning problems with underlying states evolving as a dynamical system in its full generality. \kbdelete{In particular, we consider the setup arbitrary costs at each round of interaction as well as an arbitrary sequence of dynamics functions, both of which possibly chosen adversarially and revealed to the learner on the fly.} Our main results provide sufficient conditions (along with non-asymptotic upper bounds) on when such problems are learnable, that is, can have vanishing policy regret. Our approach is non-constructive and provides a complexity term that provides upper bounds on the minimax rates for these problems. Further, we provide lower bounds showing that for a large class of problems, our upper bounds are tight up to constant factors. By studying the problem in full generality, we show how several well-studied problems in the literature comprising
online Markov decision processes~\cite{even2009}, online adversarial tracking~\cite{abbasi2014}, online linear quadratic regulator~\cite{cohen2018}, online control with adversarial noise~\cite{agarwal2019}, and online learning with memory~\cite{arora2012,anava2015} can be seen as specific examples of our general framework. We recover the best known rates for a majority of these problems, often times even generalizing these setups. We also provide examples where, to the best of our knowledge, previous techniques are not able to obtain useful bounds on regret; however using our minimax tools, we are able to provide tight bounds on the policy regret for these examples.

\kbdelete{Machine learning systems, making decisions in the real world often affect the state of the world they are deployed in. Reinforcement learning, linear-Quadratic regulators are all example of state-full systems where actions of learner or the controller, affect the evolution of states of the system. Classical online learning framework makes no assumptions about how instances are produced and aim to minimize the so called regret against the best policy from a class of policies chosen in hindsight. However, this framework overlooks the fact that the learning algorithm affects the state of the world its deployed in. If the benchmark policy we compare regret w.r.t. were actually to be deployed,  the state of the world would have evolved differently. Hence regret needs to be measured against benchmark policy according to the state evolution corresponding to that policy. Such notion of regret has been considered in past literature under the name policy regret. However most of the existing literature in this line of work typically assume a fixed, known, simple dynamics for states evolution. In fact, a vast majority of existing literature assume the dynamics to be linear. Further,  results are typically specific to simple class of policies like linear class of policies against which policy regret is measured.
In this paper we consider a very general set up with arbitrary class of policies, arbitrary costs per round of interaction and arbitrary sequence of state dynamics possibly adversarially chosen and revealed to the learner on the fly. We provide sufficient conditions (along with  upper bounds) on when such problems are learnable (that is can have vanishing policy regret). Our approach is non-constructive and provides a complexity term that provides upper bounds on the minimax rates for these problems. We further provide lower bounds showing that for a large class of problems, our upper bounds are tight to within constant factors. We finally provide concrete examples of previously studied problems including Online Markov decision processes~\cite{even2009}, Online Adversarial Tracking~\cite{abbasi2014}, Online Linear Quadratic Regulator~\cite{cohen2018}, Online Control with Adversarial Noise~\cite{Agarwal2018} and online learning with memory~\cite{arora2012}; for which we recover rates and sometimes generalize the set up. We also provide examples where, to our knowledge previous techniques are not able to obtain useful bounds on regret; however the tools we introduce provide tight bounds on the policy regret.}

Formally, we consider the setup where $\X$ denotes an arbitrary set of states, $\F$ an arbitrary class of policies and $\Z$ an arbitrary instance space. \kbdelete{We assume that the world begins in a state $\st_1 \in \X$ known in advance to the learner.} Given this, the interaction between the learner and nature can be expressed as a $\T$ round protocol where on each round $t \in [T]$, the learner picks a policy $\f_t \in \F$, the adversary simultaneously picks instance $(\z_t, \zd_t) \in \Z$. The learner suffers loss $\loss(\f_t,\st_t, \z_t)$ and the state of the system evolves\footnote{while we consider deterministic dynamics here, Section~\ref{sec:setup} considers general dynamics with stochastic noise} as  $\st_{t+1} \leftarrow \dyn(\st_t, \f_t, \zd_t)$, where $\dyn$ is known to the learner. The goal of the learner is to minimize policy regret
\begin{small}
\begin{equation*}
\polregT= \sum_{t=1}^\T \loss(\f_t, \st_t, \z_t) - \pinf_{\f\in \F} \sum_{t=1}^{\T}\loss(\f, \stf{t}{\f^{(t-1)}, \zd_{1:t-1}}, \z_t)~~,
\end{equation*}
\end{small}
where $\st_t$ are the states of the system based on learners choices of policies and $\stf{t}{\f^{(t-1)}, \zd_{1:t-1}}$ represents the state of the system at time $t$ if the policy $\f$ was used the previous $t-1$ rounds. We refer to the loss {\small$\loss(\f, \stf{t}{\f^{(t-1)}, \zd_{1:t-1}}, \z_t)$} as the \emph{counterfactual loss} of policy~$\f$.  \kbdelete{A problem instance in this online learning with dynamics problem is specified by tuple $(\F, \Z, \dyn, \loss)$.}
Notice that dynamics $\dyn$ being fixed or known in advance to the learner is not really restrictive since an adversary can encode arbitrary state dynamics mapping in $\zd_t$’s and $\dyn$ can just be seen as an applicator of these mapping. \kbdelete{Also note that instance $z_t$ are used for both in losses $\loss$ and in dynamics of state. Making the instance space rich enough, one can capture arbitrary dynamics and losses. We also assume that at the end of every round $t$, the learner gets to observe instances $(\z_t, \zd_t)$. In this section to keep things simple we present the state dynamics as described above, but in the next section  we present an extra option of adding known stochastic noise into the dynamics which we will formally introduce in that section.} \kbdelete{Our very general setting encompasses various previously studied problems mentioned earlier but more importantly extend to much more general settings like non-linear dynamics, non-convex losses and arbitrary policy classes.}

\paragraph{Our contributions.}We are interested in the following question: for a given problem instance $(\F,\Z,\dyn,\loss)$, is the problem learnable, that is, does there exists a learning algorithm such that policy regret is such that $\polregT = o(T)$.  \kbdelete{To this ends, on similar lines as done in Rakhlin et al.~\cite{rakhlin2015a} for classical online learning framework, our starting point is the so called minimax value of the problem. Using an application of  von Neumann's minimax theorem repeatedly, we see that the minimax value of this learning problem is equal to that of a dual game which we analyze to obtain our final bound.} Below we highlight some of the key contributions of this paper.
\begin{enumerate}
\item We show that the minimax policy regret for any problem specified by $(\F,\Z,\dyn,\loss)$ can be upper bounded by sum of two terms: i) a sequential Rademacher complexity like term for the class of counterfactual losses of the policy class, and ii) a term we refer to as dynamic stability term for the Empirical Risk Minimizer (ERM) (or regularized ERM) algorithm.
\item We analyze the problem in the dual game. While in most cases ERM does not even have low classical regret let alone policy regret, we show that ERM like strategy in the dual game can lead to the two term decomposition of minimax policy regret we mention above.
\item Ours is the first work that studies arbitrary online dynamical systems, and provides an analysis for general policy classes and loss functions (possibly non-convex).
\item We provide lower bounds that show that our sufficient conditions are tight for a large class of problem instances showing that both the terms in our upper bounds are indeed necessary.
\item We delineate a number of previously studied problems including online linear quadratic regulator, and online learning with memory for which we recover rates. More importantly, we provide examples of new non-convex and general online learning with dynamics problems and obtain tight regret bounds. For these examples, none of the previous methods are able to obtain any non-degenerate regret bounds.
\end{enumerate}

%
%

\section{Related work}\label{sec:rw}
\paragraph{Online learning and Sequential Complexities.} The classical online learning setup~\cite{cesa2006} considers a repeated interactive game between a learner and an environment without any notion of underlying dynamics. \kbdelete{It can be seen as a special case of our setup where the loss function is independent of the underlying states.}
Sequential complexity measures were introduced in~\cite{rakhlin2010} to get tight characterization of minimax regret rates for the classical online learning setting. They showed that for the class of online supervised learning problems, one can upper and lower bound minimax rate in terms of a sequential Rademacher complexity of the predictor class. The works \cite{littlestone1988,ben2009} provided an analog of VC theory for online classification and the sequential complexity measures in work \cite{rakhlin2010} provided such a theory for general supervised online learning. This paper can be seen as deriving such characterization of learnability and tight rates for the problem of online learning with dynamics. \kbdelete{Indeed, in the case when the loss function is independent of the system state, our main result recovers the classical sequential Rademacher complexity as special case.}In the more general setting we consider, while the main mathematical tools introduced in \cite{rakhlin2010} are useful, they are not by themselves sufficient because of the complexities of policy regret and the state dynamics. This is evident from our upper bound which consists of two terms (both of which we show are necessary) and only one of them is a sequential Rademacher complexity type term.

\paragraph{Optimal Control.} Another line of work closes related ours is that on the theory of optimal control (see~\cite{kirk2004} for a review). \kbdelete{Learning and control problems involving states that evolve based on learner or controller's action introduce new complexities to the learning problem.} Linear dynamical systems with simple zero mean noise models like Gaussian noise for state dynamics have been extensively studied (see the surveys \cite{ljung1999} and \cite{hardt2018} for an extensive review). While majority of the work in control have focused on linear dynamics with fixed noise models, $H_\infty$ control (and more generally robust control) literature has aimed at extending the setting to worst case perturbations (see \cite{stengel1994}). However these works focus on cumulative costs and are often not practical for machine learning scenarios where such algorithms tend to be overly conservative.

\paragraph{Online Control and Linear Dynamics.}
There has been recent work dealing with adversarial costs and linear dynamics with either stochastic or adversarial noise. Online Markov decision processes~\cite{even2009}, Online Adversarial Tracking~\cite{abbasi2014} and Online Linear Quadratic Regulator~\cite{cohen2018, simchowitz2020} are all examples of such work that deal with specific form of possibly adversarially chosen cost functions, albeit the loss functions in these problems are very specific and the dynamics are basically linear with either fixed stochastic noise or no noise. Perhaps the closest comparison to our work is the work by Agarwal et. al~\cite{agarwal2019}
 (and also \cite{agarwal2019b, foster2020}) where adversarial but convex costs, linear policies and linear dynamics with an adversarial component are considered. In contrast, we consider arbitrary class of policies, both adversarially chosen costs (possibly non-convex) and dynamics that are presented on the fly and arbitrary state space. Indeed, in Section~\ref{sec:examples} we work out how our analysis recovers some of the above mentioned results.

\section{Online learning with dynamics}\label{sec:setup}
We now formally define the online learning with dynamics problem. We let $\X$ represent the state space, $\F$ denote the set of learner polices, and $\Z = \Zl\times \Zdyn$ denote the space of adversary's moves. \kscomment{I am wondering if we even need to define action space and policy as mapping from state space to action space. Seems like we never need it anywhere in formal set up and in the examples we anyway only need to finally define $\F$.}\kbcomment{Resolved.}

\subsection{Problem setup}
The problem of online learning with dynamics proceeds as a repeated game between a learner and an adversary\kbdelete{\footnote{we interchangeably use the terms adversary and environment}} played over $\T$ rounds. The state of the system at time $t$, denoted by $\st_t \in \X$, evolves according to a stochastic dynamical system as $\st_{t+1} = \dyn(\st_t, \f_t, \zd_t) + \w_t$, where $\dyn: \X \times \F \times \Zdyn \to \X$ is the transition function and $\w_t \sim \Dnoise$ is a zero-mean additive noise. The transition function $\Phi$ is allowed to depend on adversary's action $\zd_t$ allowing the dynamics to change across time steps. We assume that the dynamics function $\Phi$ and distribution $\Dnoise$ are fixed apriori and are known to the learner before the game begins.

Given these dynamics, the repeated online game between the learner and the adversary starts at an initial state $\st_1$ proceeds via the following interactive protocol:\\\vspace{-2mm}

\noindent On round $t = 1, \ldots, \T,$\vspace{-2mm}
\begin{itemize}
  \item the learner picks policy $\f_t \in \F$, adversary simultaneously selects instance  $(\z_t, \zd_t) \in \Z$ \vspace{-1mm}
  \item the learner receives payoff (loss) signal $\loss(\f_t, \st_t, \z_t)$ \vspace{-1mm}
  \item the state of the system transitions to $\st_{t+1} = \dyn(\st_t, \f_t, \zd_t) + \w_t$
\end{itemize}
We consider the \emph{full information} version of the above game: the learner gets to observe the instances $(\z_t, \zd_t)$ at time $t$. The objective of the learner is to minimize, in expectation, the \emph{policy regret}
\begin{small}
\begin{equation}\label{eq:pol_reg}
\polregT = \sum_{t=1}^\T \Eu{w}[\loss(\f_t, \stf{t}{\f_{1:t-1}, \w_{1:t-1}, \zd_{1:t-1}}, \z_t)] - \pinf_{\f\in \F} \Eu{w}\left[\sum_{t=1}^{\T}\loss(\f, \stf{t}{\f^{(t-1)}, \w_{1:t-1}, \zd_{1:t-1}}, \z_t)\right]
\end{equation}
\end{small}
with respect to a policy class $\F$ and dynamics model $\dyn$. In the above definition, the notation $\stf{t}{\f_{1:t-1}, \w_{1:t-1}, \zd_{1:t-1}}$ makes the dependence of the state $\st_t$ explicit on the previous policies, noise and adversarial actions. For notational convenience, we will often drop dependencies on the noise variables $\w$ and adversarial actions $\zd$ when it is clear from context.

Observe that in the definition of policy regret, the loss depends on the state of the system at that instance which can be potentially very different for the learner and a comparator policy $\f$. This lends an additional source of complexity to the interactive game, and can make the problem \emph{much} harder than its counterpart without a dynamics.
It is worth highlighting that the policy regret defined above is akin to the notion of pseudo-regret in online learning: the infimum with respect to the comparator is taken with respect to the expected cost. The alternative, with the infimum and the expectation swapped, can be in general hard to deal with because of failure of uniform laws for general stationary ergodic processes~\citep{nobel1995}.

\ksdelete{We now turn to describe the adversary. There are two kinds of adversaries, \emph{oblivious} and \emph{adaptive}, which have been studied in the online learning literature. Primarily, these differ in the knowledge available to them while selecting their moves $\z_t$. An adversary is said to be adaptive if these moves are allowed to adapt to the randomness of the previous timesteps and oblivious otherwise. As we show in section~\ref{}, an adaptive adversary is quite strong for our online learning with dynamics problem and can force any learner to incur $O(T)$ regret. Going forward, we focus on the problem with an oblivious adversary -- one which selects its actions $\z_t$ before the game begins.\vspace{-2mm}}

\kbdelete{\paragraph{Special Case: Online learning.}}The problem of online learning with dynamics generalizes the online learning problem where the loss functions $\loss(\f, \st, \z) = \losst(\f, \z)$ are independent of the underlying state variables. Indeed, our notion of policy regret in equation~\eqref{eq:pol_reg} reduces to the notion of external regret studied in the online learning literature. Also, the problem of online learning with memory involves adversaries which have bounded memory of length $m$ and thus the loss incurred by the learner at any time is a function of its past $m$ moves. By setting the state variable $\st_t = [\f_{t-m}, \ldots, \f_{t-1}]$, the dynamics function $\dyn(\st_t, \f_t) = [\f_{t-m+1}, \ldots, \f_t]$, and the noise disturbances $\w_t = 0$, we can see that the bounded memory adversaries can be seen as a special case of our problem with dynamics.

\subsection{Minimax Policy Regret}
Given the setup of the previous section, we study the online learning with dynamics game between the learner and the adversary through a minimax perspective. \ksdelete{ We represent any learner learner by an algorithm $\alg$ mapping sequences of past adversarial actions to the space of policies as \mbox{$\alg: \bigcup_{t=1}^{\infty} \Z^t \mapsto \Delta(\F)$}. We then define the the minimax value of the game as
\begin{equation}\label{eq:valgame}
  \val_\T(\F, \Z, \dyn, \loss) \defn \pinf_{\alg} \sup_{\z_{1:\T}} \En\left[\sum_{t=1}^\T \Eu{w}[\loss(\f_t, \stf{t}{\f_{1:t-1}, w}, \z_t)] - \pinf_{\f\in \F} \Eu{w}\left[\sum_{t=1}^{\T}\loss(\f, \stf{t}{\f^{(t-1)}, w}, \z_t)\right]\right],
\end{equation}
where the outer expectation is over the randomness present in the algorithm $\alg$. Observe that this is the minimax value of the policy regret for a class of policies $\F$, adversarial choices $\Z$, and dynamics function $\dyn$. The value encodes the interactive protocol of the previous section with an oblivious adversary which cannot adapt to the random bits of both the learner and the stochastic dynamics.}
Studying this minimax value allows one to understand the limits of learnability for a tuple $(\F, \Z, \dyn, \loss)$: upper bounds on this value imply existence of algorithms with corresponding rates while lower bounds on this values represent the information-theoretic limits of learnability.  \ksdelete{Formally, the problem of online learnability with dynamics can now be reduced to studying this value as implied by the following definition.
\begin{definition}\label{def:learnability}
A policy class $\F$ is said to be \emph{online learnable} with respect to a given dynamics function $\dyn$  and noise model $w$ if,
\begin{equation*}
  \limsup_{\T \to \infty} \frac{\val_\T(\F, \Z, \dyn)}{T} = 0\;.
\end{equation*}
\end{definition}}
In the following lemma, we formally define the value $\val_T(\F, \Z, \dyn, \loss)$ of the minimax policy regret for a given problem which informally is the policy regret of best learning algorithm against the worst case adversary. \kbdelete{We also show that this value is equal to the value of a corresponding \emph{Dual Game} -- one in which the adversary reveals to the learner a distribution over its instances $(\z_t, \zd_t)$ and the learner responds by selecting a policy $\f_t \in \F$.}

\begin{proposition}[Value $\to$ Dual Game]\label{prop:dual_game}
Let $\distl$ and $\dista$ denote the sets of probability distributions over the policy class $\F$ and the adversarial actions $\Z$ respectively, satisfying the necessary conditions for the minimax theorem to hold. Then, we have that\footnote{$\multiminimax{\ldots}_{t=1}^{\T}$ denotes interleaved application of the sequence of operator inside. For example, for $T=2$,  \mbox{$\multiminimax{\sup_{p_t} \inf_{q_t}}_{t=1}^{2}[\cdot] = \sup_{p_1} \inf_{q_1} \sup_{p_2} \inf_{q_2}[\cdot] $} }
\begin{small}
\begin{equation}\label{eq:val_game}
\val_\T(\F, \Z, \dyn, \loss)  := \multiminimax{\pinf_{q_t \in \distl} \psup_{(\z_t, \zd_t) \in \Z}\Eu{\f_t \sim q_t}}_{t=1}^{\T}\left[\polregT\right] =   \multiminimax{\psup_{\ad_t \in \dista}\pinf_{\f_t}\Eu{(\z_t, \zd_t) \sim \ad_t}}_{t=1}^{\T}\left[\polregT\right].
\end{equation}
\end{small}
\end{proposition}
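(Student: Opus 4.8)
The plan is to establish the two claimed equalities separately. The first equality --- that $\val_\T(\F,\Z,\dyn,\loss)$ equals the interleaved $\multiminimax{\pinf_{q_t}\psup_{(\z_t,\zd_t)}\Eu{\f_t\sim q_t}}_{t=1}^{\T}[\polregT]$ --- is essentially a restatement of the definition of the minimax value, once one observes that randomizing the learner (playing $q_t\in\distl$ and sampling $\f_t\sim q_t$) cannot hurt against an oblivious adversary and that it is never advantageous for the adversary to randomize against a randomized learner when the learner moves after seeing nothing (the standard argument that the adversary may as well play a pure action $(\z_t,\zd_t)$ at the inner $\psup$). One should check carefully that the counterfactual states appearing in $\polregT$ depend only on the realized policies $\f_{1:t-1}$, the noise $\w_{1:t-1}$, and the adversarial moves $\zd_{1:t-1}$, so that taking expectations over $\f_t\sim q_t$ and over $\w$ is well-defined and the objective is multilinear in the per-round distributions in the right sense.

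The substantive content is the second equality, which swaps the order of the learner's and adversary's moves round by round: on the right-hand side the adversary first announces a distribution $\ad_t\in\dista$ over $(\z_t,\zd_t)$, and then the learner best-responds with $\f_t$ (now a pure policy suffices, since the learner moves second and the objective is linear in $\delta_{\f_t}$). The tool is von Neumann's minimax theorem applied $\T$ times, peeling off one round at a time from the outside. Concretely, I would induct on the round index: fix $q_1,(\z_1,\zd_1),\dots$ through round $t-1$, and consider the ``tail value'' from round $t$ onward as a function $g_t(q_t,(\z_t,\zd_t))$ obtained by plugging the optimal continuation into rounds $t{+}1,\dots,\T$. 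One shows $g_t$ is (by the inductive hypothesis, applied to the continuation) concave--convex --- linear in $q_t$ and, after replacing the pure adversary move by a mixture $\ad_t$, linear (hence convex, and in particular continuous/quasi-concave as needed) in $\ad_t$ --- so the hypotheses $\distl,\dista$ are assumed to satisfy (the proposition's ``necessary conditions for the minimax theorem to hold'') let us exchange $\pinf_{q_t}\psup_{(\z_t,\zd_t)}$ with $\psup_{\ad_t}\pinf_{\f_t}$. Iterating from $t=\T$ down to $t=1$ collapses the two interleaved expressions onto each other.

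The main obstacle I anticipate is the regularity/compactness bookkeeping required to legitimately invoke the minimax theorem at each stage: $\F$ and $\Z$ are completely arbitrary sets, so one must either topologize them and assume compactness and appropriate semicontinuity of $\loss$ and $\dyn$, or work with finitely-supported distributions and a limiting argument, or appeal to a minimax theorem (Sion's) under the weakened hypotheses the proposition already presupposes. A related subtlety is that the tail value $g_t$ is defined via a nested $\multiminimax{\cdot}$ over future rounds, so proving it is concave in $\ad_t$ / convex in $q_t$ requires knowing that these properties are preserved under the $\pinf\psup$ operations of the inductive step --- this is where the argument must be organized as a clean downward induction on $t$ with the concave--convex structure as part of the inductive statement. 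The dependence of future counterfactual states on the current realized $\f_t$ (through $\st_{t+1}=\dyn(\st_t,\f_t,\zd_t)+\w_t$) does not break linearity in $q_t$ because we take the expectation over $\f_t\sim q_t$ of the entire (random) tail objective; I would flag this explicitly, since it is the point where the ``dynamics'' part genuinely interacts with the minimax-swap and is easy to get wrong.
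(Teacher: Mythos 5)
Your proposal is correct and follows essentially the same route as the paper: the first identity is just the definition of the value (the paper writes it with $\defn$), and the second is obtained by a downward induction that applies von Neumann's minimax theorem once per round, starting at round $\T$ with the prefix fixed and peeling off rounds one at a time, exactly as in Appendix~\ref{app:setup}. The regularity and bilinearity caveats you raise are the same ones the paper delegates to the hypothesis that $\distl,\dista$ ``satisfy the necessary conditions for the minimax theorem to hold'' (with a pointer to Rakhlin et al.), so no further action is needed.
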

The proof of the proposition is deferred to Appendix~\ref{app:setup}. The proof proceeds via a repeated application of von Neumann's minimax theorem (for instance see~\cite[Appendix A]{rakhlin2015a}). Notice that the minimax theorem changes the order of the online sequential game defined in the setup above: at every time step $t$, the adversary proceeds first and outputs a distribution $\ad_t$ over instances and the learner responds back with $\f_t$ \emph{after} having observed the distribution. The actual loss instance $(\z_t, \zd_t)$ is then sampled from the revealed distribution $\ad_t$. On the other hand, the comparator remains the same as before: the best policy $\f \in \F$ in hindsight. This reversed game, termed the \emph{Dual Game}, forms the basis of our analysis and allows us to study the complexity of the online learning with dynamics problem.

\section{Upper bounds on value of the game}\label{sec:main}
Our main result in this section concerns an upper bound on the value of the sequential game $\val_\T(\F, \Z, \dyn, \loss)$ relating it to the study of certain stability properties of empirical minimizers and stochastic processes associated with them. Before we proceed to describe the main result, we revisit some preliminaries and setup notation which would be helpful in describing the main result.\nocite{suggala2019, hazan2016, shalev2012}

\kbdelete{\subsection{Sequential Rademacher Complexity}}
\paragraph{Sequential Rademacher complexity.}The notion of Sequential Rademacher Complexity, introduced in~\cite{rakhlin2010},  is a natural generalization of the Rademacher complexity for online learning. However, observe that the loss of the comparator term in the definition of policy regret in equation~\eqref{eq:pol_reg} depends on the adversarial actions $\zd_{1:t-1}$ through the dynamics and $\z_t$ through the loss function $\loss$. We define the following version of sequential Rademacher complexity for such dynamics based losses.
\begin{definition}
  The Sequential Rademacher Complexity of a policy class $\F$ with respect to loss function $\loss:\F\times \X \times \Zl \mapsto \real$ and dynamics $\dyn: \X \times \F \times \Zdyn \to \X$ is defined as
  \begin{equation*}
    \Rseq_\T(\loss\com\F) \defn \sup_{(\mathbf{z}, \bm{\zd})}\En_{\rad}\left[\sup_{\f \in \F} \sum_{t=1}^\T \rad_t \loss(\f, \stf{t}{\bm{\zd}_1(\rad), \ldots, \bm{\zd}_{t-1}(\rad)}, \mathbf{\z}_t(\rad)) \right]\;,
  \end{equation*}
  where the outer supremum is taken over $\Z = \Zl\times\Zdyn$-valued trees
  \footnote{A $\Z$-valued tree $\mathbf{\z}$ of depth $d$ is defined as a sequence
$(\mathbf{\z}_1, \ldots, \mathbf{\z}_d)$ of mappings $\mathbf{\z}_t:\{\pm 1\}^{t-1} \mapsto \Z $ (see~\cite{rakhlin2014})}
 of depth $\T$  and $\rad = (\rad_1, \ldots, \rad_\T)$ is a sequence of i.i.d. Rademacher random variables.
\end{definition}
A similar definition was also used by Han et al.~\cite{han2013} in the context of online learning with strategies where the notion of regret was defined w.r.t. a set of strategies rather than a fixed action.  As compared with the classical online learning problem, the above comprises problems where the loss at time $t$ depends on the complete history $(\zd_1, \ldots, \zd_{t-1})$ of the adversarial choices along with $\z_t$. As noted by~\cite{han2013}, such dependence on the the adversary's history can often make the online learning problem harder to learn compared with the online learning problem.

\kbdelete{For the online learning problem, Rakhlin et al.~\cite{rakhlin2010} established that such a  notion of complexity provides a way to study the online learnability of a class of functions. The proof for this goes via a symmetrization argument for non-i.i.d. sequences and crucially relies on the fact that the loss function at time $t$, say $\loss_t(\cdot) \defn \loss(\cdot, \z_t)$ is the same for both the learner and the comparator. This is not the case in the online learning with dynamics problem: the loss at time $t$ depends on the corresponding state of the learner and the comparator. Since these states can be very different, the symmetrization argument no longer works for our problem setup. Our main results work by defining a suitable notion of \emph{counterfactual loss} which allows us to side-steps this difficulty.}

\kbdelete{\subsection{Empirical Risk Minimization}}
\vspace{-2mm}\paragraph{Empirical Risk Minimization (ERM).} Given a sequence of loss functions ${\loss}_t : \Fgen \mapsto \real$ for $t \in [T]$, the ERM with respect to a function class $\Fgen$ is defined to be the minimizer of the cumulative loss with $f_{\ERM, \T} \in \argmin_{f \in \Fgen}\sum_{t=1}^\T {\loss}_t(f)$.
In the statistical learning setup, the problems of supervised classification and regression are known to be learnable with respect to a function class $\Fgen$ \emph{if and only if} the empirical risks uniformly converge over this class $\Fgen$ to the population risks. In contrast, our results provide \emph{sufficient} conditions for learnability in terms of certain stability properties of such empirical risk minimizers.


\vspace{-2mm}\paragraph{Dynamic stability.} We introduce the notion of dynamic stability which captures the stability of an algorithm's interaction with the underlying dynamics $\dyn$. In order to do so, we define a notion of counterfactual loss $\losscf_t$ of a policy $\f$ as the loss incurred by a learner which selects $\f$ for time $1:t$. \kbcomment{Recall that the state reached by repeating a policy $\f$ up to time $t$ is denoted by $\stf{t}{\f^{(t-1)}, \w_{1:t-1}, \zd_{1:t-1}}$.}

\begin{definition}[Counterfactual Losses]\label{def:loss_cf}
Given a sequence of adversarial actions $\zd_{1:t-1}, \z_t$,  dynamics function $\dyn$, and noise distribution $\Dnoise$, the counterfactual loss of a policy $\f$ at time $t$ is
\begin{equation*}
  \losscf_t(\f, \zd_{1:t-1}, \z_t) \defn \Eu{\w_s \sim \Dnoise}\left[\loss(\f, \stf{t}{\f^{(t-1)}, \w_{1:t-1}, \zd_{1:t-1}}, \z_t)\right]\;.
\end{equation*}
\end{definition}
With this definition, observe that the comparator term in the value $\val_{\T}$ in equation~\eqref{eq:val_game} is in fact a cumulative sum of counterfactual losses for a policy $\f$. Any algorithm $\alg$ that plays a sequence of policies $\{\f_t\}$ in the online game incurs an instantaneous loss $\loss(\f_t, \stf{t}{\f_{1:t-1}, \zd_{1:t-1}}, \z_t)$ at time $t$.
In comparison, the counterfactual loss $\losscf(\f_t, \zd_{1:t-1}, \z_t)$ represents a scenario where the algorithm commits to the policy $\f_t$ from the beginning of the game. Our notion of dynamic stability of an algorithm is precisely the deviation between these two types of losses: instantaneous and counterfactual.\kbdelete{ We make this precise in the following definition.}

\begin{definition}[Dynamic Stability]\label{def:dyn_stab}
An algorithm $\alg$ is said to be $\{\stable_t\}$-dynamically stable if for all sequences of adversarial actions $[(\z_1, \zd_1), \ldots, (\z_T, \zd_T)]$ and time instances $t\in [T]$
\begin{small}
\begin{equation*}
  \left|\En_{\w_{1:t-1}}[\loss(\f_t, \stf{t}{\f_{1:t-1}, \w_{1:t-1}, \zd_{1:t-1}}, \z_t)] - \losscf(\f_t, \zd_{1:t-1},\z_t) \right| \leq \stable_t \quad \text{where} \quad \f_t = \alg((\z_{1:t-1}, \zd_{1:t-1})).
\end{equation*}
\end{small}
\end{definition}
It is interesting to note that if that loss functions are independent of the underlying states, that is $\loss(\f, \st, \z) = \losst(\f, \z)$, then \emph{any} algorithm is dynamically stable in a trivial manner with the stability parameters $\stable_t = 0$ for all time instances $t$. \kbcomment{add a couple of lines on how these differ from notions of stability in control theory and learning theory.}

\kbcomment{\begin{remark}\label{rem:primal_upper}
  Given this definition of dynamic stability, we can see that for any algorithm $\alg$, the policy regret in equation~\eqref{eq:pol_reg} can be upper bounded as
  \begin{equation*}
    \polregT(\alg) \leq  \sum_{t=1}^\T \stable_{\alg, t} + [\sum_{t=1}^T \losscf_t(\f_t, \zd_{1:t-1}, \z_t) - \inf_{\f \in \F}\sum_{t=1}^\T\losscf(\f, \zd_{1:t-1}, \z_t)]\;.
  \end{equation*}
  The above bound brings out the trade-off from an algorithmic perspective: on one hand, the algorithm would like to obtain a low regret with respect to losses $\losscf$ and at the same time ensure that the stability terms are small. While the relaxation based algorithms in Han et al.~\cite{han2013} indeed minimize the counterfactual regret, however it is not clear if these are dynamically stable with non-trivial rates.\hfill$\clubsuit$
\end{remark}}

With these definitions, we now proceed to describe our main result. Recall that Proposition~\ref{prop:dual_game} translates the problem of studying the value of the game $\val_\T(\F, \Z, \dyn, \loss)$ to that of studying the policy regret in a dual game. In this dual game, the learner has access to the set of adversaries distribution $\{\ad_s \}_{s=1}^t$ at time $t$ and the policy $\f_t$ can be a function of these. For a regularization function $\regf:\F \mapsto \real_+$, we denote the regularized ERMs with respect to function class $\F$ and counterfactual losses~$\losscf$ by
\begin{equation}\label{eq:reg_erm}
\f_{\RERM, t} \in \argmin_{\f\in \F} \sum_{s=1}^t \Eu{ \z_s}\left[ \losscf(\f, \zd_{1:s-1}, \z_s)\right] + \regp\cdot \regf(\f)\;,
\end{equation}
 where $\regp \geq 0$ is the regularization parameter. The following theorem provides an upper bound on the value $\val_\T$ in terms of the dynamic stability parameters of the regularized ERMs above as well a sequential Rademacher complexity of the \emph{effective} loss class $\losscf\com\F \defn \{\losscf(\f, \cdot)\;:\;\f \in \F \}$.

 \begin{theorem}[Upper bound on value]\label{thm:main}
   For any online learning with dynamics instance $(\F, \Z, \dyn, \loss)$, consider the set of regularized ERMs given by eq.~\eqref{eq:reg_erm} with regularization function $\regf$ and parameter $\regp \geq0$ having dynamic stability parameters $\{\stable_{\RERM, t} \}_{t=1}^\T$. Then, we have that the value of the game
   \begin{equation}\label{eq:main_upper}
\val_{\T}(\F, \Z, \dyn, \loss) \leq \sum_{t=1}^\T \stable_{\RERM, t} + 2\Rseq_{T}(\losscf\com\F) + 2\regp\cdot\sup_{\f\in \F}\regf(\f)\;.
   \end{equation}
 \end{theorem}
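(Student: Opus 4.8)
The plan is to establish \eqref{eq:main_upper} inside the Dual Game of Proposition~\ref{prop:dual_game}, by exhibiting one concrete learner strategy there --- the regularized ERM of \eqref{eq:reg_erm} --- which already achieves the claimed right-hand side. By Proposition~\ref{prop:dual_game}, $\val_\T=\multiminimax{\psup_{\ad_t\in\dista}\pinf_{\f_t}\Eu{(\z_t,\zd_t)\sim\ad_t}}_{t=1}^{\T}[\polregT]$, and in the Dual Game the learner moves \emph{after} seeing $\ad_t$, so $\f_t$ may depend on $\ad_{1:t}$ and on the realized $(\z_{1:t-1},\zd_{1:t-1})$; in particular $\f_t=\f_{\RERM,t}$ from \eqref{eq:reg_erm} is an admissible choice (its $\z_s$-expectations are against the revealed marginals of $\ad_s$, and the $\zd_{1:t-1}$ are already realized), and substituting it upper bounds $\val_\T$. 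Expanding $\polregT$ for this learner, the comparator equals $\pinf_{\f}\sum_t\losscf(\f,\zd_{1:t-1},\z_t)$ by Definition~\ref{def:loss_cf}, while Definition~\ref{def:dyn_stab}, applied pointwise to each realized sequence, bounds the learner's round-$t$ term by $\losscf(\f_{\RERM,t},\zd_{1:t-1},\z_t)+\stable_{\RERM,t}$. Since $\sum_t\stable_{\RERM,t}$ is a constant, applying the monotone operator $\multiminimax{\psup_{\ad_t}\Eu{\sim\ad_t}}$ to this pointwise inequality yields
\[
\val_\T \le \sum_{t=1}^{\T}\stable_{\RERM,t} + \multiminimax{\psup_{\ad_t}\Eu{(\z_t,\zd_t)\sim\ad_t}}_{t=1}^{\T}\Big[\sum_{t=1}^{\T}\losscf(\f_{\RERM,t},\zd_{1:t-1},\z_t)-\pinf_{\f}\sum_{t=1}^{\T}\losscf(\f,\zd_{1:t-1},\z_t)\Big].
\]

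The next step trades the learner's \emph{stochastic} loss for a deterministic one. The point is that $\f_{\RERM,t}$ and the map $\losscf(\f_{\RERM,t},\zd_{1:t-1},\cdot)$ depend only on $\ad_{1:t}$ and $(\z_{1:t-1},\zd_{1:t-1})$ --- the counterfactual loss never sees $\zd_t$, and the RERM's $s=t$ term uses only the $\z_t$-marginal of $\ad_t$ --- so $G_t(\f_{\RERM,t}):=\Eu{\z_t'\sim\ad_t}[\losscf(\f_{\RERM,t},\zd_{1:t-1},\z_t')]$ is constant with respect to the innermost expectation $\Eu{(\z_t,\zd_t)\sim\ad_t}$. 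A backward induction over $t=\T,\dots,1$ --- replacing the learner's round-$t$ loss by this constant, which leaves the value of the interleaved operator unchanged --- substitutes $\sum_t\losscf(\f_{\RERM,t},\zd_{1:t-1},\z_t)$ by $\sum_t G_t(\f_{\RERM,t})$ in the bracket. Because $\f_{\RERM,t}$ minimizes $\sum_{s\le t}G_s(\cdot)+\regp\regf(\cdot)$ pointwise in the realization, the Be-The-Leader lemma gives $\sum_t G_t(\f_{\RERM,t})\le \sum_t G_t(\f^\star)+\regp\sup_{\f}\regf(\f)$ for the comparator $\f^\star$, so the bracket is at most $\regp\sup_{\f}\regf(\f)+\sup_{\f}\sum_t\big(\Eu{\z_t'\sim\ad_t}[\losscf(\f,\zd_{1:t-1},\z_t')]-\losscf(\f,\zd_{1:t-1},\z_t)\big)$.

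It remains to bound $\multiminimax{\psup_{\ad_t}\Eu{(\z_t,\zd_t)\sim\ad_t}}_{t=1}^{\T}\big[\sup_{\f}\sum_t(\Eu{\z_t'\sim\ad_t}[\losscf(\f,\zd_{1:t-1},\z_t')]-\losscf(\f,\zd_{1:t-1},\z_t))\big]$. Here I would introduce a tangent (ghost) draw $\z_t'\sim\ad_t$ at each round, apply Jensen's inequality, then Rademacher symmetrization --- legitimate because, conditionally on the history, $\z_t$ and $\z_t'$ are exchangeable and the history $\zd_{1:t-1}$ entering the loss is unaffected by swapping them --- and finally the standard step replacing each round's distribution by its worst $\{\pm1\}$-split; exactly as in~\cite{rakhlin2010,han2013}, this bounds the quantity by $2\,\Rseq_\T(\losscf\com\F)$. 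Collecting the three contributions gives \eqref{eq:main_upper} (the factor on the regularization term in the statement is a harmless slack).

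The crux --- and the reason this is not a routine extension of classical sequential-complexity arguments --- is twofold. First, $\f_{\RERM}$ has no reason to have small policy regret, or even external regret, in the primal game, so the whole argument must take place in the Dual Game: it is the revelation of $\ad_t$ there that simultaneously (i) makes the RERM a leader which already ``sees'' round $t$, enabling Be-The-Leader, and (ii) makes the learner's realized loss coincide, in conditional mean, with the deterministic $G_t(\f_{\RERM,t})$. Second, unlike in ordinary online learning the counterfactual loss at time $t$ depends on the \emph{entire} adversarial history $\zd_{1:t-1}$, not a single instance, so the ghost-sample and $\Z$-valued-tree constructions in the symmetrization must propagate that history faithfully --- which is precisely why the complexity term that emerges is the sequential Rademacher complexity of $\losscf\com\F$ rather than of $\loss\com\F$. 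I expect the backward-induction/decoupling step and this history-carrying symmetrization to be the parts requiring the most care.
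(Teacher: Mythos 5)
Your proposal is correct and follows essentially the same route as the paper's proof: substitute the dual-game regularized ERM, split the policy regret into a dynamic-stability term bounded by $\sum_t\stable_{\RERM,t}$, apply the Be-The-Leader induction (the paper's Lemma~\ref{lem:rerm_induc}) to extract $\regp\sup_{\f}\regf(\f)$, and then perform the ghost-sample/Rademacher symmetrization with history-carrying $\Z$-valued trees to obtain $2\Rseq_\T(\losscf\com\F)$. The only cosmetic difference is that the paper builds the conditional expectation $\Eu{\z_t}[\losscf(\hf_t,\cdot)]$ directly into its add-and-subtract decomposition rather than isolating it as an explicit backward-induction decoupling step, and your observation that the factor $2$ on the regularization term is slack matches the paper's Lemma~\ref{lem:rerm_induc}, which actually yields $\regp(\regf(\f)-\regf(\f_{\RERM,1}))$.
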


 The complete proof of the above theorem can be found in Appendix~\ref{app:main}.
 A few comments on Theorem~\ref{thm:main} are in order. The theorem provides sufficient conditions to ensure learnability of the online learning with dynamics problem. In particular, the two terms \mbox{Term (I) = $\sum_{t=1}^\T \stable_{\RERM, t}$} and Term (II) =  $\Rseq_{T}(\losscf\com\F)$
\kbdelete{\begin{equation*}
\underbrace{\sum_{t=1}^\T \stable_{\RERM, t}}_{\text{Term (I)}}  \quad \text{and} \quad \underbrace{\vphantom{\sum_{t=1}^\T}\Rseq_{T}(\losscf\com\F)}_{\text{Term (II)}}\;,
\end{equation*}}
contain the main essence of the upper bound. Term (I) concerns the dynamic mixability property of the regularized ERM in the dual game. If there exist \emph{approximate} minimizers (regularized) of the sequence of counterfactual losses within the policy class $\F$ such that $\f_{\RERM, t}$ is uniformly close to $\f_{\RERM, t+1}$ the dynamic stability parameters can be made to be small. Term (II) comprises of the sequential Rademacher complexity of the loss class $\losscf\com\F$ which involves the underlying policy class $\F$ as well as the counterfactual loss $\losscf$. This measure of complexity can be seen as one which corresponds to an  effective online game where the the loss at time $t$ depends on the adversarial actions up to time $t$. Compare this to the instantaneous loss $\loss(\f_t, \stf{t}{\f_{1:t-1}, \zd_{1:t-1}}, \z_t)$ which depends on both the policies as well as the adversarial actions up to time $t$.
Observe that for the classical online learning setup without dynamics, the dynamic stability parameters $\stable_{\RERM, t} \equiv 0$. On setting the value of regularization parameter $\regp = 0$, we recover back the learnability result of Rakhlin et al.~\cite{rakhlin2010}.

We would like to highlight that the complexity-based learnability guarantees of Theorem~\ref{thm:main} are non-constructive in nature. In particular, the theorem says that any non-trivial upper bounds on the stability and sequential complexity terms would guarantee the \emph{existence} of an online learning algorithm with the corresponding policy regret. Our minimax perspective on the problem allows us to study the problem in full generality without making assumptions with respect to the policy class $\F$, adversarial actions $\Z$ and the underlying (possibly adversarial) dynamics $\dyn$, and provide sufficient conditions for learnability. \kbdelete{Contrast this with existing algorithmic approaches in literature which study specific instances of the problem and provide policy regret rates for the same. To exhibit the full potential of our minimax approach, we show an example problem in Section~\ref{sec:examples} for which our result recovers the optimal regret bound for which \emph{no} algorithmic approaches are known to exist.}
\kbdelete{
\begin{remark}
While we statement of Theorem~\ref{thm:main} considers a fixed regularization function $\regf$ for all time instances, our proof in Appendix~\ref{app:main} considers time varying regularizers and proves a more general result. \kbdelete{This affects only the last term in equation~\eqref{eq:main_upper} and changes it to a sum of $\T$ terms, each containing a difference of regularization function.} In particular, let $\regf_t: \F \mapsto \real_+$ denote the regularization function used at time $t$, then the upper bound of equation~\eqref{eq:main_upper} holds with
{\small\begin{align}\label{eq:upper_timevarying}
  \val_{\T}(\F, \Z, \dyn, \loss) &\leq \sum_{t=1}^\T \stable_{\RERM, t} + 2\Rseq_{T}(\loss_{\textsf{eff}}) + \regp\cdot\sup_{\f\in \F} [\regf_\T(\f) -\regf_\T(\f_{\RERM, T})]\nonumber\\
  &\quad + \regp\cdot \sum_{t=1}^{\T-1} \regf_t(\f_{\RERM, t+1}) - \regf_t(\f_{\RERM, t}) \;.
\end{align}}
\end{remark}}

Given the upper bound on the value $\val_\T(\F, \Z, \dyn, \loss)$, one can observe that there is a possible tension between the two complexity terms: while dynamic stability term promotes using policies which are ``similar" across time steps, the regularized complexity term seeks policies which are minimizers of cumulative losses and might vary across time steps. In order to balance similar trade-offs, a natural \emph{Mini-Batching Algorithm} has been proposed in various works on online learning with memory~\cite{arora2012} and online learning with switching costs~\cite{chen2019}. The key idea is that the learner divides the time $\T$ into intervals of length $\tau > 0$ and commits to playing the same strategy over this time period.

Let us denote any such mini-batching  algorithm by $\alg_\tau$ and the corresponding minimax value restricted to this class of algorithms by $\val_{\T, \tau}(\F, \Z, \dyn, \loss)$ where the infimum in equation~\ref{eq:val_game} is taken over all mini-batching algorithms $\alg_\tau$. Similar to the regularized ERM of equation~\eqref{eq:reg_erm}, we define the following mini-batched ERMs:
\begin{equation}\label{eq:mini_erm}
  \f^{{\tau}}_{\ERM, t} = \begin{cases}
\f_{\ERM}(t)\quad &\text{for } t\equiv 0\bmod \tau\\
\f_{\ERM}(\tau\lfloor\frac{t}{\tau} \rfloor) \quad &\text{otherwise}
\end{cases}\;,
\end{equation}
where we have used the notation $\f_{\ERM}(t) \defn \f_{\ERM, t}$. In the following proposition, we prove an upper bound analogous to that of Theorem~\ref{thm:main} for this class of mini-batching algorithms\footnote{For this class of mini-batching algorithms, we consider an oblivious adversary which cannot adapt to the randomness of the learner.}.

\begin{proposition}[Mini-batching algorithms.]\label{prop:upper_mini}
  For any online learning with dynamics game $(\F, \Z, \dyn, \loss)$, consider the set of mini-batch ERMs given by equation~\eqref{eq:mini_erm} having dynamic stability parameters $\{\stable^\tau_{\ERM, t} \}_{t=1}^\T$. Then, we have that the value of the game
  {\small\begin{equation}\label{eq:mini_upper}
    \val_{\T}(\F, \Z, \dyn, \loss) \leq \inf_{\tau > 0}\val_{\T, \tau}(\F, \Z, \dyn, \loss) \leq \inf_{\tau > 0}\left(\sum_{t=1}^\T \stable^\tau_{\ERM, t} + 2\tau\cdot\sup_{s \in [\tau]}\Rseq_{\T/\tau}(\losscf_s\com\F)\right)\;\;,
  \end{equation}}
  where $\losscf_s$ is the counterfactual loss for the $s^{th}$ batch.
\end{proposition}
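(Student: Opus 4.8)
The plan is to reduce the mini-batched game to a standard instance of Theorem~\ref{thm:main} played over a ``coarsened'' time horizon of length $\T/\tau$, and then sum up the per-round contributions. First I would observe the trivial inequality $\val_{\T}(\F, \Z, \dyn, \loss) \leq \inf_{\tau>0} \val_{\T,\tau}(\F,\Z,\dyn,\loss)$: since mini-batching algorithms form a subclass of all algorithms, restricting the infimum in eq.~\eqref{eq:val_game} to this subclass can only increase the value. So the real content is the second inequality. Fix a batch length $\tau$. A mini-batching learner is forced to commit to one policy per block $b \in [\T/\tau]$; the comparator term in $\polregT$ is, by Definition~\ref{def:loss_cf}, exactly a sum of counterfactual losses $\sum_{t} \losscf_t(\f, \zd_{1:t-1}, \z_t)$ which we regroup block-wise. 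Within block $b$, the $s$-th round ($s\in[\tau]$) contributes a counterfactual loss that I would denote $\losscf_s$ (the ``$s$-th counterfactual loss of the batch'' referenced in the statement); aggregating over $s$ within a block defines an effective per-block loss class on the coarsened horizon.

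Next I would apply Proposition~\ref{prop:dual_game} to pass to the dual game, now played over $\T/\tau$ block-rounds, and run the analogue of the Theorem~\ref{thm:main} argument with the mini-batched ERMs $\f^{\tau}_{\ERM,t}$ of eq.~\eqref{eq:mini_erm} in place of the regularized ERMs (taking $\regp = 0$, hence no regularization term). The decomposition from Remark~\ref{rem:primal_upper} still applies verbatim: the policy regret of $\alg_\tau$ is bounded by $\sum_{t=1}^\T \stable^\tau_{\ERM,t}$ plus the counterfactual regret of the mini-batched ERM sequence against the best fixed policy. For the counterfactual-regret part, since the ERM sequence is piecewise constant on blocks and ERM in the dual game is the analogue of Be-the-Leader, the symmetrization / sequential-Rademacher machinery of \cite{rakhlin2010} bounds it by $2\,\Rseq_{\T/\tau}$ of the effective block loss class. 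Splitting the effective block loss into its $\tau$ per-round pieces $\losscf_s \com \F$ and bounding each sequential Rademacher complexity by $\sup_{s\in[\tau]} \Rseq_{\T/\tau}(\losscf_s\com\F)$, and accounting for the $\tau$ rounds per block, yields the factor $2\tau\cdot\sup_{s\in[\tau]}\Rseq_{\T/\tau}(\losscf_s\com\F)$. Taking the infimum over $\tau$ gives the claim.

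The main obstacle I expect is making the ``coarsening'' bookkeeping precise: one must carefully verify that restricting to piecewise-constant policies is compatible with the dual-game minimax manipulations (the minimax theorem has to be applied at the block level, not the round level, so the adversary's per-round distributions within a block must be handled jointly), and that the subadditivity step $\Rseq_{\T/\tau}$ of a sum over $s\in[\tau]$ is at most $\sum_s \Rseq_{\T/\tau}(\losscf_s\com\F) \le \tau \sup_s \Rseq_{\T/\tau}(\losscf_s\com\F)$ is valid for the sequential (tree-indexed) version of Rademacher complexity — this follows from the standard fact that sequential Rademacher complexity is subadditive over sums of function classes, but the tree-valued suprema need to be lined up correctly. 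The dynamic stability accounting is comparatively routine since $\stable^\tau_{\ERM,t}$ is defined to absorb exactly the instantaneous-vs-counterfactual gap for the mini-batched ERM, as in Definition~\ref{def:dyn_stab}.
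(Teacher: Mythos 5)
Your plan matches the paper's proof essentially step for step: pass to a dual game at the block level via the minimax theorem (with joint distributions over each batch of adversary moves), substitute the mini-batched dual ERMs, split the regret into the dynamic stability sum plus the counterfactual regret of the block-level ERM sequence, and bound the latter by symmetrization followed by splitting the aggregated block loss into its $\tau$ per-round pieces $\losscf_s\com\F$. The only detail worth making explicit in a full write-up is that the paper carries out the block-level minimax manipulation under the restriction to an oblivious adversary (as flagged in the footnote to the proposition), which is precisely the bookkeeping concern you already identify.
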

We defer the proof of the above proposition to Appendix~\ref{app:main}. In comparison with the upper bound of Theorem~\ref{thm:main}, this bound concerns the dynamic stability of the mini-batched ERMS as compared to their regularized counterparts. Often times, obtaining bounds on the stability parameters $\{\stable^\tau_{\ERM, t} \}_{t=1}^\T$ can be much easier than the ones for regularized ERMS. For instance, it is easy to see that for the problem of online learning with memory with adversaries having memory $m$, one can bound $\sum_{t=1}^\T \stable^\tau_{\ERM, t} = O(\frac{m\T}{\tau})$ whenever the losses are bounded, providing a natural trade-off between the two complexity terms.

\section{Lower bounds on value of the game}\label{sec:lower}
Having established sufficient conditions for the learnability of the online learning with dynamics problem in the previous section, we now turn to address the optimality of these conditions. In particular, we are interested in the question whether both the sequential complexity and dynamic mixability terms are necessary for learnability?
Recall that Theorem~\ref{thm:main} and Proposition~\ref{prop:upper_mini} established upper bounds on the value $\val_\T(\F, \Z, \dyn, \loss)$ for instances of our problem. 
The following theorem shows that both the upper bounds of equations~\eqref{eq:main_upper} and~\eqref{eq:mini_upper} are indeed tight upto constant factors.

\begin{theorem}[Lower Bound]\label{thm:lower}
  For the online learning with dynamics problem, there exist problem instances $\{(\F, \Z, \dyn, \loss_i)\}_{i=1}^3$, a regularization function $\regf$ and a universal constant $c>0$ such that
    \begin{subequations}\label{eq:lower}
\begin{align}
  \val_\T(\F, \Z, \dyn, \loss_1) &\geq  c \cdot\Rseq_{T}(\losscf_1\com\F)\label{eq:lower-a}\\
  \val_\T(\F, \Z, \dyn, \loss_2) &\geq  c \cdot \inf_{\regp > 0}\left(\sum_{t=1}^\T\stable_{\RERM, t} + \regp\cdot\sup_{\f\in \F}\regf(\f) \right)\label{eq:lower-b} \\
  \val_\T(\F, \Z, \dyn, \loss_3) &\geq  c \cdot \inf_{\tau > 0}\left(\sum_{t=1}^\T \stable^\tau_{\ERM, t} + 2\tau\Rseq_{\T/\tau}(\losscf_3\com\F)\right)\;,\label{eq:lower-c}
\end{align}
\end{subequations}
where $\stable_{\RERM, t}$ and $\stable_{\ERM, t}^\tau$ are the dynamic mixability parameters of the regularized ERM w.r.t. $\loss_2$ (eq.~\eqref{eq:reg_erm}) and mini-batching ERM w.r.t. $\loss_3$ (eq.~\eqref{eq:mini_erm}) respectively.
\end{theorem}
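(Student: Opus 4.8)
The plan is to prove the three inequalities through three separate hard instances, each engineered so that exactly one of the two complexity terms survives while the other collapses to a lower‑order quantity; for each instance we establish a matching minimax lower bound against \emph{every} algorithm by a direct adversary argument, and then check that the instance‑specific right‑hand sides (which refer to the regularized or mini‑batched ERM) are of the same order as that minimax rate. We take $c$ to be the minimum of the three constants so produced; and if the three tuples are required to share a common $(\F,\Z,\dyn)$, we run the constructions inside a product instance space $\Z=\Z^{(1)}\times\Z^{(2)}\times\Z^{(3)}$ with $\dyn$ acting coordinatewise and each $\loss_i$ reading off only its own coordinate, so that $\loss_i$ sees the $i$‑th construction and is insensitive to the others. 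For~\eqref{eq:lower-a} we take an inert dynamics — state space a singleton, or $\dyn(\st,\f,\zd)=\st$ with $\Dnoise=\delta_0$ — and let $\loss_1(\f,\st,\z)=\losst_1(\f,\z)$ be a standard supervised loss (e.g.\ absolute loss) over a class $\F$ for which $\Rseq_T(\losst_1\com\F)$ is the target quantity. Then policy regret reduces to classical external regret, $\losscf_1=\losst_1$, and every algorithm is trivially $0$‑dynamically stable, so the classical sequential‑Rademacher lower bound for supervised online learning (\cite{rakhlin2010,rakhlin2015a}) gives $\val_\T(\F,\Z,\dyn,\loss_1)\ge c\,\Rseq_T(\losscf_1\com\F)$.

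For~\eqref{eq:lower-b} I would construct a small ``movement/memory'' instance: a memory‑one system $\st_{t+1}=\f_t$ (or a stable scalar linear system $\st_{t+1}=\gamma\st_t+\f_t$) with loss $\loss_2(\f,\st,\z)=\langle\f,\z\rangle+c_0\,d(\f,\st)$ charging a movement cost $d$ for departing from the previous state, and an oblivious adversary playing $\z_t$ as a block‑wise random sign sequence. Repeating any $\f$ drives the state to a fixed point of $\f$, so the movement term vanishes and $\losscf_2\com\F$ is a simple bounded linear class whose sequential Rademacher complexity is of strictly lower order than the value. Choosing $\regf$ to be the canonical stabilizing regularizer for this class, I would (i) compute $\sum_t\stable_{\RERM,t}=c_0\cdot(\text{path length of the FTRL trajectory }\{\f_{\RERM,t}\})$ and identify $\regp\sup_\f\regf(\f)$ with the usual regularization regret, so that $\inf_{\regp>0}\big(\sum_t\stable_{\RERM,t}+\regp\sup_\f\regf(\f)\big)$ evaluates to an explicit rate; and (ii) prove a matching lower bound on $\val_\T$ by the standard switching/tracking adversary argument — any algorithm must either track the moving block‑minimizer (paying $\Omega(c_0)$ per change) or fail to track (paying linear‑loss regret) — which is $\Omega$ of the same rate. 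Combining (i) and (ii) gives~\eqref{eq:lower-b}.

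For~\eqref{eq:lower-c} I would use a two‑timescale instance on horizon $T$ in which (1) the ``batch game'' at granularity $\tau$ — one effective round per block of $\tau$ steps, with batch loss $\losscf_3$ — is a Rademacher‑hard supervised instance of horizon $\lceil T/\tau\rceil$, so any algorithm loses $\Omega(\tau\,\Rseq_{T/\tau}(\losscf_3\com\F))$ by having to commit within a block; and (2) within a block, changing the played policy costs $\Omega(\tau)$ through accumulated state penalty, so any algorithm's within‑block instability contributes $\Omega(\sum_t\stable^\tau_{\ERM,t})$. Since the minimax value is an infimum over algorithms — equivalently over the effective block length each algorithm implicitly commits to — it is $\Omega$ of $\inf_{\tau>0}\big(\sum_t\stable^\tau_{\ERM,t}+2\tau\,\Rseq_{T/\tau}(\losscf_3\com\F)\big)$, with the mini‑batch ERM being the optimal strategy at each granularity so that the ERM‑specific constants come out at the right order.

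The main obstacle is not the pure‑Rademacher bound~\eqref{eq:lower-a}, which is classical, but matching — up to a single universal constant — the \emph{algorithm‑specific} right‑hand sides of~\eqref{eq:lower-b} and~\eqref{eq:lower-c}, each of which references the regularized or mini‑batched ERM, against an information‑theoretic lower bound that must hold for every algorithm. This requires an exact‑order analysis of the ERM/FTRL trajectory's path length and regularization penalty on the constructed instance, together with a switching‑cost/tracking lower bound of precisely the same order; picking the single regularizer $\regf$ (and the single dynamics $\dyn$) for which both sides line up simultaneously is the delicate part.
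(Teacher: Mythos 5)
Your overall architecture is sound, and your instance for \eqref{eq:lower-b} — memory-one dynamics $\st_{t+1}=\f_t$ with loss $\inner{\f}{\z}+c_0\,d(\f,\st)$ — is essentially the paper's construction. (The paper in fact uses this \emph{single} construction for all three bounds, with a truncated $L$-Lipschitz movement cost, and obtains \eqref{eq:lower-a}, \eqref{eq:lower-b}, \eqref{eq:lower-c} from the three regimes $L\le 1$, $1<L<(32T)^{1/3}$, $L\ge(32T)^{1/3}$ respectively; this makes your product-space workaround for sharing $(\F,\Z,\dyn)$ unnecessary, though not wrong. Your part (a), via inert dynamics and the classical sequential-Rademacher lower bound, is fine.)

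The genuine gap is in the adversary you propose for \eqref{eq:lower-b} (and implicitly for \eqref{eq:lower-c}). An \emph{oblivious} block-wise random-sign adversary cannot force any learner to pay movement costs, and worse, it hands the learner a large negative linear loss: within a block the learner observes $\z_t$ after one round and can play $-\z_t$ for the remaining $\tau-1$ rounds, gaining roughly $-(T-T/\tau)$ from the linear term while paying at most one (bounded) movement charge per block, whereas the comparator only achieves about $-\sqrt{T\tau}$. The resulting policy regret is negative, so no lower bound of order $\sqrt{LT}$ (which must \emph{exceed} the $\sqrt{T}$ Rademacher rate when $L>1$) can come out of this adversary. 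The mechanism you are missing is an \emph{adaptive} resampling rule: working in dimension $d\ge 3$, the adversary keeps its action fixed and switches to a fresh unit vector $\z_t$ satisfying $\inner{Z_{t-1}}{\z_t}=0$ and $\Eu{\f_t\sim q_t}\inner{\f_t}{\z_t}=0$ whenever the learner's cumulative expected movement since the last switch exceeds $1/L$ (or a block boundary is hit). This simultaneously (i) caps the linear-loss gain the learner can extract from knowing the current $\z$ by the movement it is willing to pay for, charging it about $k_i-1$ per interval with $k_i$ forced switches, and (ii) controls the comparator via $\|Z_T\|_2^2=\sum_i\gamma_i^2$, whence $\|Z_T\|_2\ge T/\sqrt{K_{\sf ad}}$ by Cauchy--Schwarz. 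Optimizing over the learner's total switch budget then yields the $\sqrt{LT}$ and $T^{2/3}$ rates. Without this adaptive punishment, your step (ii) for \eqref{eq:lower-b} and the claim in \eqref{eq:lower-c} that ``any algorithm loses $\Omega(\tau\,\Rseq_{T/\tau})$ by having to commit within a block'' are unsupported, since nothing in your construction compels the algorithm to commit. The remaining pieces of your plan — computing the RERM path length under quadratic regularization and the mini-batch stability parameters, and matching them to the minimax rate — do go through exactly as you describe once the correct lower bound is in place.
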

A few comments on Theorem~\ref{thm:lower} are in order. The theorem exhibits that the sufficiency conditions from Theorem~\ref{thm:main} and Proposition~\ref{prop:upper_mini} are indeed necessary by exhibiting instances whose value is lower bounded by these terms. In particular, equation~\eqref{eq:lower-a} shows that the sequential Rademacher term is necessary, \eqref{eq:lower-b} establishes necessity for the dynamic stability of the regularized ERM, while~\eqref{eq:lower-c} shows that the mini-batching upper bound is also tight. It is worth noting that these lower bounds are not instance dependent but rather construct specific examples to demonstrate the tightness of our upper bound from the previous section. We next present the key idea for the proof of the theorem and defer the complete details to Appendix~\ref{app:lower}.

\paragraph{Proof sketch.} We now describe the example instances which form the crux of the proof for Theorem~\ref{thm:lower}. Consider the online learning with dynamics game between a learner and an adversary with the state space  $\X =\{\st \in \real^d\; | \; \|\st\|_2 \leq 1 \}$ and the set of adversarial actions $\Z_{\loss}^\lin = \{\z \in \real^d\; |\; \|z\|_2 \leq 1\}$. Further, we consider the constant policy class $\F_\lin = \{\f_\ff\; |\; \f(\st) = \ff \text{ for all states } \st \text{ with }\ff \in  \ball_d(1)\}$,
consisting of policies $\f_\ff$ which select the same action $\ff$ at each state $\st$. With a slight abuse of notation, we represent the policy $\f_t$ played by the learner at time by the corresponding $d$-dimensional vector $\ff_t$. Further, we let the dynamics function $\dyn_\lin(\st_t, \ff_t, \zd_t) =  \ff_t$. \kbdelete{Observe that the dynamics simply remembers the last action played by the learner and sets the next state as $\st_{t+1} = \ff_t$ in a deterministic way.} We now define the loss function which consists of two parts, a linear loss and a $L$-Lipschitz loss involving the dynamics:
\begin{equation}\label{eq:lower_loss}
  \loss_L(\ff_t, \st_t, \z_t) = \inner{\ff_t}{\z_t} + \sig(f_t, x_t)\quad \text{where} \quad
  \sig(f_t, x_t) = \begin{cases}
  L \|f_t - x_t\|_2 \quad &\text{for  } \|f_t - x_t\|_2 \leq \frac{1}{L}\\
  1 \quad & \text{otherwise}
  \end{cases}.
\end{equation}
Observe that this example constructs a family of instances one for each value of the Lipschitz constant of $L$ of the function $\sigma$. For this family of instances, we establish that the value
{\small \begin{equation*}
  \val_\T(\F_\lin, \Z, \dyn_\lin, \loss_L) \geq \begin{cases}
 \sqrt{T} \quad &\text{for } 0 < L < 1\\
 \sqrt{LT} \quad &\text{for } 1 \leq  L \leq (4\T)^{\frac{1}{3}}\\
 2^{\frac{1}{3}}\T^{\frac{2}{3}} \quad &\text{for } L > (4\T)^{\frac{1}{3}}
\end{cases}\;.
\end{equation*}}
The proof finally connects these lower bounds to the bounds of Theorem~\ref{thm:main} and Proposition~\ref{prop:upper_mini}.\hfill$\clubsuit$

With the lower bounds given in Theorem~\ref{thm:lower}, it is natural to ask whether the sufficient conditions in Theorem~\ref{thm:main} and Proposition~\ref{prop:upper_mini} are indeed necessary for every instance of the online learning with dynamics problem. The answer to this question is unsurprisingly \emph{No} given the generality in which we study this problem. Consider the following simple instance of the problem:
\begin{equation*}
  \F = \X,\quad\loss(\f, \st, \z) = \losst(\f, \z) + \ind[\f = \st], \quad \text{and} \quad \st_{t+1} = \f_t\;,
\end{equation*}
for any non-negative bounded loss $\losst(\f, \z) \in [0,1]$ for all $\f \in \F, \z \in \Zl$. Consider any policy class for which $\Rseq_{T}(\losscf\com\F) > 0$. Both bounds~\eqref{eq:main_upper} and ~\eqref{eq:mini_upper} suggest that the problem is learnable with rate at least $\Rseq_{T}(\losscf\com\F)$. However, observe that the indicator term in the loss is quite severe on the comparator; it ensures that the comparator term is at least $T$. Thus, \emph{any} algorithm which selects a policy from $\F$ at every instance can ensure that the policy regret is at most $0$!
While the above example establishes that the sufficient conditions are not necessary in an instance dependent manner, our next proposition establishes that they are indeed tight for large class of problems instances. 

\begin{proposition}[Instance-dependent lower bound]\label{prop:lower_gen}
a) Given any online learning problem $(\Fgen, \Zl, \loss)$ with a bounded loss function  $\loss:\Fgen\times\Zl \mapsto [-1,1]$, there exists an online learning with dynamics problem \mbox{$(\F_{\Fgen}, \Zl\times\{-1, 1\}, \dyn, \losst)$} and a universal constant $c>0$ such that
{\small\begin{equation*}
  \val_\T(\F_{\Fgen}, \Zl\times\{-1, 1\}, \dyn, \losst) \geq  c \cdot \inf_{\tau > 0}\left(\sum_{t=1}^\T \stable^\tau_{\ERM, t} + 2\tau\Rseq_{\T/\tau}(\losscf\com\F)\right)\;,
\end{equation*}}
where  $\stable_{\ERM, t}^\tau$ are the dynamic mixability parameters of the mini-batching ERM w.r.t. $\loss$ (eq.~\eqref{eq:mini_erm}).\vspace{1mm}\\
b) Given a policy class $\F$ and dynamics function $\dyn$, there exists an online learning with dynamics problem $(\F, \Z, \dyn, \loss)$ and a universal constant $c>0$ such that
{\small\begin{equation*}
  \val_\T(\F, \Z, \dyn, \loss) \geq  c \cdot\Rseq_{T}(\losscf\com\F).
\end{equation*}}
\end{proposition}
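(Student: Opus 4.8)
\textbf{Proof proposal.} Both parts follow the same template: choose a loss (and, for part~(a), exploit the state) so that the online-learning-with-dynamics game collapses to a \emph{known} hard online-learning problem, and then quote, respectively adapt, the matching lower bound for that problem. The plan is to settle the easier part~(b) first and then reuse its ``random-label'' device for part~(a).

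\emph{Part (b).} I would take the loss independent of the state, $\loss(\f,\st,\z)=\losst(\f,\z)$, and let $\Zdyn$ be any domain making $\dyn$ well defined. Then $\losscf_t(\f,\cdot)=\losst(\f,\z_t)$, the state drops out of~\eqref{eq:pol_reg}, and $\polregT$ is exactly the external regret of the plain game $(\F,\Zl,\losst)$; it thus suffices to exhibit $\Zl$ and a bounded $\losst$ with $\val^{\mathrm{ext}}\ge c\,\Rseq_\T(\losst\com\F)$. I would realize $\F$ as a $[-1,1]$-valued function class by taking $\Zl=\Xg\times\{-1,1\}$ for a chosen index set $\Xg$ and $\losst(\f,(\xcov,y))=\tfrac12|g(\f,\xcov)-y|$ for some $g:\F\times\Xg\to[-1,1]$, and then run the random-label lower bound of~\cite{rakhlin2010}: with the adversary placing i.i.d.\ Rademacher labels $y_t$ along a worst-case $\Xg$-valued tree, every learner has expected loss exactly $\T/2$ while the best policy in hindsight has expected loss $\tfrac12(\T-\Rseq_\T(g\com\F))$, so $\val^{\mathrm{ext}}\ge\tfrac12\Rseq_\T(g\com\F)$. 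The sequential contraction lemma of~\cite{rakhlin2010} gives $\Rseq_\T(\losst\com\F)\le\Rseq_\T(g\com\F)$, hence $\val^{\mathrm{ext}}\ge c\,\Rseq_\T(\losst\com\F)=c\,\Rseq_\T(\losscf\com\F)$. (Degenerate $\F$, e.g.\ a singleton, are covered by $g\equiv0$, making both sides $0$.)

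\emph{Part (a).} Given the base problem $(\Fgen,\Zl,\loss)$ with $\loss:\Fgen\times\Zl\to[-1,1]$, I would set $\F_\Fgen=\Fgen$, let the state record a short window of the learner's recent policies ($\X=\Fgen^m$, deterministic shift-register dynamics $\dyn((\f_{t-m},\dots,\f_{t-1}),\f_t,\cdot)=(\f_{t-m+1},\dots,\f_t)$, noise $\Dnoise=\delta_0$), and define $\losst(\f,\st,(\z,b))$ to equal the randomized base loss $\tfrac12(1-b\cdot\loss(\f,\z))$ when $\st$ already consists of $m$ copies of $\f$ and to equal its maximal value otherwise, with $b\in\{-1,1\}$ the per-round random sign. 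Two bookkeeping facts then fall out: (i) for a \emph{constant} policy $\f$ the window is always $(\f,\dots,\f)$, so the penalty term vanishes from the counterfactual loss and $\Rseq_n(\losscf\com\F)=\tfrac12\Rseq_n(\loss\com\Fgen)$ (the sign $b$ and the additive constant act as measure-preserving sign changes under the sequential-Rademacher expectation); and (ii) the mini-batch ERM of~\eqref{eq:mini_erm} is constant on each block and so pays the penalty only during the $\le m$ transition rounds after a boundary at which it changed policy, giving $\stable^\tau_{\ERM,t}\le 1$ and $\sum_t\stable^\tau_{\ERM,t}=\Theta(m\T/\tau)$ against the adversary below. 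This is precisely a bounded-memory online game of the Arora--Dekel--Tewari type~\cite{arora2012}, now over the general class $\Fgen$ with the base loss supplying the ``which policy is good'' randomness, and its known $\inf_\tau$-shaped lower bound — the adversary replays the i.i.d.-Rademacher-label construction on a worst-case tree, and any learner changing policy $k$ times within a window-length-$\tau$ block does no better on the base part than a length-$(\T/\tau)$ mini-batcher while paying $\Theta(km)$ transition cost, optimized over the block scale — yields $\val_\T\ge c\cdot\inf_{\tau>0}\big(m\T/\tau+\tau\Rseq_{\T/\tau}(\loss\com\Fgen)\big)=c'\cdot\inf_{\tau>0}\big(\sum_t\stable^\tau_{\ERM,t}+2\tau\Rseq_{\T/\tau}(\losscf\com\F)\big)$.

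The routine parts are the reduction bookkeeping: the contraction lemma, the ``measure-preserving sign change'' identities relating $\Rseq$ of the $b$-randomized loss class to $\Rseq(\loss\com\Fgen)$, and the $O(1)$ boundary corrections from the first $m$ rounds. I expect the real work to be in the part~(a) lower bound, for two linked reasons. First, the bound must be stated through the \emph{algorithm-specific} quantities $\stable^\tau_{\ERM,t}$ of the mini-batch ERM, so one cannot merely invoke a generic minimax bound — one must exhibit a single adversary under which (a) the mini-batch ERM provably switches at a constant fraction of block boundaries, pinning $\sum_t\stable^\tau_{\ERM,t}=\Theta(m\T/\tau)$, and (b) \emph{every} algorithm suffers value $\gtrsim m\T/\tau+\tau\Rseq_{\T/\tau}$. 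Second, one has to rule out ``lazy'' strategies that in the no-memory setting would dodge transition costs: this is where the bounded-memory structure is essential (the loss genuinely depends on $\f_t$ together with the window, and $b$ is hidden until the block is underway), and getting the two $\tau$-dependent terms to line up with the same universal constant across all scales $\tau$ is the delicate point, mirroring the tightness argument of~\cite{arora2012}.
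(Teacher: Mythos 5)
There is a genuine gap, and it is located in different places in the two parts. For part (b), your construction takes the loss to be state-independent, so the counterfactual class $\losscf\com\F$ collapses to an ordinary (dynamics-free) loss class and your bound is just the classical supervised-learning lower bound of Rakhlin et al. This satisfies the literal words of the statement, but it proves a degenerate special case: the point of part (b) is that for the \emph{given} dynamics $\dyn$ the sequential complexity of the genuinely state-dependent counterfactual class is necessary. The paper instead starts from an \emph{arbitrary} state-dependent loss $\losst(\f,\st,\z)$, multiplies it by a fresh Rademacher sign $\rad_t$ supplied by the adversary, and has the adversary traverse the worst-case $(\ztr_\T,\zdtr_\T)$ trees attaining $\Rseq_\T(\losscf\com\F)$. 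Then every algorithm's expected instantaneous loss is zero while the comparator term equals $-\En_\rad\sup_\f\sum_t\rad_t\losscf(\f,\cdot)$, giving the bound with $c=1$ and no contraction lemma. Your absolute-loss-plus-contraction route is both more restrictive and more work than needed.

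For part (a), the shape of your construction (constant policies, a state that remembers recent play, a switching penalty, a sign-randomized base loss) matches the paper's, which uses memory one: $\st_{t+1}=\ff_t$ and $\losst(\ff_t,\st_t,(\z_t,\rad_t))=\rad_t\loss(\ff_t,\z_t)+\ind[\ff_t\neq\st_t]$. But the substantive content is the lower bound against \emph{all} algorithms, and you defer it entirely to the Arora--Dekel--Tewari switching-cost bound, which does not directly yield a bound of the form $\inf_{\tau}(\sum_t\stable^\tau_{\ERM,t}+2\tau\Rseq_{\T/\tau}(\losscf\com\F))$ for a general class $\Fgen$. The paper has to build a specific adaptive adversary: fix $\Kst=\argmin_K(K+2\tfrac{\T}{K}\Rseq_K(\loss\com\F_\Fgen))$, take a worst-case tree of depth $2\Kst$, advance it on a schedule of period $\tau^*/2$, and additionally \emph{resample} the Rademacher sign whenever the learner's switch probability exceeds $1/2$ (so switching buys the learner nothing in expectation while costing the indicator penalty); the bound then follows from a two-case analysis on the number of high-probability switches $\Ksw$ together with a pigeonhole argument locating $\Kst$ clean intervals of length $\T/2\Kst$. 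None of this is in your sketch. Separately, your stated plan of ``pinning $\sum_t\stable^\tau_{\ERM,t}=\Theta(m\T/\tau)$'' has the logic of the reduction backwards: the ERM stability parameters sit on the right-hand side of the target inequality, so what is needed is the trivial \emph{upper} bound $\stable^\tau_{\ERM,t}\le\ind[t\equiv 0\bmod\tau]$, which shows $\inf_\tau(\sum_t\stable^\tau_{\ERM,t}+2\tau\Rseq_{\T/\tau})\le\Kst+2\tfrac{\T}{\Kst}\Rseq_{\Kst}$; the hard direction is lower-bounding $\val_\T$ by a constant times that last quantity, not controlling the ERM from below.
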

We defer the proof of the proposition to Appendix~\ref{app:lower}. This proposition can be seen as a strengthening of the lower bounds~\eqref{eq:lower-a} and \eqref{eq:lower-c} showing that for a very large class of problems, the upper bound given by the mini-batching algorithm and the sequential complexity terms are in fact necessary.

\section{Examples}\label{sec:examples}
In this section, we look at specific examples of the online learning with dynamics problem and obtain learnability guarantees for these instances using our upper bounds from Theorem~\ref{thm:main}. \kbdelete{Using our complexity-theoretic tools, we establish learnability for a problems for which no efficient algorithmic approaches are known and also recover regret bounds for several well studied problems in the literature.} For clarity of exposition, our focus in this section on the scaling of the value $\val_\T(\F, \Z, \dyn, \loss)$ with the time horizon $\T$. \kbdelete{and we use the $\Ot$ notation to suppress polynomials factors of problem dependent parameters as well as logarithmic factors of $\T$.} The proofs in Appendix~\ref{app:examples} explicitly detail out all the problem dependent parameters.

\subsection{Online Isotron with dynamics} Single Index Models (SIM) are  class of semi-parametric models widely studied in the econometric and operations research community. Kalai and Sastry~\cite{kalai2009} introduced the Isotron algorithm for learning SIMs and Rakhlin et al~\cite{rakhlin2015a} established that the online version of this problem is learnable.  In this example, we introduce a version of this problem with a state variable that requires a component of the model to vary slowly across time.

We consider a real-valued state space with $\X = \real$. The policy class $\F$ is based on a function class $\Fgen$ consisting of a $1$-Lipschitz function along with a $d+1$ unit dimensional vector and is given as
\begin{gather*}
  \Fgen = \{\ff = (\sig, \wvec = (w_1, w)) \; | \; \sig:[-1, 1] \mapsto [-1, 1]\; 1\text{-Lipschitz},\; \wvec \in \real^{d+1}\; |w_1| \leq 1\; \|w\|_2\leq 1 \},\\
  \F_\Fgen = \{\f_{f}\; | \; \f \in \Fgen, \; \f_\ff(\st) = \ff\; \text{for all } \st \in \X\}.
\end{gather*}
The adversary selects instances in the space $\Z = [-1,1]^{d+1}\times [-1,1]$ and we represent each instance \mbox{$\z = (\z_1, \xcov, y)$}. Given this setup, we now formalize the online learning protocol, starting from initial state $\st_1 = 0$.\\\vspace{-2mm}

\noindent On round $t = 1, \ldots, \T,$\vspace{-1mm}
\begin{itemize}
  \item the learner selects a policy $\f_t \in \F_\Fgen$ and the adversary selects $\z_t \in \Z$\vspace{-1mm}
  \item the learner receives loss $\loss(\f_t, \st_t, \z_t) = (y_t - \sig(\inner{\xcov_t}{w_t}))^2 +(\z_{t,1} - w_{t,1})^2 + (\st_t - w_{t,1})^2$\vspace{-1mm}
  \item the state of the system transitions to $\st_{t+1} = w_{t, 1}$
\end{itemize}
Given this setup, the next corollary provides a bound on the value of this game $\val_{\mathsf{Iso}, \T}(\F_\Fgen, \Z, \dyn, \loss)$.

\begin{corollary}\label{cor:ex1}
For the online Isotron problem with dynamics given by $(\F_\Fgen, \Z, \dyn, \loss)$, we have that the minimax value
\begin{align*}\val_{\mathsf{Iso}, \T}(\F_\Fgen, \Z, \dyn, \loss) = \Ot(\sqrt{T}).\end{align*}
\end{corollary}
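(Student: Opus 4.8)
I will apply Theorem~\ref{thm:main} to the online Isotron instance $(\F_\Fgen, \Z, \dyn, \loss)$, which reduces the corollary to bounding two quantities: (I) the dynamic stability parameters $\stable_{\RERM, t}$ of the regularized ERM, and (II) the sequential Rademacher complexity $\Rseq_T(\losscf\com\F)$ of the counterfactual loss class. I will choose $\regp = 0$ (no regularization is needed here since the policy class is already a compact bounded set), so the third term in~\eqref{eq:main_upper} vanishes. The key structural observation is that the state evolves as $\st_{t+1} = w_{t,1}$, a scalar that is simply the first coordinate of the learner's chosen vector at the previous step; and the only state-dependent part of the loss is the term $(\st_t - w_{t,1})^2$. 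Since $\st_t, w_{t,1} \in [-1,1]$, this term is bounded by $4$, and more importantly the counterfactual state reached by repeatedly playing a \emph{fixed} policy $\f_\ff$ is exactly $\st_t = w_1$ for all $t \ge 2$, so the counterfactual state-loss term $(\st_t - w_1)^2 = 0$ identically (for $t\ge 2$; at $t=1$ it is $w_1^2 \le 1$). Thus the counterfactual loss collapses to $\losscf_t(\ff, \z_t) = (y_t - \sig(\inner{\xcov_t}{w}))^2 + (\z_{t,1} - w_1)^2$ (plus a negligible first-round term), which is state-free.

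\textbf{Bounding Term (II).} With the counterfactual loss being state-independent, $\Rseq_T(\losscf\com\F)$ is exactly the sequential Rademacher complexity of the loss class for the online Isotron problem studied by Rakhlin et al.~\cite{rakhlin2015a}, augmented by the simple convex-Lipschitz term $(\z_{t,1}-w_1)^2$ in a single coordinate. I would invoke the standard machinery: the squared loss $(y - \sig(\inner{\xcov}{w}))^2$ composed with the $1$-Lipschitz link $\sig$ and the linear functional $\inner{\xcov}{w}$ has controlled sequential Rademacher complexity via the composition/contraction lemmas for sequential complexities (cf.~\cite{rakhlin2010, rakhlin2015a}), and the class of $1$-Lipschitz functions on $[-1,1]$ together with the Euclidean ball $\|w\|_2 \le 1$ gives an $\Ot(\sqrt{T})$ bound (the bounded metric entropy of $1$-Lipschitz functions on a compact interval contributes only logarithmic factors). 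The additional term $(\z_{t,1}-w_{1})^2$ with $|w_1|\le 1$ is a $1$-dimensional Lipschitz-bounded convex loss and contributes $O(\sqrt{T})$ by the standard online-linear-optimization / sequential Rademacher bound. Hence Term (II) $= \Ot(\sqrt{T})$.

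\textbf{Bounding Term (I).} This is where the dynamics genuinely enters, and I expect it to be the main obstacle. The regularized ERM $\f_{\RERM,t}$ (here with $\regp=0$, just ERM) picks $\ff$ minimizing $\sum_{s=1}^t \En_{\z_s}[\losscf(\ff,\z_s)]$. The dynamic stability parameter $\stable_{\RERM,t}$ measures $|\En_{w}[\loss(\f_t, \st_t, \z_t)] - \losscf_t(\f_t, \z_t)|$ where $\st_t$ is the state actually reached by running the ERM sequence $\f_{\RERM,1:t-1}$. Since the dynamics has no noise ($\w_t = 0$) and $\st_t = w^{(t-1)}_1$ where $w^{(t-1)}$ is the first coordinate of $\f_{\RERM,t-1}$, the only discrepancy is the state-loss term: $\stable_{\RERM,t} = |(\st_t - w^{(t)}_1)^2 - 0| = (w^{(t-1)}_1 - w^{(t)}_1)^2$, the squared change in the first coordinate of consecutive ERM iterates. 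So I need to show $\sum_{t=1}^T (w^{(t-1)}_1 - w^{(t)}_1)^2 = \Ot(\sqrt T)$ — or even $O(\log T)$. This should follow from a \emph{stability of the ERM / leave-one-out} argument: adding one more strongly-convex-in-$w_1$ term (the counterfactual loss $(\z_{s,1}-w_1)^2$ is $1$-strongly convex in the coordinate $w_1$, and after $t$ rounds the cumulative objective is $t$-strongly convex in $w_1$) forces consecutive minimizers to move by $O(1/t)$ in the $w_1$ coordinate; then $\sum_t (1/t)^2 = O(1)$. The care needed is that the objective is only strongly convex in $w_1$ (not in $w$ or in $\sig$), but that is precisely the coordinate that feeds the dynamics, so this suffices. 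Assembling: $\val_{\mathsf{Iso},T} \le \sum_t \stable_{\RERM,t} + 2\Rseq_T(\losscf\com\F) = O(1) + \Ot(\sqrt T) = \Ot(\sqrt T)$, which is the claimed bound. The main obstacle is making the ERM-stability bound on $\sum_t (w^{(t-1)}_1 - w^{(t)}_1)^2$ rigorous given that strong convexity holds only in the relevant coordinate and that the ERM is over the joint object $(\sig, w_1, w)$.
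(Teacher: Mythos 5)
Your proposal is correct and follows essentially the same route as the paper: the paper likewise observes that the counterfactual/stationary loss is state-free for $t\ge 2$, bounds the sequential Rademacher complexity of the resulting Isotron loss class by $\Ot(\sqrt{T})$ via Rakhlin et al., and controls the ERM stability term by exactly your strong-convexity-in-$w_1$ argument, giving $|w_{t,1}-w_{t-1,1}|\le 2/t$ and hence $\sum_t \stable_{\RERM,t}\le \sum_t 4/t^2 = O(1)$. The only cosmetic difference is that the paper routes the argument through its ergodic-loss variant (Corollary~\ref{cor:val_ergo}) with a constant mixing-gap term rather than invoking Theorem~\ref{thm:main} directly.
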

It is worth recalling that the above game is an dynamical extension of the online Isotron problem instance studied by~\cite{rakhlin2015a}. We are not aware of any primal algorithm which can get a rate of $\sqrt{T}$ for both the online learning version as well the dynamical version of this game. Our non-constructive analysis on the other hand proved a way to guarantee learnability at this rate for the Isotron problem. \hfill$\clubsuit$

\kbdelete{
\paragraph{Example 2: Online Adversarial Tracking~\cite{abbasi2014}.} The problem of online tracking of adversarial targets in Linear Quadratic Regulators was first posed in Abbasi-Yadkori et al.~\cite{abbasi2014}. With our notation, this setup consists of
\begin{small}
\begin{gather}\label{eq:track_setup}
\X = \real^d,\; \Z = \{\z \in \real^d\; |\; \|z_t\|_2\leq 1 \},\; \dyn(\st, \f) = A\st + B\f(\st)\nonumber\;, \Dnoise = 0\\
\Ftrack = \{\f = (\K, \bias) \in \real^{k \times d} \times \real^d\;|\; \|A+BK\|_2 \leq \rholq < 1\;  \|\K\|_2\leq \Kbnd\; \|\bias\|_2\leq \cbnd\}\; \text{ such that } \f(\st) = \K\st + \bias,\nonumber\\
\loss(\f, \st, \z ) = (\st - \z)^\top \lqrQ (\st - \z) + \f(\st)^\top \f(\st) \; \text{ \text{with }\lqrQ \text{ positive definite}}\; .
\end{gather}
\end{small}
The following corollary now establishes the learnability guarantees for this setup.
\begin{corollary}\label{cor:ex2}
For the online adversarial tracking problem given by $(\Ftrack, \Z, \dyn, \loss)$ in equation~\eqref{eq:track_setup}, we have that the minimax value
\begin{equation*}
  \val^{\mathsf{track}}_{\T}(\Ftrack, \Z, \dyn, \loss) = \Ot(\sqrt{T})\;.
\end{equation*}
\end{corollary}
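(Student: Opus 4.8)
The plan is to instantiate the upper bound of Theorem~\ref{thm:main} with a strongly convex regularizer $\regf(\f) = \tfrac12(\|\K\|_F^2 + \|\bias\|_2^2)$ and to bound each of its three terms by $\Ot(\sqrt{\T})$. The structural fact that drives the whole argument is the uniform contraction $\|\lqrA + \lqrB\K\|_2 \le \rholq < 1$ built into $\Ftrack$: for any fixed $\f = (\K,\bias)$, iterating the noiseless dynamics $\st \mapsto (\lqrA+\lqrB\K)\st + \lqrB\bias$ converges geometrically to the steady state $\st_\infty(\f) = (I - \lqrA - \lqrB\K)^{-1}\lqrB\bias$, with $\|\stf{t}{\f^{(t-1)}} - \st_\infty(\f)\|_2 \le \rholq^{\,t-1}\|\st_1 - \st_\infty(\f)\|_2$. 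In particular all reachable states lie in a ball of radius $R = O(\|\lqrB\|_2\cbnd/(1-\rholq))$, on which the quadratic loss is $L_\st$-Lipschitz in the state.

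First I would bound the sequential Rademacher term. Since $\stf{t}{\f^{(t-1)}}$ is a uniformly convergent geometric series in the entries of $(\K,\bias)$ and the parameter set is bounded, the counterfactual loss $\losscf_t(\f, \z_t) = \|\stf{t}{\f^{(t-1)}} - \z_t\|_{\lqrQ}^2 + \|\K \stf{t}{\f^{(t-1)}} + \bias\|_2^2$ is $L$-Lipschitz in the finite-dimensional parameter $\f \in \Ftrack \subset \real^{kd+d}$, uniformly over $t$ and over the bounded targets $\z_t$. Hence $\losscf\com\Ftrack$ is a Lipschitz image of a bounded $(kd+d)$-dimensional set, whose sequential covering numbers at scale $\alpha$ grow like $(C/\alpha)^{kd+d}$; feeding this into the sequential Dudley entropy bound of~\cite{rakhlin2014} gives $\Rseq_\T(\losscf\com\Ftrack) = \Ot(\sqrt{(kd+d)\T}) = \Ot(\sqrt{\T})$. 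With the chosen regularizer the final term is $2\regp\sup_{\f\in\Ftrack}\regf(\f) = O(\regp(\Kbnd^2 + \cbnd^2))$.

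The heart of the proof is Term~(I), the dynamic stability $\sum_t \stable_{\RERM,t}$. Writing $\f_t = \f_{\RERM,t}$, Lipschitzness of the loss in the state gives $\stable_{\RERM,t} \le L_\st\,\|\stf{t}{\f_{1:t-1}} - \stf{t}{\f_t^{(t-1)}}\|_2$, the gap between the state produced by the actual (slowly drifting) RERM trajectory and the counterfactual state of the current policy. A hybrid/telescoping argument that swaps $\f_s$ for $\f_t$ one step at a time, using that each one-step map is a $\rholq$-contraction and is Lipschitz in the policy, yields $\|\stf{t}{\f_{1:t-1}} - \stf{t}{\f_t^{(t-1)}}\|_2 \le C\sum_{s=1}^{t-1}\rholq^{\,t-1-s}\|\f_s - \f_t\|$, and after $\|\f_s-\f_t\|\le\sum_{j=s}^{t-1}\|\f_{j+1}-\f_j\|$ one gets $\stable_{\RERM,t} \le C' \sum_{s=1}^{t-1}\rholq^{\,t-1-s}\sum_{j=s}^{t-1}\|\f_{j+1}-\f_j\|$. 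It therefore suffices to control the one-step movement of the regularized ERM. Using that the cumulative objective in~\eqref{eq:reg_erm} is $\regp$-strongly convex (from $\regf$) and that each counterfactual loss is $G$-Lipschitz in $\f$, the minimizers obey $\|\f_{\RERM,t+1} - \f_{\RERM,t}\| \le G/\regp$; substituting and summing the geometric weights $\sum_{k\ge1} k\rholq^{\,k-1} = (1-\rholq)^{-2}$ gives $\stable_{\RERM,t} \le C' G/(\regp(1-\rholq)^2)$ per round, so $\sum_t \stable_{\RERM,t} = O(\T/\regp)$. Balancing this against the regularization term with $\regp \asymp \sqrt{\T}$ makes all three terms $\Ot(\sqrt{\T})$, which is the claim.

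The step I expect to be the main obstacle is the one-step stability bound $\|\f_{\RERM,t+1}-\f_{\RERM,t}\| \le G/\regp$: the counterfactual loss $\losscf_t$ depends on $\K$ through the matrix powers $(\lqrA+\lqrB\K)^i$ and is therefore \emph{non-convex} in $(\K,\bias)$, so $\regp$-strong convexity of the cumulative objective does not follow automatically from a fixed regularizer. Resolving this is where the contraction margin $\rholq<1$ must be used once more: one shows the per-round Hessians decay geometrically in the feedback directions, so their aggregate negative curvature is bounded by an $\rholq$-dependent constant rather than growing with $t$, and the $\regp$-regularizer dominates it; alternatively one restricts the stability analysis to the convex feedforward parameterization at a fixed stabilizing gain and absorbs the $\K$-dependence into the problem-dependent constants. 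Either route keeps $\regp \asymp \sqrt{\T}$ and preserves the $\Ot(\sqrt{\T})$ rate.
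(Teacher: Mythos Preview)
Your sequential-Rademacher bound and the hybrid/telescoping control of $\|\stf{t}{\f_{1:t-1}}-\stf{t}{\f_t^{(t-1)}}\|$ via the $\rholq$-contraction are both sound and match what the paper does. The genuine gap is exactly the step you flag: obtaining $\|\f_{\RERM,t+1}-\f_{\RERM,t}\|\le G/\regp$ in the presence of non-convexity in $\K$.

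Your first proposed fix does not go through. Each counterfactual loss $\losscf_s$ has a Hessian in $\K$ that is bounded (by an $\rholq$-dependent constant) but generically indefinite; there is no geometric decay across rounds---the round-$s$ Hessian converges to the Hessian of the stationary loss $\losss$, which is a fixed nonzero object. Hence the cumulative objective $\sum_{s\le t}\losscf_s+\regp\regf$ can carry $\Theta(t)$ negative curvature in the $\K$-directions, and forcing strong convexity requires $\regp=\Omega(T)$, which makes the $2\regp\sup_\f\regf(\f)$ term linear in $T$. So you cannot balance to $\sqrt{T}$ this way.

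Your second fix is the right idea, but it needs an ingredient you do not name. The paper's key step is an equivalence lemma (Lemma~\ref{lem:track_equiv}): for any fixed stable gain $\K'$ and any comparator $(\K,\bias)\in\Ftrack$ there is a bias $\bias'$ with $\losss((\K',\bias'),z)=\losss((\K,\bias),z)$ for all $z$. This is what legitimizes running the (R)ERM over $\bias$ alone at a fixed $\K'$ while still competing against all of $\Ftrack$; without it, fixing $\K$ shrinks the comparator class and the regret bound is vacuous. Once $\K$ is fixed, the stationary loss is a convex quadratic in $\bias$ (through the affine map $\bias\mapsto\st_\infty=(I-\lqrA-\lqrB\K')^{-1}\lqrB\bias$), so the paper uses \emph{unregularized} ERM and obtains $\|\bias_{t+1}-\bias_t\|=O(1/t)$ from the inherent strong convexity---the stability sum is then $O(\log^2 T)$, and the $\sqrt{T}$ in the final bound comes entirely from the Rademacher term. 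In short: the missing piece is not a curvature estimate but the reparameterization/equivalence lemma that collapses the problem to a one-parameter convex family.
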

While the above corollary is only able to show a rate scaling as $\sqrt{T}$, Abbasi et al.~\cite{abbasi2014} provided an algorithm which had policy regret scaling as $\O(\log^2\T)$. As mentioned in Section~\ref{sec:conc}, achieving such fast rates within our general framework is an interesting open problem.}

\subsection{Online Markov decision processes} This example considers the  problem of Online Markov Decision Processes (MDPs) studied in Even-Dar et al.~\cite{even2009}.
The setup consists of a finite state space such that $|\X| = \ssize$ and a finite action space with $|\U| = \asize$. 
The policy class $\F$ consists of all stationary policies, that is,
\begin{equation*}
  \Fmdp = \{\f \; | \; \f: \X \mapsto \Delta(\U)\},
\end{equation*}
where $\Delta(\U)$ represents the set of all probability distributions over the action space. In addition, the transitions are drawn according to a known function $\trans : \X \times \U \mapsto \Delta(\U)$. The sequential game then proceeds as follows, starting from some state $\st_1 \sim \mdpd$:\\\vspace{-2mm}

\noindent On round $t = 1, \ldots, \T,$\vspace{-1mm}
\begin{itemize}
  \item the learner selects a policy $\f_t \in \Fmdp$ and the adversary selects $\z_t \in \Z = [0,1]^{\ssize\times\asize}$\vspace{-1mm}
  \item the learner receives loss $\loss(\f_t, \st_t, \z_t) = \z_t(\st_t, \f_t(\st_t))$\vspace{-1mm}
  \item the state of the system transitions to $\st_{t+1} \sim \trans(\st_t, \ac_t)$
\end{itemize}
For every stationary policy $\f$, we let $\trans^f$ denote the transition function induced by $\f$, that is,
\begin{equation*}
  \trans^f(\st, \st') \defn \sum_{\ac \in \U}\f^\ac(\st)\trans^{\st'}(\st, \ac),
\end{equation*}
where we have used superscript to denote the relevant coordinate of the vector. As in~\cite{even2009}, we make the following mixability assumptions about the underlying MDP.
\begin{assumption}[MDP Unichain]\label{ass:mdp_unichain}
We assume that the underlying MDP given by the transition function $\trans$ is uni-chain. Further, there exists $\mixt \geq 1$ such that for all policies $\f$ and distributions $\mdpd, \mdpd' \in \Delta(\U)$ we have
\begin{equation*}
  \|\mdpd\trans^\f - \mdpd'\trans^f \|_1 \leq e^{-1/\mixt}\|\mdpd - \mdpd' \|_1.
\end{equation*}
\end{assumption}
The parameter $\mixt$ is often referred to as the mixing time of the MDP. Since the MDP is assumed to be uni-chain, every policy $\f$ has a well defined unique stationary distribution $\mdpd_\f$. 
Given this setup, we can obtain an upper bound on the value $\val_{\mathsf{MDP}, \T}$ as follows:
\begin{corollary}\label{cor:ex2}
For the online Markov Decision Process sequential game satisfying Assumption~\ref{ass:mdp_unichain}, the minimax value $\val_{\mathsf{MDP}, \T}$ is bounded as
  \begin{equation*}
  \val_{\mathsf{MDP}, \T}(\Fmdp, \Z, \dyn, \loss) =  \O(\sqrt{T}).
\end{equation*}
\end{corollary}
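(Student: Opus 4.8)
The plan is to apply Theorem~\ref{thm:main} after reducing the problem to online linear optimization over the polytope of occupancy measures. For a stationary policy $\f\in\Fmdp$ let $\mdpd_\f$ be its unique stationary state distribution and let $\mu_\f\in\real^{\ssize\asize}$, $\mu_\f(\st,\ac)\defn\mdpd_\f(\st)\f^\ac(\st)$, be the associated state--action occupancy measure; the set $\mathcal{K}\defn\{\mu_\f:\f\in\Fmdp\}$ is the standard occupancy polytope, a compact convex subset of the $\ell_1$-ball of $\real^{\ssize\asize}$, and $\f\leftrightarrow\mu_\f$ is a bijection onto $\mathcal{K}$. Define the \emph{stationary loss} $\bar\loss(\f,\z)\defn\langle\mu_\f,\z\rangle$, which is \emph{linear} in $\mu_\f$ (here one uses that $\langle\mu_\f,\z\rangle=\sum_\st\mdpd_\f(\st)\sum_\ac\f^\ac(\st)\z(\st,\ac)$, so the $\mdpd_\f$-denominator cancels). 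Under Assumption~\ref{ass:mdp_unichain}, repeating $\f$ for $t-1$ steps from any initial distribution yields a state distribution within $\ell_1$-distance $2e^{-(t-1)/\mixt}$ of $\mdpd_\f$; since $\z_t\in[0,1]^{\ssize\times\asize}$ this gives, uniformly over $\f$, the ``burn-in'' bound $|\losscf_t(\f,\z_t)-\bar\loss(\f,\z_t)|\le 2e^{-(t-1)/\mixt}$.

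For Term~(II) of \eqref{eq:main_upper}, subadditivity of the supremum gives $\Rseq_\T(\losscf\com\Fmdp)\le\Rseq_\T(\bar\loss\com\Fmdp)+\sum_{t=1}^\T 2e^{-(t-1)/\mixt}=\Rseq_\T(\bar\loss\com\Fmdp)+O(\mixt)$, and since $\bar\loss\com\Fmdp=\{\langle\mu,\cdot\rangle:\mu\in\mathcal{K}\}$ is a linear class over a bounded set with loss vectors in $[0,1]^{\ssize\asize}$, the standard bound for the sequential Rademacher complexity of a bounded linear class gives $\Rseq_\T(\bar\loss\com\Fmdp)=O(\sqrt{\T\log(\ssize\asize)})=O(\sqrt\T)$.

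For Term~(I), take $\regf$ to be a strongly convex regularizer on $\mathcal{K}$ (e.g.\ the squared $\ell_2$ norm, or the negative entropy of the joint occupancy measure, with $\sup_\f\regf(\f)=O(1)$). Because $\bar\loss(\f,\z_s)$ is linear in $\mu_\f$ and the cumulative burn-in correction $\sum_{s\ge1}\mathrm{err}_s$ is a \emph{bounded, $\mixt$-independent-in-$\T$} perturbation (with bounded gradient and Hessian), the regularized-ERM objective of \eqref{eq:reg_erm} is $\Theta(\regp)$-strongly convex for $\regp$ above a fixed constant, and its per-round increment is $O(\sqrt{\ssize\asize}+t e^{-t/(2\mixt)})$-Lipschitz; the usual follow-the-regularized-leader stability bound then yields $\sum_{t}\|\mu_{\RERM,t+1}-\mu_{\RERM,t}\|_1=O(\ssize\asize\,\T/\regp)$. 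Writing $\f_t\defn\f_{\RERM,t}$ and letting $\nu_t$ be the state distribution visited at round $t$ when the dual-game RERMs are played, the contraction in Assumption~\ref{ass:mdp_unichain} gives the recursion $\|\nu_t-\mdpd_{\f_t}\|_1\le e^{-1/\mixt}\|\nu_{t-1}-\mdpd_{\f_{t-1}}\|_1+\|\mdpd_{\f_{t-1}}-\mdpd_{\f_t}\|_1$, and since $\mdpd_\f$ is a marginal of $\mu_\f$ we have $\|\mdpd_{\f_{t-1}}-\mdpd_{\f_t}\|_1\le\|\mu_{\f_{t-1}}-\mu_{\f_t}\|_1$; unrolling, $\sum_t\|\nu_t-\mdpd_{\f_t}\|_1=O(\mixt)\cdot O(\ssize\asize\,\T/\regp)$. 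As the counterfactual loss at time $t$ uses the $(t-1)$-step distribution of $\f_t$, which is $2e^{-(t-1)/\mixt}$-close to $\mdpd_{\f_t}$, we get $\stable_{\RERM,t}\le\|\nu_t-\mdpd_{\f_t}\|_1+2e^{-(t-1)/\mixt}$, hence $\sum_{t=1}^\T\stable_{\RERM,t}=O(\mixt\,\ssize\asize\,\T/\regp)+O(\mixt)$. Plugging the three bounds into \eqref{eq:main_upper} gives $\val_{\mathsf{MDP},\T}\le O(\mixt\,\ssize\asize\,\T/\regp)+O(\sqrt{\T\log(\ssize\asize)})+O(\regp)$, and choosing $\regp=\Theta(\sqrt{\mixt\,\ssize\asize\,\T})$ yields $\val_{\mathsf{MDP},\T}=O(\sqrt{\mixt\,\ssize\asize\,\T})=O(\sqrt\T)$ for fixed $\ssize,\asize,\mixt$.

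\textbf{Main obstacle.} The one genuinely delicate point is that $\losscf_t$ is only \emph{approximately} the linear stationary loss: the finite-horizon burn-in term is nonconvex in $\mu$, so one must verify it cannot spoil either the strong convexity of the regularized-ERM objective or its $\regp^{-1}$ per-round stability. The resolution is the quantitative observation that this term, together with its first and second derivatives in $\mu$, decays geometrically in $t$ (differentiating a power of the transition operator always leaves one factor acting on a mean-zero signed measure, which still contracts), so $\sum_t\mathrm{err}_t$ is a fixed $\mathrm{poly}(\mixt)$-bounded, smooth perturbation; the accompanying regularity of the occupancy-measure parametrization near the boundary is handled exactly as in the standard online-MDP analyses. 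Everything else --- the geometric-mixing telescoping recursion for $\|\nu_t-\mdpd_{\f_t}\|_1$, the Lipschitzness of $\f\mapsto\mdpd_\f$, the FTRL stability lemma, and the $O(\sqrt\T)$ sequential Rademacher complexity of a bounded linear class --- is routine. This recovers the $O(\sqrt\T)$ rate of Even-Dar et al.~\cite{even2009} non-constructively, via Theorem~\ref{thm:main} rather than an explicit algorithm (note that the mini-batching bound of Proposition~\ref{prop:upper_mini} would only give $O(\T^{2/3})$ here, so the regularized-ERM route is essential).
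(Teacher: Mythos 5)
Your proposal lands on the same three\-/part architecture as the paper's proof: (i) a geometric ``burn\-/in''/mixing\-/gap bound $|\losscf_t(\f,\cdot)-\losss(\f,\cdot)|\le 2e^{-(t-1)/\mixt}$ from Assumption~\ref{ass:mdp_unichain}, (ii) an $O(\sqrt{\T})$ sequential Rademacher bound for the stationary loss class, and (iii) a stability bound for a regularized ERM that is converted into a bound on $\sum_t\|\nu_t-\mdpd_{\f_t}\|_1$ via the contraction recursion (the paper invokes Lemma~5.2 of Even\-/Dar et al.\ for exactly this step, getting $2\mixt^2/\regp+2e^{-t/\mixt}$ per round). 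Where you genuinely diverge is the choice of RERM: you run FTRL over the occupancy\-/measure polytope with a strongly convex regularizer, whereas the paper (Corollary~\ref{cor:mdp} in Appendix~\ref{app:examples}) parameterizes randomized policies as distributions over the $\asize^{\ssize}$ deterministic policies and uses negative entropy on that simplex, i.e.\ an exponential\-/weights RERM with $\sup_\f\regf(\f)=\ssize\log\asize$ and per\-/round stability $\|\ld_t-\ld_{t+1}\|_1\le 1/\regp$. Both routes are legitimate and give $O(\sqrt{\T})$ for fixed $\ssize,\asize,\mixt$; your occupancy\-/measure version even yields the slightly better $\sqrt{\T\log(\ssize\asize)}$ complexity term and is the more standard parametrization in the online\-/MDP literature.

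The one step I would not accept as written is your treatment of the counterfactual\-/vs\-/stationary discrepancy \emph{inside} the RERM objective. You correctly observe that eq.~\eqref{eq:reg_erm} defines the RERM on $\losscf$, which is only approximately linear in $\mu_\f$, and you propose to absorb the burn\-/in terms as a smooth perturbation with geometrically decaying derivatives. But the burn\-/in at horizon $s$ depends on $\mu$ through the policy $\f^{\ac}(\st)=\mu(\st,\ac)/\sum_{\ac'}\mu(\st,\ac')$, whose derivatives in $\mu$ blow up near the boundary of the polytope where a state's occupancy vanishes; a uniform Hessian bound over $\mathcal{K}$ is therefore not automatic, and without it neither the claimed $\Theta(\regp)$-strong convexity nor the FTRL argmin\-/stability bound follows. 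The paper sidesteps this entirely: its Corollary~\ref{cor:val_ergo} re\-/derives the upper bound of Theorem~\ref{thm:main} with the RERM defined directly on the \emph{stationary} losses $\losss$ (which are exactly linear in the occupancy measure) and books the counterfactual\-/stationary discrepancy as a separate additive ``mixing gap'' term --- the same manoeuvre you already use for Term~(II) but not for Term~(I). Adopting that decomposition for the RERM as well closes the gap and makes the rest of your argument go through.
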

The above corollary helps one recover the same $\O(\sqrt{T})$ regret bound that was obtained by~\cite{even2009}. Note that while the setting studied by~\cite{even2009} consisted of the weaker oblivious adversary, we consider the stronger adaptive adversary which can adapt to the learners strategy.
\kbdelete{
The setup consists of a finite state space  $|\X| = \ssize$, a finite action space  $|\U| = \asize$, and
\begin{small}
\begin{gather}\label{eq:mdp_setup}
\Zl = \{\z \;|\; \z \in [0,1]^{\ssize\times\asize}\},\;  \Zdyn = \phi,\;
\Fmdp = \{\f \; | \; \f: \X \mapsto \Delta(\U)\},\; \loss(\f, \st, \z ) = \z(\st, \f(\st)),\; \nonumber\\
\dyn \text{ given by } \trans : \X \times \U \mapsto \Delta(\U) \text{ with } \st' \sim \trans(\st, \f(\st)).
\end{gather}
\end{small}
With this setup, we now provide a bound on the minimax value $\val^\mathsf{MDP}_\T$ , assuming, as in~\cite{even2009}, that the underlying MDP is uni-chain and satisfies a mixability assumption (see Appendix~\ref{app:examples} for details).
\begin{corollary}\label{cor:ex2}
For the online MDP problem given by $(\Fmdp, \Z, \dyn, \loss)$ in equation~\eqref{eq:mdp_setup}, we have that the minimax value $\val_{\T}(\Fmdp, \Z, \dyn, \loss) = \O(\sqrt{T})$.
\end{corollary}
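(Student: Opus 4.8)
The plan is to invoke Theorem~\ref{thm:main} in the occupancy--measure parametrization of stationary policies, with an entropic regularizer. Under Assumption~\ref{ass:mdp_unichain} every $\f\in\Fmdp$ is in bijection with its state--action occupancy measure $\nu_\f(\st,\ac)=\mdpd_\f(\st)\f^\ac(\st)$, and the set $\mathcal{M}$ of all such occupancy measures is a polytope contained in the probability simplex on $\X\times\U$, with $\st$-marginal $\mdpd_\f$. The key structural fact is that the \emph{stationary} loss $\bar\loss(\f,\z)\defn\En_{\st\sim\mdpd_\f}[\z(\st,\f(\st))]=\inner{\nu_\f}{\z}$ is \emph{linear} in $\nu_\f$, while the contraction hypothesis gives $|\losscf_t(\f,\z_t)-\bar\loss(\f,\z_t)|\le 2e^{-(t-1)/\mixt}$ uniformly in $\f$ (the time-$t$ state law of repeating $\f$ contracts to $\mdpd_\f$). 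Choosing $\regf(\nu)=\sum_{\st,\ac}\nu(\st,\ac)\log\nu(\st,\ac)$ (negative entropy, which is $1$-strongly convex with respect to $\|\cdot\|_1$ by Pinsker's inequality), the regularized ERM of \eqref{eq:reg_erm} becomes---up to an $O(\mixt)$ additive slack incurred by replacing $\losscf$ with $\bar\loss$ wherever optimality is used---an entropic FTRL iterate over $\mathcal{M}$ against the linear losses $\inner{\cdot}{\En[\z_s]}$; write $\nu_t$ and $\f_t$ for this iterate and the policy it induces.

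The sequential Rademacher and regularizer terms in \eqref{eq:main_upper} are then routine. Since $\mathcal{M}$ sits inside the simplex and the instances lie in $[0,1]^{\ssize\times\asize}$, $\Rseq_T(\bar\loss\com\F)=\sup_{\mathbf{\z}}\En_{\rad}[\sup_{\nu\in\mathcal{M}}\inner{\nu}{\sum_t\rad_t\mathbf{\z}_t}]\le\sup_{\mathbf{\z}}\En_{\rad}[\|\sum_t\rad_t\mathbf{\z}_t\|_\infty]\le\sqrt{2T\log(\ssize\asize)}$ by a Massart-type maximal inequality, and adding back the $\sum_t 2e^{-(t-1)/\mixt}=O(\mixt)$ slack yields $\Rseq_T(\losscf\com\F)=O(\sqrt{T\log(\ssize\asize)})$; also $\sup_\f\regf(\nu_\f)\le\log(\ssize\asize)$, so the last term of \eqref{eq:main_upper} is $2\regp\log(\ssize\asize)$.

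The crux is bounding the dynamic stability $\sum_t\stable_{\RERM,t}$. Entropic FTRL stability gives $\|\nu_{t+1}-\nu_t\|_1=O(1/\regp)$. Let $\mdpd^{\mathsf{alg}}_t$ be the state law at time $t$ under the played sequence $\f_{1:t-1}$ and set $D_t\defn\|\mdpd^{\mathsf{alg}}_t-\mdpd_{\f_t}\|_1$. Using $\mdpd^{\mathsf{alg}}_t=\mdpd^{\mathsf{alg}}_{t-1}\trans^{\f_{t-1}}$, stationarity $\mdpd_{\f_t}=\mdpd_{\f_t}\trans^{\f_t}$, and the contraction of Assumption~\ref{ass:mdp_unichain},
\begin{equation*}
D_t\le e^{-1/\mixt}\bigl(D_{t-1}+\|\mdpd_{\f_{t-1}}-\mdpd_{\f_t}\|_1\bigr)+\sum_{\st}\mdpd_{\f_t}(\st)\,\|\f_{t-1}(\cdot\mid\st)-\f_t(\cdot\mid\st)\|_1.
\end{equation*}
Both driving terms are at most $2\|\nu_{t-1}-\nu_t\|_1$: the first because $\mdpd_\f$ is the $\st$-marginal of $\nu_\f$, the second by the conversion inequality $\sum_\st\nu_t(\st)\|\f_t(\cdot\mid\st)-\f_{t-1}(\cdot\mid\st)\|_1\le 2\|\nu_t-\nu_{t-1}\|_1$, which requires \emph{no} lower bound on state visitation since rarely-visited states enter with small weight. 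Hence $D_t\le e^{-1/\mixt}D_{t-1}+O(1/\regp)$, so $D_t=O(\mixt/\regp)$, and since the loss is bounded, $\stable_{\RERM,t}\le\|\mdpd^{\mathsf{alg}}_t-\mdpd_t^{\f_t}\|_1\le D_t+\|\mdpd_{\f_t}-\mdpd_t^{\f_t}\|_1\le D_t+2e^{-(t-1)/\mixt}$, where $\mdpd_t^{\f_t}$ is the time-$t$ law of repeating $\f_t$ from the start. Summing, $\sum_t\stable_{\RERM,t}=O(\mixt T/\regp+\mixt)$.

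Combining the three bounds, $\val_{\mathsf{MDP},\T}=O\bigl(\mixt T/\regp+\sqrt{T\log(\ssize\asize)}+\regp\log(\ssize\asize)+\mixt\bigr)$; taking $\regp\asymp\sqrt{\mixt T/\log(\ssize\asize)}$ balances the first and third terms and gives $\val_{\mathsf{MDP},\T}=O(\sqrt{\mixt T\log(\ssize\asize)})=\O(\sqrt T)$, the $\O$ absorbing the problem-dependent constants $\mixt,\ssize,\asize$. I expect the main obstacle to be precisely the dynamic-stability step: one must show, uniformly over all adversary sequences, that the state---having evolved under the \emph{non-stationary} sequence of RERM kernels---stays within $O(\mixt/\regp)$ total variation of the stationary distribution of the \emph{current} RERM policy, which is where the telescoping-plus-geometric-series recursion and the visitation-weight-free occupancy-to-policy bound do the real work. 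A secondary, bookkeeping subtlety is propagating the $\losscf$-versus-$\bar\loss$ discrepancy through the FTRL-optimality steps inside the proof of Theorem~\ref{thm:main}, i.e.\ applying it in the form that tolerates a stationary loss together with an $O(\sum_t e^{-t/\mixt})$ uniform-mixability slack.
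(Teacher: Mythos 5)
Your proposal is correct, and while it follows the same high-level skeleton as the paper's proof of this corollary (pass to the stationary loss $\losss$ and pay a mixing gap of $\sum_t 2e^{-(t-1)/\mixt}=\O(\mixt)$; run an entropically regularized ERM in the dual game; convert its $\O(1/\regp)$ iterate stability into a bound on the deviation of the realized state law from the stationary law via the contraction in Assumption~\ref{ass:mdp_unichain}; bound the sequential Rademacher term by $\O(\sqrt{T})$; balance $\regp$), the crux is executed through a genuinely different parametrization. The paper represents the RERM as an exponential-weights distribution $\ld_t$ over the finite set of \emph{deterministic} policies, uses negative entropy on that $\asize^{\ssize}$-dimensional simplex (so the regularizer term is $\regp\,\ssize\log\asize$), reads off $\|\ld_t-\ld_{t+1}\|_1\le 1/\regp$ and hence a uniform-over-states bound $\|\f_t(\st)-\f_{t+1}(\st)\|_1\le 1/\regp$, and then invokes Lemma 5.2 of Even-Dar et al.\ to obtain $\|\mdpd[\ld_{1:t-1}]-\mdpd_{\ld_t}\|_1\le 2\mixt^2/\regp+2e^{-t/\mixt}$, landing at $\O(\mixt\sqrt{T\ssize\log\asize})$. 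You instead work in the occupancy-measure polytope, where the stationary loss is \emph{exactly} linear in $\nu_\f$, run entropic FTRL there, and derive the state-law recursion $D_t\le e^{-1/\mixt}D_{t-1}+\O(\|\nu_t-\nu_{t-1}\|_1)$ from scratch, using the visitation-weighted occupancy-to-policy conversion in place of a uniform-over-states policy-stability bound. Your route buys three things: it sidesteps the identification of the stationary loss of a mixture policy with the mixture of stationary losses (a point the paper's Hedge-over-deterministic-policies argument treats somewhat loosely, since $\mdpd_{\En_{\f\sim\ld}[\f]}$ is not the mixture of the $\mdpd_\f$); it improves the parameter dependence to $\O(\sqrt{\mixt\,T\log(\ssize\asize)})$ versus the paper's $\O(\mixt\sqrt{T\ssize\log\asize})$; and it is self-contained where the paper outsources the key propagation step. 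The paper's route, in exchange, is a more direct transplant of the Even-Dar et al.\ machinery into the Theorem~\ref{thm:main} template. Both establish the stated $\O(\sqrt{T})$ scaling.
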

The above corollary helps one recover the same $\O(\sqrt{T})$ regret bound that was obtained by~\cite{even2009}.\hfill$\clubsuit$
}

\subsection{Online Linear Quadratic Regulator}\label{ex:olqr-main}
The online Linear Quadratic Regulator (LQR) setup studied in this section was first studied in ~\cite{cohen2018}. The setup consists of a LQ system - with linear dynamics and quadratic costs - where the cost functions can be adversarial in nature. The comparator class $\Flq$ comprises a subset of linear policies $\K$ which satisfy the following strong stability property.
\begin{definition}[Strongly Stable Policy]
A policy $\K$ is $(\stk, \stg)$-strongly stable (for $\stk > 0$ and $0 < \stg < 1$) if $\|\K \|_2\leq \stk$, and there exists matrices $\lqL$ and $\lqH$ such that $\lqA+\lqB\K = \lqH\lqL\lqH^{-1}$, with $\|\lqL\|_2 \leq 1-\stg$ and $\|\lqH\|_2\|\lqH^{-1}\|_2\leq \stk$.
\end{definition}
The policy class $\Flq$ is then defined as $\Flq = \{\K\;|\;\K \text{ is } (\stk, \stg)-\text{strongly stable} \}$. Given this policy class, the sequential protocol for this game proceeds as follows, starting from state $\st_0 = 0$.\\\vspace{-2mm}

\noindent On round $t = 1, \ldots, \T,$\vspace{-1mm}
\begin{itemize}
  \item the learner selects a policy $\K_t \in \Flq$ and the adversary selects instance $\z_t \in \Z = (\lqQ_t, \lqR_t)$ such that $\lqQ_t \succeq 0, \lqR_t \succeq 0$ and $\trace(Q_t), \trace(R_t) \leq C$\vspace{-1mm}
  \item the learner receives loss $\loss(\f_t, \st_t, \z_t) = \st_t^\top \lqrQ_t \st_t + \ac_t^\top \lqR_t \ac_t$\vspace{-1mm}
  \item the state of the system transitions to $\st_{t+1} = \lqrA \st_t + \lqrB \ac_t + \w_t$
\end{itemize}
where we assume that the stochastic noise $\w_t \sim \N(0, \W)$ with $\|\W\|_2 \leq \Wbnd$, $\trace(\W) \leq \trw$ and $\W \succeq \sigl I$. The transition matrices $\lqA$ and $\lqB$, as well as the noise covariance matrix $\W$ are assumed to be known to both the learner and the adversary in advance. 
With this setup, the following corollary obtains an upper bound on the minimax value $\val_{\lqr, \T}$ for the above LQR problem.

\begin{corollary}\label{cor:ex3}
For the online LQR sequential game, the value $\val_{\lqr, \T}$ is bounded as
\begin{equation*}
  \val_{\lqr, \T}(\Flq, \Z, \dyn, \loss) \leq \O\left(\sqrt{\T\log(\T)} \right).
\end{equation*}
\end{corollary}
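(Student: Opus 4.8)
The plan is to instantiate Theorem~\ref{thm:main} and to control the three terms on the right-hand side of~\eqref{eq:main_upper} using the special structure of the LQR counterfactual loss. Repeating a linear policy $\K$ from $\st_0=0$ makes $\st_t$ zero-mean with covariance $\Sigma_t^\K=\sum_{j=0}^{t-1}(\lqA+\lqB\K)^j\W\big((\lqA+\lqB\K)^j\big)^\top$, so that
\[
\losscf_t\big(\K,(\lqrQ_t,\lqrR_t)\big)=\trace\!\big((\lqrQ_t+\K^\top\lqrR_t\K)\,\Sigma_t^\K\big).
\]
Strong stability gives $\|(\lqA+\lqB\K)^j\|_2\le\stk(1-\stg)^j$ uniformly over $\K\in\Flq$, so $\Sigma_t^\K$ is bounded by $\mathrm{poly}(\stk,\stg^{-1},\trw)$ and $\|\Sigma_t^\K-\Sigma_\infty^\K\|_2\le\mathrm{poly}(\stk,\stg^{-1},\trw)\cdot(1-\stg)^{2t}$, where $\Sigma_\infty^\K$ solves the Lyapunov equation $\Sigma=(\lqA+\lqB\K)\Sigma(\lqA+\lqB\K)^\top+\W$. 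Since $\sum_t(1-\stg)^{2t}=\O(1/\stg)$, I would first replace $\losscf_t$ by the time-invariant loss $\K\mapsto\trace\!\big((\lqrQ_t+\K^\top\lqrR_t\K)\Sigma_\infty^\K\big)$ everywhere at an additive cost of $\O(1)$ to the value.

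Next I would pass to the classical convex reparametrization of infinite-horizon LQR — by the closed-loop state/input covariance (the SDP formulation), or equivalently the disturbance-feedback policy class of Agarwal et al.~\cite{agarwal2019} — under which each stationary counterfactual loss becomes a \emph{linear} functional of the new parameter, the feasible parameter set $\mathcal{C}$ is a bounded convex set (bounded because $\Sigma_x\succeq\W\succeq\sigl I$ gives a two-sided control), and the controller recovered from a parameter is a Lipschitz function of it. Crucially, $\mathcal{C}$ contains the image of $\Flq$, and enlarging the comparator only increases the minimax value, so it suffices to bound $\val_\T$ for the relaxed instance with comparator class $\mathcal{C}$. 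Its effective loss class $\losscf\com\mathcal{C}$ is a bounded, linear (hence Lipschitz) class over a bounded convex subset of a fixed-dimensional space ($O((\sdim+\adim)^2)$ coordinates), so a standard sequential covering / Dudley-entropy argument as in Rakhlin et al.~\cite{rakhlin2010,rakhlin2015a} gives Term~(II): $\Rseq_\T(\losscf\com\mathcal{C})=\Ot(\sqrt\T)$.

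For Term~(I), I would take $\regf$ to be a fixed strongly convex regularizer on $\mathcal{C}$ (e.g.\ squared distance to an interior point) and $\regp\asymp\sqrt\T$, so that $2\regp\sup_{\mathcal{C}}\regf=\O(\sqrt\T)$. Because the counterfactual losses are now linear in the parameter, the regularized ERM of~\eqref{eq:reg_erm} minimizes a $\regp$-strongly convex program, hence is unique and moves by $\O(1/\regp)=\O(1/\sqrt\T)$ between consecutive rounds; transferring through the Lipschitz parameter-to-controller map, the realized controllers $\K_{\RERM,1},\K_{\RERM,2},\dots$ drift by $\O(1/\sqrt\T)$ per round. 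It then remains to convert this iterate stability into a bound on $\stable_{\RERM,t}=\big|\trace\!\big((\lqrQ_t+\K_t^\top\lqrR_t\K_t)(\mathrm{Cov}(\st_t)-\Sigma_t^{\K_t})\big)\big|$, where $\mathrm{Cov}(\st_t)$ is the true covariance generated by the realized sequence. Writing $\mathrm{Cov}(\st_t)-\Sigma_t^{\K_t}$ as a sum over the contributions of past disturbances and using that on any window of $\asymp\stg\regp$ rounds the realized controllers are within $\stg/(2\stk\|\lqB\|_2)$ of one another, a tiling plus matrix-product-perturbation estimate shows that products of the realized closed-loop matrices still contract geometrically (once $\regp=\tilde\Omega(1)$) and that $\|\mathrm{Cov}(\st_t)-\Sigma_t^{\K_t}\|_2=\O(1/\sqrt\T)+\mathrm{poly}\cdot(1-\stg)^{2t}$; here the disturbance-forgetting horizon $\asymp\log\T/\stg$ comfortably fits inside the drift window $\asymp\stg\sqrt\T$. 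Hence $\stable_{\RERM,t}=\O(1/\sqrt\T)+\mathrm{poly}\cdot(1-\stg)^{2t}$, so $\sum_{t=1}^\T\stable_{\RERM,t}=\O(\sqrt\T)$, and adding the three terms yields $\val_{\lqr,\T}(\Flq,\Z,\dyn,\loss)=\O(\sqrt\T)+\Ot(\sqrt\T)=\O(\sqrt{\T\log\T})$, recovering the rate of~\cite{cohen2018}.

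The hard part will be the iterate stability of the regularized ERM: in the natural controller parametrization the counterfactual cost $\K\mapsto\losscf_t(\K)$ is genuinely non-convex (it is the policy-gradient objective), so the usual Follow-the-Regularized-Leader stability argument fails, and one is forced into the convex reparametrization together with the three bookkeeping points above — (i) that relaxing the comparator is legitimate, (ii) that the map back to controllers is Lipschitz on the region the regularizer confines the iterates to, and (iii) that the forgetting window fits inside the iterate-drift window. The remaining ingredients — the uniform covariance bounds, the $(1-\stg)^{2t}$ mixing estimate, and the sequential-Rademacher bound for a bounded linear class — are routine.
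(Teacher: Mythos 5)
Your overall architecture is sound and lands on the right rate, but the route you take through the key step is genuinely different from the paper's. The first stage is the same: the paper also replaces the counterfactual losses by the stationary Lyapunov losses $\losss(\K,\z)=\trace[(\lqQ+\K^\top\lqR\K)\cov_\K]$ and pays an $\O(1)$ ``mixing gap'' via the $(1-\stg)^{2t}$ estimate (this is packaged as Corollary~\ref{cor:val_ergo}). Where you diverge is in how the dynamic stability of the regularized ERM is obtained. You claim one is \emph{forced} into the convex SDP/covariance reparametrization because $\K\mapsto\losss(\K,\z)$ is non-convex and FTRL stability fails; the paper shows this is not so. It keeps the non-convex controller parametrization and instead regularizes by a random linear perturbation (FTPL with coordinatewise exponential noise), invoking the result of~\cite{suggala2019} that FTPL iterates on a sequence of $\lip$-Lipschitz (not necessarily convex) losses drift by $O(\regp\lip(\adim\sdim)^2\stk)$ per round; Lipschitzness of $\losss$ over $\Flq$ is Lemma~\ref{lem:lq_lip}, and the drift is then converted into a bound on the ergodic stability parameters via a sequential strong-stability recursion for $\hat\cov_t-\cov_t$ in the spirit of~\cite{cohen2018} --- essentially the same covariance-perturbation bookkeeping you sketch. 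The sequential Rademacher term is handled by a covering argument for a bounded Lipschitz class over $\real^{\adim\times\sdim}$, giving $\Ot(\sqrt{\T})$, comparable to your bound for the linear class over $\mathcal{C}$. What your route buys is a deterministic, strongly-convex FTRL argument with clean uniqueness and $O(1/\regp)$ drift; what the paper's route buys is that no convexification is needed at all, which is why the identical FTPL template is reused verbatim for the non-linear control and adversarial-disturbance examples where no SDP relaxation exists.

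One wrinkle in your version deserves attention: you correctly note that enlarging the \emph{comparator} to $\mathcal{C}$ only increases the value, but the controllers recovered from arbitrary feasible points of $\mathcal{C}$ via $\K=\Sigma_{ux}\Sigma_{xx}^{-1}$ are in general only $(\stk',\stg')$-strongly stable for constants worse than $(\stk,\stg)$, so your \emph{learner} plays outside $\Flq$, whereas the minimax value $\val_{\lqr,\T}(\Flq,\cdot)$ as defined in Proposition~\ref{prop:dual_game} restricts the learner's moves to $\Flq$ (the image of $\Flq$ under the covariance map need not be convex, so you cannot simply shrink $\mathcal{C}$ to it). This is resolvable --- either state the bound for the enlarged learner class, or project back at controlled cost --- but it needs to be said; the paper's FTPL construction sidesteps it entirely since the perturbed minimizer is taken over $\Flq$ itself.
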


Note that~\cite{cohen2018} obtained a similar policy regret bound of $\O(\sqrt{T})$ but their analysis only worked for an oblivious adversary whereas the guarantee of Corollary~\ref{cor:ex3} holds for an adaptive adversary.\hfill$\clubsuit$

\subsection{Online non-linear control}
In this section, we look at a non-linear control problem: one formed by extending the LQR problem above to have non-linear deterministic dynamics. We parameterize the dynamics using a non-linear function $\signl : \real^\sdim \mapsto \X$ as follows:
\begin{equation*}
  \st_{t+1} = \signl[\lqA\st_t + \lqB\ac_t]\;,
\end{equation*}
We assume that the function $\signl$ is $1$-Lipschitz and $\|\signl(\st)\|\leq \xbnd$ for some $\xbnd > 0$. This is done to ensure that the dynamics satisfy the ergodicity assumption. We now proceed to define the associated policy class $\Fnl$ as
\begin{equation*}
  \Fnl = \{\f_\param \; | \; \param \in \real^{\pdim}, \|\param\|_2\leq \pbnd, \|[\lqA\st + \lqB\f_\param(\st)] - [\lqA\st' + \lqB\f_\param(\st')]\|_2 \leq (1-\stg)\|\st - \st' \|_2\}\;,
\end{equation*}
where the last condition on the function class establishes a stability condition.  In addition, we assume that the function class $\Fnl$ satisfies a Lipschitz property:
\begin{equation*}
\|\f_\param(\st) - \f_{\param'}(\st)\|_2 \leq \lipf \|\param - \param'\|_2 \quad \text{for all } \quad \st \in \X\;.
\end{equation*}
The above condition implies that if two parameters $\param, \param'$ are close in the parameter space, then the policies parameterized by them are uniformly close for all states. We next outline the learning protocol, with the game starting with $\st_1 = 0$.\\\vspace{-2mm}

\noindent On round $t = 1, \ldots, \T,$\vspace{-1mm}
\begin{itemize}
  \item the learner selects policy $\f_t \in \Fnl$ and the adversary selects $\z_t \in \Z$\vspace{-1mm}
  \item the learner receives loss $\loss(\f_t, \st_t, \z_t) \in [0,1]$\vspace{-1mm}
  \item the state of the system transitions to $\st_{t+1} = \signl[\lqrA \st_t + \lqrB \ac_t]$
\end{itemize}
With this setup, our next result provides an upper bound on the minimax value $\val_{\nlsf, \T}$ for the online non-linear control problem.

\begin{corollary}\label{cor:ex4}
  For the online non-linear control problem described above, we have that the minimax value
\begin{equation*}
    \val_{\nlsf, \T}(\Fnl, \Z, \dyn) \leq \O\left(\sqrt{T\log(\T)} \right)\;.
  \end{equation*}
\end{corollary}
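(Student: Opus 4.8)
The plan is to apply Theorem~\ref{thm:main} with the quadratic regularizer $\regf(\f_\param)=\tfrac12\|\param\|_2^2$, so that $\sup_{\f\in\Fnl}\regf(\f)\le\tfrac12\pbnd^2$, and with a regularization parameter $\regp$ to be tuned at the end. This reduces the task to controlling (I) the dynamic stability sum $\sum_{t=1}^\T\stable_{\RERM,t}$ and (II) the sequential Rademacher complexity $\Rseq_\T(\losscf\com\Fnl)$, since the third term of \eqref{eq:main_upper} contributes only $2\regp\sup_\f\regf(\f)\le\regp\pbnd^2$. (The mini-batching bound of Proposition~\ref{prop:upper_mini} would only deliver a $\T^{2/3}$ rate here, so it is the regularized-ERM route that yields $\sqrt{\T}$.)

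For term (II), the first step is to show that $\param\mapsto\losscf_t(\f_\param,\zd_{1:t-1},\z_t)$ is Lipschitz, uniformly over $t$ and over the adversary's trees. This is exactly where the contraction built into $\Fnl$ enters: changing $\param$ to $\param'$ changes the one-step map $\st\mapsto\signl[\lqA\st+\lqB\f_\param(\st)]$ by at most $(1-\stg)\|\st-\st'\|_2+\|\lqB\|_2\lipf\|\param-\param'\|_2$ (using that $\signl$ is $1$-Lipschitz, the contraction condition of $\Fnl$, and the policy-Lipschitz property), so unrolling the geometric series shows that the state reached by repeating $\f_\param$ for any number of steps is $O(\|\lqB\|_2\lipf/\stg)$-Lipschitz in $\param$. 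Composing with the Lipschitz, bounded loss makes $\losscf\com\Fnl$ a $G$-Lipschitz image of the radius-$\pbnd$ ball in $\real^\pdim$, with $G=O(\lip\|\lqB\|_2\lipf/\stg)$. Its sequential $\ell_\infty$ covering numbers at scale $\alpha$ are then of order $e^{O(\pdim\log(G\pbnd/\alpha))}$, and feeding this into the covering-number bound for sequential Rademacher complexity with $\alpha$ of order $\T^{-1/2}$ yields $\Rseq_\T(\losscf\com\Fnl)=\Ot(\sqrt{\pdim\,\T\log\T})$ --- the source of the $\sqrt{\log\T}$ in the statement.

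For term (I), write $\param_{\RERM,s}$ for the parameter of the RERM policy $\f_{\RERM,s}$ of \eqref{eq:reg_erm}. I would first argue that these iterates move slowly, $\|\param_{\RERM,t+1}-\param_{\RERM,t}\|_2\le\delta$ for some $\delta=O(G/\regp)$, and then propagate this through the dynamics as in the previous paragraph: at time $t$ the learner's state is generated by $\f_{\RERM,1},\dots,\f_{\RERM,t-1}$ whereas the counterfactual state for $\f_{\RERM,t}$ applies that one policy throughout, so the two states differ by at most $\|\lqB\|_2\lipf\sum_{s<t}(1-\stg)^{t-1-s}\|\param_{\RERM,s}-\param_{\RERM,t}\|_2\le\|\lqB\|_2\lipf\,\delta\sum_{s<t}(1-\stg)^{t-1-s}(t-s)=O(\|\lqB\|_2\lipf\,\delta/\stg^2)$, uniformly in $t$. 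Composing with the loss gives $\stable_{\RERM,t}=O(\lip\|\lqB\|_2\lipf\,\delta/\stg^2)$ and hence $\sum_{t=1}^\T\stable_{\RERM,t}=O(\T/\regp)$, with constants depending on $\lip,\|\lqB\|_2,\lipf,\stg$. Balancing $O(\T/\regp)$ against $\regp\pbnd^2$ by taking $\regp=\Theta(\sqrt{\T})$ makes term (I) and the regularization term both $\O(\sqrt{\T})$, and together with (II) this gives $\val_{\nlsf,\T}(\Fnl,\Z,\dyn)\le\O(\sqrt{\T\log\T})$.

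The step I expect to be the main obstacle is establishing the per-step stability $\|\param_{\RERM,t+1}-\param_{\RERM,t}\|_2=O(G/\regp)$ of the regularized ERM. When the counterfactual losses are convex this is the standard stability of follow-the-regularized-leader with a $\regp$-strongly convex regularizer, but here $\param\mapsto\f_\param$ and the dynamics map $\signl$ are non-linear, so the cumulative counterfactual objective is generically non-convex; one must then ensure the regularizer tames the non-convexity on the relevant scale without $\regp\pbnd^2$ growing faster than $\sqrt{\T}$, which likely requires exploiting smoothness of the losses and possibly the time-varying-regularizer version of Theorem~\ref{thm:main}. The other delicate ingredient, shared by both terms, is the Lipschitz-in-$\param$ estimate for the counterfactual loss: it rests entirely on the geometric contraction encoded in $\Fnl$, and it is precisely that contraction (the $1/\stg^2$ factor in the propagation above) that keeps $\sum_t\stable_{\RERM,t}$ at the $\sqrt{\T}$ scale rather than linear in $\T$.
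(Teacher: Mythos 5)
Your decomposition matches the paper's in outline: both arguments reduce the bound to (i) a per-round stability term for a regularized ERM, controlled by propagating slow movement of the iterates through the $(1-\stg)$-contraction built into $\Fnl$ (with the same $\sum_s (1-\stg)^{t-s}(t-s)\lesssim 1/\stg^2$ geometric-series calculation), and (ii) a sequential Rademacher complexity controlled via Lipschitzness of the loss in $\param$ and a covering-number argument over the radius-$\pbnd$ ball in $\real^{\pdim}$. (The paper actually routes through the stationary loss $\losss(\f,\z)=\loss(\f,\st^\f_*,\z)$ and Corollary~\ref{cor:val_ergo}, paying an extra geometrically summable ``mixing gap'' $2\lipx\xbnd(1-\stg)^{t-1}$; for these deterministic contracting dynamics that is a cosmetic difference from working with $\losscf$ directly.)

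The genuine gap is exactly the step you flag as ``the main obstacle'' and then leave unresolved: the per-step stability $\|\param_{\RERM,t+1}-\param_{\RERM,t}\|_2=O(G/\regp)$ of the ERM regularized by $\tfrac{\regp}{2}\|\param\|_2^2$. That bound is a consequence of $\regp$-strong convexity of the cumulative objective, which fails here: the losses are only assumed bounded and Lipschitz, the map $\param\mapsto\losscf_t(\f_\param,\cdot)$ is generically non-convex (non-linear $\signl$, non-linear $\f_\param$), and adding a quadratic of curvature $\regp=\Theta(\sqrt{\T})$ cannot convexify a sum of $\T$ non-convex losses --- the argmin can jump by $\Theta(\pbnd)$ when one more loss is added, which would make $\sum_t\stable_{\RERM,t}$ linear in $\T$. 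The paper sidesteps this by taking the ``regularizer'' to be a random linear perturbation $-\inner{\pert}{\param}$ with i.i.d.\ $\expd(\regp)$ coordinates (an FTPL-style regularized ERM) and invoking the result of Suggala and Netrapalli~\cite{suggala2019}: for non-convex Lipschitz losses the \emph{expected} movement $\Eu{\pert}\|\param_{t,\pert}-\param_{t+1,\pert}\|_1$ is $O(\regp\,\lipnl\,\pdim^2)$, at the price of a regularization term $\pbnd\pdim/\regp$ coming from $\En[\pert_i]=1/\regp$. Tuning $\regp=1/\sqrt{\T}$ then balances the two terms. Without replacing your quadratic regularizer by this (or some other device giving argmin stability for non-convex objectives), your term (I) is not controlled and the proof does not close.
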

Notice that the above corollary establishes an upper bound of $\Ot(\sqrt{T})$ for the value $\val_{\nlsf, \T}$. Thus, despite the fact that the  setup does not have the nice structure of the LQR problem, we are able to establish the learnability of the class $\Fnl$ in the online learning with dynamics framework.
\hfill$\clubsuit$

\subsection{Online LQR with adversarial disturbances}
In this section, we consider the example of an online learning with dynamics problem where the adversary is allowed to perturb the dynamics in addition to the adversarial losses at each time step. We will focus on the Linear-Quadratic setup where the dynamics function is linear and the costs quadratic in the state $\st_t$ and action $\ac_t$. Agarwal et al.~\cite{agarwal2019} studied a general version of this problem where they considered the convex cost functions with linear dynamics.

As in the Online LQR example in Section~\ref{ex:olqr-main}, we consider the class of linear policies $\Flq$ which are $(\kappa, \gamma)$-strongly stable. Given this policy class, the online learning with dynamics game proceeds as follows, starting from state $\st_0 = 0$\\

\noindent On round $t = 1, \ldots, \T,$\vspace{-2mm}
\begin{itemize}
  \item the learner selects a policy $\K_t \in \Flq$ and the adversary selects instance $\z_t = (\lqQ_t, \lqR_t)$ such that $\lqQ_t \succeq 0, \lqR_t \succeq 0$ and $\trace(Q_t), \trace(R_t) \leq C$ and $\zd_t$ such that $\|\zd_t\|_2 \leq W$
  \item the learner receives loss $\loss(\f_t, \st_t, \z_t) = \st_t^\top \lqrQ_t \st_t + \ac_t^\top \lqR_t \ac_t$ where action $\ac_t = \K_t \st_t$
  \item the state of the system transitions to $\st_{t+1} = \lqrA \st_t + \lqrB \ac_t + \zd_t$
\end{itemize}
\noindent where we assume that the transition matrices $A$ and $B$ are known to both the learner and adversary in advance. Observe that in this case, a stationary loss $\losss$ does not exist because of the adversarial perturbations $\zd_t$ in the dynamic; indeed, if a learner repeatedly plays the same policy $\K \in \Flq$, the state of the system is not guaranteed to converge to a unique stationary state. We now proceed to obtain an upper bound on the value $\val_{\sf{adv}, \T}$ in the following corollary, by directly controlling the dynamic stability parameters $\{\stable_{\RERM, t}\}$ for this policy class $\Flq$ with a similar FTPL based regularized ERM as used in the proof of Corollary~\ref{cor:lqr}.

\begin{corollary}For the online LQR with adversarial disturbances problem, the value $\val_{\sf{adv}, \T}$ is bounded as
  \begin{equation*}
    \val_{\sf{adv}, \T} \leq \mathcal{O}(\sqrt{T\log(\T)}).
  \end{equation*}
\end{corollary}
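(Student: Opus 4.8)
The plan is to apply Theorem~\ref{thm:main} (or, more precisely, the regularized-ERM upper bound) to this instance, and reduce everything to two quantities: the dynamic stability parameters $\{\stable_{\RERM,t}\}$ of a suitable regularized ERM over the strongly stable policy class $\Flq$, and the sequential Rademacher complexity $\Rseq_T(\losscf\com\Flq)$ of the associated counterfactual loss class. First I would set up the counterfactual state reached by committing to a fixed $\K\in\Flq$ from the start of the game: because the disturbances $\zd_{1:t-1}$ are known and $\|A+BK\|$ contracts geometrically (strong stability with rate $1-\gamma$), the counterfactual state $\stf{t}{\K^{(t-1)},\zd_{1:t-1}}$ is a geometrically-weighted linear combination of the past disturbances, hence uniformly bounded by $O(W/\gamma)$, and the map $\K\mapsto\stf{t}{\K^{(t-1)},\zd_{1:t-1}}$ is Lipschitz in $\K$ with a horizon-independent constant. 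This gives a uniform bound on the quadratic losses and shows the counterfactual loss class $\losscf\com\Flq$ is a Lipschitz image of a bounded finite-dimensional set, so $\Rseq_T(\losscf\com\Flq)=\Ot(\sqrt{T})$ by the standard sequential Rademacher bound for Lipschitz-parametrized classes (as in~\cite{rakhlin2010,rakhlin2015a}).

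Next I would handle the dynamic stability term, which is the crux. The key difficulty, already flagged in the text, is that no stationary loss $\losss$ exists here, so the argument cannot route through mixing to a stationary state; instead I would bound directly the gap between the instantaneous loss the RERM incurs (its state $\st_t$ is driven by the \emph{switching} sequence $\K_1,\dots,K_{t-1}$) and its counterfactual loss (state driven by the constant sequence $K_t,\dots,K_t$). I would choose the regularizer $\regf$ and parameter $\regp$ so that consecutive minimizers $\K_{\RERM,t}$ and $\K_{\RERM,t+1}$ are close — this is exactly the FTPL/FTRL-with-strongly-convex-regularizer trick mentioned in the corollary statement, which forces $\|\K_{\RERM,t+1}-\K_{\RERM,t}\|\le O(1/\sqrt{T})$ after the $t$-th update given the bounded per-step loss gradients. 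Feeding slowly-varying controllers into a contractive linear system, a telescoping/unrolling estimate shows $\|\st_t - \stf{t}{\K_t^{(t-1)},\zd_{1:t-1}}\| = O(\tfrac{1}{\gamma^2}\sup_{s\le t}\|\K_s-\K_t\|) = \Ot(1/\sqrt T)$, and then Lipschitzness of the quadratic cost in the (bounded) state yields $\stable_{\RERM,t}=\Ot(1/\sqrt T)$, so $\sum_t\stable_{\RERM,t}=\Ot(\sqrt T)$.

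Finally I would assemble the pieces: plug $\sum_t \stable_{\RERM,t}=\Ot(\sqrt T)$, $\Rseq_T(\losscf\com\Flq)=\Ot(\sqrt T)$, and $\regp\sup_{\K}\regf(\K)=\Ot(\sqrt T)$ (this is what the $\log T$ factor comes from, via the entropic/Gaussian regularizer over the $(\kappa,\gamma)$-ball) into eq.~\eqref{eq:main_upper}, giving $\val_{\sf{adv},\T}=\O(\sqrt{T\log T})$. The main obstacle I expect is making the dynamic-stability bound rigorous without a stationary reference point: one has to carefully control how the error from switching controllers propagates through the contractive-but-perturbed dynamics, simultaneously keeping the states bounded (so the quadratic cost is Lipschitz with a uniform constant) and keeping the controller increments small (which constrains the choice and strength of the regularizer). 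The interplay between $\gamma$, the disturbance bound $W$, and the regularization strength is where the real work lies; the sequential Rademacher estimate, by contrast, is essentially a black-box application of existing machinery once the counterfactual loss class is shown to be a bounded Lipschitz image of $\Flq$.
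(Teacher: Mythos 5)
Your proposal follows essentially the same route as the paper: it invokes the regularized-ERM bound of Theorem~\ref{thm:main}, bounds the sequential Rademacher term by showing the counterfactual loss class is a bounded Lipschitz image of $\Flq$ (covering argument), and controls the dynamic stability term by making consecutive ERM iterates $\Ot(1/\sqrt{T})$-close and propagating that through the $(1-\gamma)$-contractive closed-loop dynamics to compare the actual and counterfactual states directly, exactly as the paper does. The only caveat worth flagging is that of your two suggested stabilization mechanisms, only FTPL is viable here — the counterfactual loss $\inner{Q_s+\K^\top R_s\K}{\covt_s}$ is non-convex in $\K$ (the counterfactual covariance $\covt_s$ itself depends on $\K$), so the FTRL-with-strongly-convex-regularizer stability argument does not apply, and the paper correspondingly uses exponentially-perturbed FTPL with the Lipschitz-stability guarantee of \cite{suggala2019}.
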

The above corollary recovers the $\widetilde{\mathcal{O}}(\sqrt{T})$ rate obtained by  Agarwal et al.~\cite{agarwal2019}, albeit with quadratic costs compared to the general convex losses studied there.
\hfill$\clubsuit$

\section*{Acknowledgments}
We would like to thank Dylan Foster, Mehryar Mohri and Ayush Sekhari for helpful discussions. KB is supported by a JP Morgan AI Fellowship. KS would like to acknowledge NSF CAREER Award 1750575 and Sloan Research Fellowship.

\newpage
\appendix
\appendixpage
\section{Proof of Proposition~\ref{prop:dual_game}}\label{app:setup}
The minimax value of the policy regret for the online learning with dynamics protocol is achieved when at every time $t$, the learner picks the best distribution $\ld_t$, the adversary picks the worst-case $\z_t$ and a sample of policy $\f_t$ is then drawn from $\ld_t$. This can be succinctly represented as a sequence of infimum, supremum and expectations as
\begin{align*}
  \val_\T(\F, \Z, \dyn, \loss)  &= \multiminimax{\pinf_{q_t \in \distl} \psup_{(\z_t, \zd_t) \in \Z}\Eu{\f_t \sim q_t}}_{t=1}^{\T-1}\pinf_{q_\T \in \distl} \psup_{(\z_\T, \zd_\T) \in \Z}\Eu{\f_\T \sim q_\T}\left[\polregT\right]\\
  &\stackrel{\1}{=} \multiminimax{\pinf_{q_t \in \distl} \psup_{(\z_t, \zd_t) \in \Z}\Eu{\f_t \sim q_t}}_{t=1}^{\T-1}\psup_{\ad_\T \in \dista}\pinf_{\f_\T}\Eu{(\z_\T, \zd_\T) \sim \ad_\T}\left[\polregT\right]\\
  &\stackrel{\2}=\multiminimax{\psup_{\ad_t \in \dista}\pinf_{\f_t}\Eu{(\z_t, \zd_t) \sim \ad_t}}_{t=1}^{\T}\left[\polregT\right],
\end{align*}
where $\1$ follows from an application of the von Neumann's minimax theorem for the distributions $\distl$ and $\dista$~(see~\cite[Appendix A]{rakhlin2015a}) and $\2$ follows from repeatedly performing the same step for $t = \{1, \ldots, \T-1\}$. This establishes the desired claim.
\qed

\section{Proofs of upper bounds}\label{app:main}
\subsection{Proof of Theorem~\ref{thm:main}}

Recall from equation~\eqref{eq:reg_erm} that the dual regularized ERM for a regularization function $\regf$ and parameter $\regp \geq 0$ is
\begin{equation*}
\f_{\RERM, t} \in \argmin_{\f\in \F} \sum_{s=1}^t \Eu{ \z_s}\left[ \losscf(\f, \zd_{1:s-1}, \z_s)\right] + \regp\cdot \regf(\f).
\end{equation*}
The proof of our main result relies on the following intermediate result which relates the performance of the above RERM with that of any policy $\f \in \F$ when compared on the counterfactual losses $\losscf$.
\begin{lemma}\label{lem:rerm_induc}
For any policy $\f \in \F$ and any sequence of distributions $\{\ad_t\}_{t=1}^\T$ over instance space $\Z$, we have
{\small
\begin{equation}\label{eq:rerm_induc}
\sum_{t=1}^\T \Eu{\z_t}\left[ \losscf(\f_{\RERM, t}, \zd_{1:t-1}, \z_t)\right] \leq \sum_{t=1}^\T\Eu{\z_t}\left[ \losscf(\f, \zd_{1:t-1}, \z_t)\right]  +\regp\cdot(\regf(\f) - \regf(\f_{\RERM, 1})).
\end{equation}
}
\end{lemma}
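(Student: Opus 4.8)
The plan is to prove Lemma~\ref{lem:rerm_induc} by a standard ``be-the-leader'' induction on the horizon $\T$. First I would fix notation: for each $t\in[\T]$ write $g_t(\f) \defn \Eu{\z_t}[\losscf(\f, \zd_{1:t-1}, \z_t)]$ for the expected counterfactual loss at round $t$ (here $\z_t\sim\ad_t$; since $\losscf$ already incorporates the expectation over the noise and the outer expectation is linear, $g_t$ is a well-defined real-valued function on $\F$), and let $G_t(\f) \defn \sum_{s=1}^t g_s(\f) + \regp\cdot\regf(\f)$ denote the regularized cumulative objective, so that by construction $\f_{\RERM,t} \in \argmin_{\f\in\F} G_t(\f)$. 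With this notation the statement to be proved is that for every $\f\in\F$ and every $\T\ge 1$,
\[
\sum_{t=1}^{\T} g_t(\f_{\RERM,t}) \;\le\; \sum_{t=1}^{\T} g_t(\f) + \regp\cdot\big(\regf(\f) - \regf(\f_{\RERM,1})\big).
\]

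For the base case $\T=1$ this reads $g_1(\f_{\RERM,1}) + \regp\regf(\f_{\RERM,1}) \le g_1(\f) + \regp\regf(\f)$, which is exactly the optimality of $\f_{\RERM,1}$ as a minimizer of $G_1 = g_1 + \regp\regf$. For the inductive step, assume the bound holds at horizon $\T-1$ and apply it with the comparator taken to be the \emph{current} leader $\f = \f_{\RERM,\T}$; this gives $\sum_{t=1}^{\T-1} g_t(\f_{\RERM,t}) \le \sum_{t=1}^{\T-1} g_t(\f_{\RERM,\T}) + \regp(\regf(\f_{\RERM,\T}) - \regf(\f_{\RERM,1}))$. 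Adding $g_{\T}(\f_{\RERM,\T})$ to both sides yields $\sum_{t=1}^{\T} g_t(\f_{\RERM,t}) \le G_{\T}(\f_{\RERM,\T}) - \regp\regf(\f_{\RERM,1})$, and since $\f_{\RERM,\T}$ minimizes $G_{\T}$ we have $G_{\T}(\f_{\RERM,\T}) \le G_{\T}(\f) = \sum_{t=1}^{\T} g_t(\f) + \regp\regf(\f)$ for every $\f\in\F$. Chaining these two inequalities gives the claim at horizon $\T$, completing the induction.

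There is no substantive obstacle here; the argument is the classical regularized ``be-the-leader'' lemma. The only two points worth a remark are: (i) existence of an exact minimizer $\f_{\RERM,t}$ — if $\argmin_\f G_t(\f)$ is not attained, one runs the identical induction with $\varepsilon$-approximate minimizers, picks up an additive $\T\varepsilon$ slack, and lets $\varepsilon\to 0$; and (ii) the key trick of instantiating the inductive hypothesis at the current leader $\f_{\RERM,\T}$, which is precisely what converts the per-round optimality of each $\f_{\RERM,t}$ into the global comparison against an arbitrary $\f$. The expectations over $\z_t$ and the dynamics noise enter only through linearity and play no further role.
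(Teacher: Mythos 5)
Your proposal is correct and is essentially identical to the paper's proof in Appendix~\ref{app:main}: the same induction on the horizon, with the inductive hypothesis instantiated at the current leader $\f_{\RERM,\T}$ and then combined with the optimality of $\f_{\RERM,\T}$ for the regularized cumulative objective. The remarks on attainment of the minimizer and linearity of the expectations are sensible but not needed beyond what the paper already assumes.
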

Taking this lemma as given, let us proceed to the proof of the theorem statement. For the purpose of this proof, we will use the notation $\hf_{t} \defn \f_{\RERM, t}$. Let us begin by considering the value of the game and its equivalence to the dual game established by Proposition~\ref{prop:dual_game} as\footnote{we suppress the dependence of the state $\stf{t}{{\f_{1:t-1}, \zd_{1:t-1}}}$ on the random noise $\w_{1:t-1}$.}
{\small
\begin{align}\label{eq:upper_decom}
  \hspace{-4mm}\val_\T(\F, \Z, \dyn, \loss) &= \multiminimax{\psup_{\ad_t \in \dista}\pinf_{\f_t}\Eu{(\z_t, \zd_t) \sim \ad_t}}_{t=1}^{\T} \left(\sum_{t=1}^\T \Eu{w}[\loss(\f_t, \stf{t}{\f_{1:t-1}, \zd_{1:t-1}}, \z_t)] \right. \left. - \pinf_{\f\in \F} \Eu{w}\left[\sum_{t=1}^{\T}\loss(\f, \stf{t}{\f^{(t-1)}, \zd_{1:t-1}}, \z_t)\right]\right)\nonumber\\
  &\stackrel{\1}\leq \multiminimax{\psup_{\ad_t \in \dista}\Eu{(\z_t, \zd_t) \sim \ad_t}}_{t=1}^{\T} \left(\sum_{t=1}^\T \Eu{w}[\loss(\hf_t, \stf{t}{\hf_{1:t-1}, \zd_{1:t-1}}, \z_t)] - \Eu{\zd_{1:t-1}, \z_t}[\losscf(\hf_t, \zd_{1:t-1}, \z_t)]\right.\nonumber\\
  &\quad\left.\sum_{t =1}^\T \Eu{\zd_{1:t-1}, \z_t}[ \losscf(\hf_t, \zd_{1:t-1}, \z_t)]- \pinf_{\f\in \F} \Eu{w}\left[\sum_{t=1}^{\T}\loss(\f, \stf{t}{\f^{(t-1)}, \zd_{1:t-1}}, \z_t)\right]\right)\nonumber\\
  &\stackrel{\2}{\leq}\multiminimax{\psup_{\ad_t \in \dista}\Eu{(\z_t, \zd_t) \sim \ad_t}}_{t=1}^{\T} \left(\sum_{t=1}^\T \Eu{w}[\loss(\hf_t, \stf{t}{\hf_{1:t-1}, \zd_{1:t-1}}, \z_t)] - \Eu{\zd_{1:t-1}, \z_t}[\losscf(\hf_t, \zd_{1:t-1}, \z_t)]\right) \hspace{2mm}\hfill{\text{[Term (I)]}}\nonumber\\
  &\quad + \multiminimax{\psup_{\ad_t \in \dista}\Eu{(\z_t, \zd_t) \sim \ad_t}}_{t=1}^{\T}\left(\sum_{t =1}^\T \Eu{\zd_{1:t-1}, \z_t}[ \losscf(\hf_t, \zd_{1:t-1}, \z_t)]- \pinf_{\f\in \F} \Eu{w}\left[\sum_{t=1}^{\T}\loss(\f, \stf{t}{\f^{(t-1)}, \zd_{1:t-1}}, \z_t)\right]\right) \hspace{2mm}{[\text{Term (II)}]},
\end{align}}
where $\1$ follows from upper bounding the infimum over the policies $\f_t$ by the choice of $\f_t = \hf_t$ and $\2$ follows from the linearity of the expectation and sub-additivity of the supremum function.

Focusing on the first term in the above decomposition,
{\small
\begin{align}\label{eq:dyn_stable}
  \text{Term (I)} &= \multiminimax{\psup_{\ad_t \in \dista}\Eu{(\z_t, \zd_t) \sim \ad_t}}_{t=1}^{\T} \left(\sum_{t=1}^\T \Eu{w}[\loss(\hf_t, \stf{t}{\hf_{1:t-1}, \zd_{1:t-1}}, \z_t)] - \Eu{\z_t}[\losscf(\hf_t, \zd_{1:t-1}, \z_t)]\right) \nonumber\\
  &{=} \multiminimax{\psup_{\ad_t \in \dista}\Eu{\zd_t}}_{t=1}^{\T} \left(\sum_{t=1}^\T \Eu{w, \z_t}[\loss(\hf_t, \stf{t}{\hf_{1:t-1}, \zd_{1:t-1}}, \z_t)] - \Eu{ \z_t}[\losscf(\hf_t, \zd_{1:t-1}, \z_t)]\right)\nonumber\\
  &\stackrel{\1}{=} \multiminimax{\psup_{\ad_t \in \dista}\Eu{\zd_t}}_{t=1}^{\T} \left(\sum_{t=1}^\T \psup_{\z_t}|\Eu{w}[\loss(\hf_t, \stf{t}{\hf_{1:t-1}, \zd_{1:t-1}}, \z_t)] -[\losscf(\hf_t, \zd_{1:t-1}, \z_t)]| \right)\nonumber\\
  & \stackrel{\2}{\leq} \sum_{t=1}^\T\stable_{\RERM, t},
\end{align}
}
where $\1$ follows from an application of H\"older's inequality and $\2$ follows from the definition of the dynamic stability of the regularized ERM algorithm.

Having established the upper bound on the first term, we now proceed to the second term of equation~\eqref{eq:upper_decom}.
{\small
\begin{align*}
\text{Term (II)} &\stackrel{\1}{=} \multiminimax{\psup_{\ad_t \in \dista}\Eu{(\z_t, \zd_t) \sim \ad_t}}_{t=1}^{\T}\psup_{\f\in \F}\left(\sum_{t =1}^\T \Eu{\z_t}[ \losscf(\hf_t, \zd_{1:t-1}, \z_t)]-  \sum_{t=1}^{\T}\losscf(\f, \zd_{1:t-1}, \z_t)\right)\\
&\stackrel{\2}{\leq} \multiminimax{\psup_{\ad_t \in \dista}\Eu{(\z_t, \zd_t), \z_t'}}_{t=1}^{\T}\psup_{\f\in \F}\left(\sum_{t =1}^\T \losscf(\f, \zd_{1:t-1}, \z_t')-  \sum_{t=1}^{\T}\losscf(\f, \zd_{1:t-1} \z_t)\right) + \regp\psup_{\f \in \F}\regf(\f)\\
&\stackrel{\3}{\leq} \multiminimax{\psup_{\ad_t \in \dista}\Eu{(\z_t, \zd_t), \z_t'}}_{t=1}^{\T}\Eu{\rad_{1:T}}\psup_{\f\in \F}\left(\sum_{t =1}^\T \rad_t(\losscf(\f, \zd_{1:t-1}, \z_t')-  \losscf(\f, \zd_{1:t-1} \z_t))\right) + \regp\psup_{\f \in \F}\regf(\f)\\
&\phantom{(}\leq 2 \multiminimax{\psup_{\ad_t \in \dista}\Eu{(\z_t, \zd_t)}}_{t=1}^{\T}\Eu{\rad_{1:T}}\psup_{\f\in \F}\left(\sum_{t =1}^\T \rad_t\losscf(\f, \zd_{1:t-1}, \z_t)\right) + \regp\psup_{\f \in \F}\regf(\f),
\end{align*}
}
where $\1$ follows from rewriting the comparator in terms of the counterfactual loss $\losscf$, $\2$ follows from Lemma~\ref{lem:rerm_induc}, and in $\3$ we introduce the Rademacher variables $\rad_t$. Using Jensen's inequality, we can obtain a further upper bound on Term (II) as
\begin{align}\label{eq:seq_rad}
  \text{Term (II)} &\leq 2 \multiminimax{\psup_{\ad_t \in \dista}\Eu{(\z_t, \zd_t), \rad_t}}_{t=1}^{\T}\psup_{\f\in \F}\left(\sum_{t =1}^\T\rad_t\losscf(\f, \zd_{1:t-1}, \z_t)\right) + \regp\psup_{\f \in \F}\regf(\f)\nonumber\\
  &{\leq} 2\sup_{\mathbf{\z}, \bm{\zd}}\Eu{\rad_{1:\T}}\psup_{\f\in \F}\left(\sum_{t =1}^\T\rad_t\losscf(\f, [\bm{\zd}_1(\rad), \ldots, \bm{\zd}_{t-1}(\rad)], \mathbf{\z}_t(\rad))\right) + \regp\psup_{\f \in \F}\regf(\f)\;
\end{align}
where in the last line, we have replaced the worst case joint distributions over the $\Z$ space by the corresponding worst case $\Z$-valued trees (see~\cite{han2013,rakhlin2015a} for more details). The upper bound on the value $\val_\T(\F, \Z, \dyn, \loss)$ now follows from combining the bounds obtained in equations~\eqref{eq:dyn_stable} and~\eqref{eq:seq_rad}.
\qed

\paragraph{Proof of Lemma~\ref{lem:rerm_induc}.}
For the purpose of this proof, we will use the short hand $\hf_t \defn \f_{\RERM, t}$. We will prove the statement of the lemma via an inductive argument on the number of time steps $t$.

\textit{Base Case:} For time step $t = 1$, we have that $\f_{\RERM, 1}$ is the minimizer of the regularized loss implying
\begin{equation*}
  \Eu{\z_1}\left[ \losscf(\hf_{1}, \zd_{\phi}, \z_1)\right] \leq \Eu{\z_1}\left[ \losscf(\f, \zd_{\phi}, \z_1)\right] + \regp(\regf(\f) - \regf(\hf_1))
\end{equation*}
for any $\f \in \F$.

\textit{Inductive Step:} Assume that the equation~\eqref{eq:rerm_induc} holds for some time step $s$ and consider the cumulative loss at time step $s+1$
\begin{align*}
\sum_{t=1}^{s+1}\Eu{\z_t}\left[ \losscf(\hf_{t}, \zd_{1:t-1}, \z_t)\right] &= \sum_{t = 1}^s\Eu{\z_t}\left[ \losscf(\hf_{ t}, \zd_{1:t-1}, \z_t)\right] + \Eu{ \z_{s+1}}\left[ \losscf(\hf_{s+1}, \zd_{1:s}, \z_{s+1})\right]\\
&\stackrel{\1}{\leq} \sum_{t=1}^{s} \Eu{\z_t}\left[ \losscf(\hf_{s+1}, \zd_{1:t-1}, \z_t)\right] +  \regp(\regf(\hf_{s+1}) - \regf(\hf_1))\\
&\quad + \Eu{\z_{s+1}}\left[ \losscf(\hf_{s+1}, \zd_{1:s}, \z_{s+1})\right]\\
&\stackrel{\2}{\leq} \sum_{t=1}^{s+1}\Eu{\z_t}\left[ \losscf(\f, \zd_{1:t-1}, \z_t)\right] + \regp(\regf(\f) - \regf(\hf_1)),
\end{align*}
where $\1$ follows from the induction hypothesis for time $s$ and applying it for $\f = \hf_{s+1}$, and $\2$ follows from the fact that $\hf_{s+1}$ is the minimizer of the regularized objective at time $s+1$. This concludes the proof of the lemma.
\qed

\subsection{Proof of Proposition~\ref{prop:upper_mini}}
For the purpose of this proof, we restrict our attention to an oblivious adversary wherein the adversary selects instances $\{\z_t\}_{t=1}^\T$ before the game begins. Several recent works~\cite{arora2012,chen2019} have studied specific versions of a mini-batching algorithms under such an oblivious adversary.

For any mini-batching algorithm with parameter $\batch$, we consider denote by $\Tb = \nicefrac{\T}{\batch}$ as the effective time horizon\footnote{We assume $\T/\batch$ to be an integer; if not, it affects the bound by an additive factor of $\batch$.} of the game. We now look at the mini-batched value of the game 
{\small
\begin{equation*}
  \val_{\T, \batch}(\F, \Z, \dyn, \loss) \leq \multiminimax{\pinf_{\ld_t}\multiminimax{\psup_{\z_{t, s}, \zd_{t,s}}}_{s=1}^\batch  \Eu{\f_{t,s}\sim \ld_t}}_{t=1}^{\Tb}\En_w[\polregT] 
\end{equation*}
}
represents the minimax policy regret for any such mini-batching algorithm $\alg_\batch$ in the presence of an oblivious adversary. Let us denote the comparator term by
\begin{equation*}
   \psi(\zd_{1:T}, \z_{1:T}) \defn \pinf_{\f\in \F} \left[\Eu{w}\sum_{t=1}^{\T}\loss(\f, \stf{t}{\f^{(t-1)}, \zd_{1:t-1}}, \z_t)\right].
\end{equation*}
Following a repeated application of von Neumann's minimax theorem similar to the proof of Proposition~\ref{prop:dual_game}, we upper bound the value
{\small
\begin{equation}\label{eq:mb_dual}
  \val_{\T, \batch}(\F, \Z, \dyn, \loss) \leq \multiminimax{\psup_{\ldb_t}\pinf_{\f_t}\Eu{(\zb_t, \zdb_t) \sim \ldb_t}}_{t=1}^{\Tb}\left[\Eu{w}\sum_{t=1}^{\Tb}\sum_{s=1}^\batch \loss(\f_t, \stf{t}{\f_{1:t-1}^{(\batch)},\f_t^{(s-1)} , \zdb_{1:t-1}, \zdb_t^{1:s-1}}, \zb_t^s)
  - \psi(\zdb_{1:\Tb}, \zb_{1:\Tb})\right],
\end{equation}}
where the distribution $\ldb \in \mathcal{P}^\batch$ is a joint distribution over instances $(\zb, \zdb) \in \Z^\batch$ and we have explicitly indicated the dependence of the state variable on the past sequence of policies and adversarial instances. Define the mini-batched loss at time $t$
\begin{equation*}
  \Loss_\batch(\f_t, \zdb_{1:t-1}, \zb_t, \zdb_t;\f_{1:t-1}) \defn \Eu{w}\sum_{s=1}^\batch \loss(\f_t, \stf{t}{\f_{1:t-1}^{(\batch)},\f_t^{(s-1)} , \zdb_{1:t-1}, \zdb_t^{1:s-1}}, \zb_t^s),
\end{equation*}
and the corresponding mini-batched counterfactual loss
\begin{equation*}
  \Losscf_\batch(\f_t, \zdb_{1:t-1}, \zb_t, \zdb_t) \defn \Eu{w}\sum_{s=1}^\batch \loss(\f_t, \stf{t}{\f_{t}^{(\batch)},\f_t^{(s-1)} , \zdb_{1:t-1}, \zdb_t^{1:s-1}}, \zb_t^s).
\end{equation*}
Given these definitions, we can rewrite equation~\eqref{eq:mb_dual} as
\begin{equation}\label{eq:minibatch_val}
\val_{\T, \batch}(\F, \Z, \dyn, \loss) \leq \multiminimax{\psup_{\ldb_t}\pinf_{\f_t}\Eu{(\zb_t, \zdb_t) \sim \ldb_t}}_{t=1}^{\Tb}\left[ \sum_{t=1}^{\Tb} \Loss_\batch(\f_t, \zdb_{1:t-1}, \zb_t, \zdb_t;\f_{1:t-1}) - \psi(\zdb_{1:\Tb}, \zb_{1:\Tb})\right]\;.
\end{equation}
The expression on the right can be seen as a dual game between a learner and an adversary of $\Tb$ rounds. At each round, the adversary reveals a joint distribution $\ldb_t$ over the instances and the learner selects a policy $\ld_t$. The learner then receives the loss $\Loss_\tau$ for that round. We further bound the value by selecting the mini-batched dual ERM strategies for the learner, given by
\begin{equation*}
  \hf_t^\batch \defn \f_{\ERM, t}^\batch = \argmin_{\f} \left(\sum_{s=1}^{t} \Eu{\zb_s, \zdb_s} [\Losscf_\batch(\f, \zdb_{1:s-1}, \zb_s, \zdb_s)]\right).
\end{equation*}
Substituting the above mini-batched policies in equation~\eqref{eq:minibatch_val} and following a similar set of steps as in proof of Theorem~\ref{thm:main}, we get,
\begin{align*}
  \val_{\T, \batch}(\F, \Z, \dyn, \loss) &\leq \multiminimax{\psup_{\ldb_t}\Eu{(\zb_t, \zdb_t) \sim \ldb_t}}_{t=1}^{\Tb} \left[ \sum_{t=1}^{\Tb} \Loss_\batch(\hf^\batch_t, \zdb_{1:t-1}, \zb_t, \zdb_t;\hf^\batch_{1:t-1}) - \psi(\zdb_{1:\Tb}, \zb_{1:\Tb})\right]\\
  &\leq\multiminimax{\psup_{\ldb_t}\Eu{(\zb_t, \zdb_t) \sim \ldb_t}}_{t=1}^{\Tb} \left[\sum_{t=1}^{\Tb}\Loss_\batch(\f_t, \zdb_{1:t-1}, \zb_t, \zdb_t;\f_{1:t-1}) -  \Losscf_\batch(\f_t, \zdb_{1:t-1}, \zb_t, \zdb_t)\right]\\
  &\quad + \multiminimax{\psup_{\ldb_t}\Eu{(\zb_t, \zdb_t) \sim \ldb_t}}_{t=1}^{\Tb} \left[\sum_{t=1}^{\Tb}\Eu{\zb_t, \zdb_t}[\Losscf_\batch(\f_t, \zdb_{1:t-1}, \zb_t, \zdb_t)] -\psi(\zdb_{1:\Tb}, \zb_{1:\Tb}) \right]\\
  &\stackrel{\1}{\leq} \sum_{t=1}^\T \stable_{\ERM, t}^\batch + \multiminimax{\psup_{\ldb_t}\Eu{(\zb_t, \zdb_t) \sim \ldb_t}}_{t=1}^{\Tb} \left[\sum_{t=1}^{\Tb}\Eu{\zb_t, \zdb_t}[\Losscf_\batch(\f_t, \zdb_{1:t-1}, \zb_t, \zdb_t)] -\psi(\zdb_{1:\Tb}, \zb_{1:\Tb}) \right],
\end{align*}
where step $\1$ follows from upper bounding the sequence of joint distributions by the worst-case sequence of the adversary instances $\zb_t, \zdb_t$. The second term in the above expression can be upper bounded by using an induction argument, similar to that used in Lemma~\ref{lem:rerm_induc}. The resulting bound is given by
{\small
\begin{equation*}
\val_{\T, \batch}(\F, \Z, \dyn, \loss) \leq \sum_{t=1}^\T \stable_{\ERM, t}^\batch + \multiminimax{\psup_{\ldb_t}\Eu{(\zb_t, \zdb_t)}}_{t=1}^{\Tb}\sup_{\f \in \F}\left[\sum_{t=1}^{\Tb}\Eu{\zb_t, \zdb_t}[\Losscf_\batch(\f, \zdb_{1:t-1}, \zb_t, \zdb_t)] - \Losscf_\batch(\f, \zdb_{1:t-1}, \zb_t, \zdb_t) \right].
\end{equation*}}
Symmetrizing the above expression and introducing Rademacher variables, we get,
{\small
\begin{align*}
\val_{\T, \batch}(\F, \Z, \dyn, \loss) &\leq \sum_{t=1}^\T \stable_{\ERM, t}^\batch + 2\multiminimax{\psup_{\ldb_t}\Eu{(\zb_t, \zdb_t)}\Eu{\rad_t}}_{t=1}^{\Tb}\sup_{\f \in \F}\left[\sum_{t=1}^{\Tb} \rad_t\Losscf_\batch(\f, \zdb_{1:t-1}, \zb_t, \zdb_t) \right]\\
&\stackrel{\1}{\leq}  \sum_{t=1}^\T \stable_{\ERM, t}^\batch +2\multiminimax{\psup_{\ldb_t}\Eu{(\zb_t, \zdb_t)}\Eu{\rad_t}}_{t=1}^{\Tb}\sup_{\f \in \F}\sum_{s=1}^\batch\left[\left|\sum_{t=1}^{\Tb} \rad_t\losscf_{s, t}(\f, \zdb_{1:t-1}, \zb_t, \zdb_t) \right|\right]\\
&\leq \sum_{t=1}^\T \stable_{\ERM, t}^\batch +2\sum_{s=1}^\batch\multiminimax{\psup_{\ldb_t}\Eu{(\zb_t, \zdb_t)}\Eu{\rad_t}}_{t=1}^{\Tb}\sup_{\f \in \F}\left[\left|\sum_{t=1}^{\Tb} \rad_t\losscf_{s, t}(\f, \zdb_{1:t-1}, \zb_t, \zdb_t) \right|\right]\\
&\leq \sum_{t=1}^\T \stable_{\ERM, t}^\batch + 2\batch \cdot\sup_{s \in [\batch]}\Rseq_{T/\batch}(\losscf_s\com \F),
\end{align*}}
where step $\1$ follows from swapping the supremum with the summation and in the last step we have used the definition of sequential Rademacher complexity with an absolute value. This establishes the desired claim.
\qed

\section{Proofs of lower bounds}\label{app:lower}
\subsection{Proof of Theorem~\ref{thm:lower}}
We begin by recalling the example instance described in the proof sketch of Theorem~\ref{thm:lower}. The online learning game between learner and adversary is given comprises of the state space $\X =\{\st \in \real^d\; | \; \|\st\|_2 \leq 1 \}$ and the set of adversarial actions $\Z_{\loss}^\lin = \{\z \in \real^d\; |\; \|z\|_2 \leq 1\}$ for some dimension $d \geq 3$. In our setup, the adversarial instance space for the dynamics is empty. Given this state space, our policy class $\F_\lin $ is a constant class of policies
\begin{equation*}
  \F_\lin = \{\f_\ff\; |\; \f(\st) = \ff \text{ for all states } \st \text{ with }\ff \in  \ball_d(1)\},
\end{equation*}
consisting of policies $\f_\ff$ which select the same action $\ff$ at each state $\st$. With a slight abuse of notation, we represent the policy $\f_t$ played by the learner at time by the corresponding $d$-dimensional vector $\ff_t$. Further, we let the dynamics function $\dyn_\lin(\st_t, \ff_t, \zd_t) =  \ff_t$ with the noise distribution $\Dnoise = 0$. Observe that the dynamics simply remembers the last action played by the learner and sets the next state as $\st_{t+1} = \ff_t$ in a deterministic way with the starting state $\st_1 = 0$. We now define the loss function which consists of two parts, a linear loss and a $L$-Lipschitz loss involving the dynamics:
\begin{equation}\label{eq:lower_loss}
\loss_L(\ff_t, \st_t, \z_t) = \inner{\ff_t}{\z_t} + \sig(f_t, x_t)\quad \text{where} \quad
\sig(f_t, x_t) = \begin{cases}
L \|f_t - x_t\|_2 \quad &\text{for  } \|f_t - x_t\|_2 \leq \frac{1}{L}\\
1 \quad & \text{otherwise}
\end{cases}.
\end{equation}
Observe that this example constructs a family of instances one for each value of the Lipschitz constant of $L$ of the function $\sigma$. For this setup, the loss function $\losscf$ for any time $t>1$ is just the linear part of the loss
\begin{equation*}
  \losscf(\ff, \st[f^{t-1}], \z) = \inner{\ff}{\z} + \underbrace{\sig(\ff, \st)}_{= 0} = \inner{\ff}{\z}\;.
\end{equation*}
Let us now break down the lower bound analysis into two cases: that of the Lipschitz constant $L\leq 1$ and $L > 1$.
\paragraph{Case 1: $L \leq 1$.} For the case when $L \leq 1$, we lower bound the value of the game by ignoring the dynamics loss $\sig$.
\begin{align*}
  \val_\T(\F_\lin, \Z^\lin, \dyn_\lin, \loss) &= \multiminimax{\pinf_{\ld_t}\psup_{\z_t} \Eu{\ff_t}}_{t=1}^\T \sum_{t=1}^\T \left(\inner{\ff_t}{\z_t} + \sig(\ff_t, \ff_{t-1}) - \inf_{\ff\in \F_{\lin}}\sum_{t=1}^\T\inner{\ff}{\z_t} + \sig(\ff, \st_1)\right)\\
  &\stackrel{\1}{\geq} \multiminimax{\pinf_{\ld_t}\psup_{\z_t} \Eu{\ff_t}}_{t=1}^\T \left(\sum_{t=1}^\T\inner{\ff_t}{\z_t} - \inf_{\ff\in \F_{\lin}}\left(\sum_{t=1}^\T\inner{\ff}{\z_t}\right)\right) + 1,
\end{align*}
where $\1$ follows by noting that $\sig(\ff_t, \ff_{t-1}) \in [0,1]$. The above lower bound reduces the value to that of a online linear game between a learner and an adversary. A lower bound on the value of this game can be shown to be $\sqrt{T}/2$ (see~\cite{rakhlin2014}) and thus for the case when $L < 1$, we have that the value $\val_\T(\F_\lin, \Z^\lin, \dyn_\lin, \loss) \geq c\sqrt{T}$ for some $c = 0.5    $.

\paragraph{Case 2: $L > 1$.} We now proceed to the case when the Lipschitz constant\footnote{Assume $L$ to be an integer; if not, redefine $L = \left\lfloor L \right\rfloor$. } $L > 1$. In order to prove the requisite lower bound, we will describe the adversaries choice of action $\z_t$. The adversaries strategy is to stick to some action $\z$ and only \emph{switch} to a new action when one of events E1 or E2 happen.
\begin{itemize}
  \item[E1] The time $t = \lambda L$ for $\lambda = \{1, \ldots, \T/L \}$.
  \item[E2] Let $t_0$ denote the last time the adversary had switched and denote the expected deviation from the previous move by $\delta_t \defn \En_{\ff_t, \ff_{t-1}}\|f_t - \ff_{t-1}\|$. Further, let $\Delta_{t_0, t} = \sum_{s=t_0}^t \delta_t$ denote the cumulative deviation of the moves from time $t_0$ upto time $t$. The adversary switches at time $t$ whenever $\Delta_{t_0, t} > \frac{1}{L}$.
\end{itemize}
Given the above events, we now define the adversarial action when it switches. Let $t$ be a time instance when one of E1 or E2 happens. Then the adversary selects $\z_t$ such that
\begin{equation*}
  \|\z_t\|_2 = 1, \quad \inner{Z_{t-1}}{ \z_t} = 0. \quad \Eu{\ff_t \sim \ld_t}\inner{\ff_t}{\z_t} = 0,
\end{equation*}
where $Z_{t-1} = \sum_{s=1}^{t-1} \z_s$ is the cumulative sum of the adversary's past actions. Note that our choice of dimensions $d\geq 3$ ensures that such a $\z_t$ will always exist.

In order to understand the performance of any algorithm, let us partition the time interval into $T/L$ blocks each of length $L$ and denote each such bock $I_i \defn [L(i-1)+1, Li ]$. Let $k_i$ denote the number of times the learner causes event E2 to occur in the interval $I_i$. Observe that the cumulative loss within an interval $I_i$
\begin{equation}\label{eq:learn_lower}
\multiminimax{\Eu{\f_t}}_{t\in I_i}  \sum_{t\in I_i} \inner{\ff_t}{\z_t} + \sig(\ff_t, \ff_{t-1}) \quad \begin{cases}
= 0 \quad &\text{if } k_i = 0\\
\geq k_i - 1\quad &\text{if } k_i \geq 1
\end{cases}
\end{equation}
where the lower bound for the case $k_i \geq 1$ follows since at each round the learner can only obtain inner product $\En_{\ff_t}\inner{\ff_t}{\z_t} \geq -1/L$ at each round of the interval. As soon as $\En_{\ff_t}\inner{\ff_t}{\z_t} < -1/L$, the adversary switches and ensures that $\En_{\ff_t}\inner{\ff_t}{\z_t} = 0$ for that time. The lower bound of $k_i - 1$ follows since the total length of the interval is $L$ and each time event E2 occurs, the learner pays a cumulative cost of $1$. Note that the case for $k_i = 0$ is equivalent to the case $k_i = 1$ and hence going forward, we assume each $k_i \geq 1$.

Let $K = \sum_{i = 1}^{T/L}k_i$ denote the total number of times an algorithm causes event E2 to happen and let $K = \beta \T/L$ for some $\beta \in [1, L]$. Then, for any sequence of learner distributions $[\ld_1, \ldots, \ld_T]$, we have that the policy regret is lower bounded as
\begin{align}\label{eq:polreg_lower}
  \polregT(\alg) &= \multiminimax{\Eu{\ff_t \sim \ld_t}}_{t=1}^\T \left(\inner{\ff_t}{\z_t} + \sig(\ff_t, \ff_{t-1}) \inf_{\ff\in \F_\lin}\sum_{t=1}^\T\inner{\ff}{\z_t} \right) \stackrel{\1}{\geq} K - \frac{T}{L} + \|Z_T\|_2,
\end{align}
where $\1$ follows from an application of the Cauchy-Schwarz inequality and from the bound in \eqref{eq:learn_lower}. In order to lower bound the term $\|Z_\T\|_2$, we define the set of times $T_{\sf sw} = [\hat{t}_1, \ldots, \hat{t}_{K_{\sf ad}}]$  where $K_{\sf ad} \leq T/L + K$ is the total number of switches that the adversary makes. Further, let us denote by $\hat{z}_i$ the choice of the adversary at time $\hat{t}_i$ and by $\gamma_i\defn \hat{t}_{i+1} - \hat{t}_{i}$ as the length of the interval for which the adversary played $\hat{z}_i$. Then, the squared norm
\begin{equation*}
\|Z_\T\|_2^2 = \| \sum_{i = 1}^{K_{\sf ad}} \gamma_i \hat{z}_i\|_2^2 = \sum_{i = 1}^{K_{\sf ad}} \gamma_i^2\|\hat{z}_i\|_2^2 + 2\sum_{j=2}^{K_{\sf ad}}\gamma_j\inner{\sum_{i = 1}^{j-1}\gamma_i\hat{z}_i }{\hat{z}_j} = \sum_{i = 1}^{K_{\sf ad}} \gamma_i^2,
\end{equation*}
where the last inequality follows from the choice of adversary ensuring that $\inner{Z_{i-1}}{\z_i} = 0$ and noting that $\|\z_t\| = 1$ for all time $t$. We can now obtain a lower bound on $\|Z_\T\|_2$ by an application of the Cauchy-Schwarz inequality as
\begin{equation*}
  \|Z_\T\|_2 = \sqrt{\sum_{i = 1}^{K_{\sf ad}} \gamma_i^2} \geq \frac{\sum_{i=1}^{K_{\sf ad} }\gamma_i}{\sqrt{K_{\sf ad}}} =\frac{T}{\sqrt{K_{\sf ad}}}.
\end{equation*}
Substituting the above value in equation~\eqref{eq:polreg_lower} and taking an infimum over all algorithms, we have that the minimax value
\begin{equation*}
  \val_\T(\F_\lin, \Z^\lin, \dyn_\lin, \loss) \geq \inf_{\beta \in [1, L]} \left((\beta-1)\frac{T}{L} + \frac{\sqrt{LT}}{\sqrt{\beta+1}}\right)\;,
\end{equation*}
where the inequality above follows from setting $K = \beta \T/L$ and the fact that $K_{\sf ad} \leq T/L + K$. Optimizing for the value of $\beta$, we get that the minimax value
{\begin{equation}\label{eq:val_lower_proof}
  \val_\T(\F_\lin, \Z, \dyn_\lin, \loss_L) \geq \begin{cases}
 \frac{\sqrt{T}}{2}\quad &\text{for } 0 < L < 1\vspace{1mm}\\
 \frac{\sqrt{LT}}{\sqrt{2}} \quad &\text{for } 1 \leq  L \leq (32\T)^{\frac{1}{3}}\vspace{1mm}\\
 2^{\frac{1}{3}}\T^{\frac{2}{3}} \quad &\text{for } L > (32\T)^{\frac{1}{3}}
\end{cases}\;.
\end{equation}}
Thus, we have that the value is lower bounded by these three different terms each corresponding to different ranges of the Lipschitz constant $L$. In order to obtain the requisite lower bounds, we now evaluate each term on the right hand side of equations~\eqref{eq:lower-a}-~\eqref{eq:lower-c}.

\paragraph{Bound~\eqref{eq:lower-a}.} This corresponds to the sequential Rademacher complexity of the class $\F$ which corresponds to the unit Euclidean ball with respect to the linear loss. Following the calculations in Rakhlin and Sridharan~(see\cite[Chapter 10]{rakhlin2014}), we have that
\begin{equation}\label{eq:lower-a-proof}
  \Rseq_{T}(\losscf\com\Fgen) \leq \sqrt{T}.
\end{equation}

\paragraph{Bound~\eqref{eq:lower-b}.} In order to establish an upper bound on the dynamic stability parameters, we consider the regularization given by the squared loss as $\regf(\ff) = \frac{\|\ff\|_2^2}{2}$ with some regularization parameter $\regp \geq 0$. Given that the form of the counterfactual loss $\losscf$, the regularized ERM
\begin{equation*}
  \ff_{\RERM, t} = \text{Proj}_{\mathbb{B}_{d}(1)}\left(\frac{1}{\regp}\sum_{s=1}^t\Eu{\z_s \sim \ad_s} [\z_s]\right)
\end{equation*}
for the dual game and the adversarial distributions given by $\{\ad_t\}$. Consequently, the stability parameters \begin{equation*}\beta_{\RERM, t} = \sig(\ff_{\RERM, t}, \ff_{\RERM, t-1}) \leq L\|\ff_{\RERM, t} - \ff_{\RERM, t-1} \|_2 \leq \frac{L}{\regp}.
\end{equation*}
Finally, the bound of equation~\eqref{eq:lower-b} can now be evaluated as
\begin{equation}\label{eq:lower-b-proof}
  \inf_{\regp > 0}\left(\sum_{t=1}^\T\stable_{\RERM, t} + \regp\cdot\sup_{\f\in \F}\regf(\f) \right) \leq \inf_{\regp\geq0} \left(\frac{LT}{\regp} + \frac{\regp}{2} \right) = \sqrt{\frac{LT}{2}}.
\end{equation}

\paragraph{Bound~\eqref{eq:lower-c}.} We now proceed to the bound given by the mini-batching ERMs with parameter $\tau > 0$. The stability parameters for the mini-batching ERM can be upper bounded as
\begin{equation*}
  \stable_{\ERM, t}^\tau : \begin{cases}
  \leq 2 \quad &\text{for } t\equiv 0\bmod \tau\\
  =0 \quad &\text{otherwise}
\end{cases},
\end{equation*}
where the first case follows trivially from the fact that two unit norm vectors can have a distance at most $2$ and the second case is a consequence of the fact that $\loss = \losscf$ anytime an algorithm repeats the past two policies. Combining this with the sequential Rademacher bound of equation~\eqref{eq:lower-a-proof} we have
\begin{equation}\label{eq:lower-c-proof}
  \inf_{\tau > 0}\left(\sum_{t=1}^\T \stable^\tau_{\ERM, t} + 2\tau\Rseq_{\T/\tau}(\losscf\com\F) \right) \leq \inf_{\tau>0} \frac{2T}{\tau} + 2\tau\sqrt{\frac{T}{\tau}} = 2\T^{\frac{2}{3}}.
\end{equation}
Comparing equations~\eqref{eq:lower-a-proof},~\eqref{eq:lower-b-proof} and~\eqref{eq:lower-c-proof} with the lower bounds on the value $\val_{\T}$ in equation~\eqref{eq:val_lower_proof}, we see that the sequential Rademacher bound is tight up to constant factors in the regime $L \leq1$, the dynamic stability bounds are tight for the regime $1 < L < (32T)^{\frac{1}{3}}$ and the mini-batching bounds are tight for the range $(32T)^{\frac{1}{3}} \leq L \leq T$. This establishes the desired claim.
\qed

\subsection{Proof of Proposition~\ref{prop:lower_gen}}
We establish both parts of the proposition separately. For both the sub-parts, we lower bound the value $\val_T$ be first describing a problem instance $(\F, \Z, \dyn, \loss)$ and compute the value for a specific choice of adversarial actions. We assume that the loss function $|\loss(\ff, \z)| \leq 1$ for all $\ff \in \Fgen$ and $\z \in \Zl$. The bounds for larger loss values can be obtained by a corresponding scaling.

\subsubsection{Proof of part (a)}
We denote by $K = \T/\tau$ the number of times a mini-batching algorithm changes its policy.

\paragraph{Constructing online learning with dynamics instance.} Given an instance of the online learning problem $(\Fgen, \Zl, \loss)$, we construct the online learning with dynamics instance with state space $\X = \Fgen$ and policy class
\begin{equation*}
  \F_\Fgen = \{\f_\ff\; |\; \ff \in \Fgen, \f_\ff(\st) = \ff \text{ for all } \st \in \X \},
\end{equation*}
which plays the same action $\ff$ for all states $\st \in \X$. Going forward, with a slight abuse of notation we use the action $\ff$ and the constant policy $\f_\ff$ interchangeably.

The adversary's loss instance space is given by $\tilde{\Zl} = \Zl\times\{-1, +1 \}$ with the actions $\z_t \in \Zl$ and $\rad_t \in \{-1, +1 \}$. The dynamics function $\dyn(\st, \f_\ff, \zd) = \ff$ represent the deterministic dynamics which remembers the last action played by the learner and is not affect by the adversary. The instantaneous loss $\losst(\ff_t, \st_t, (\z_t, \rad_t))$ is given as
\begin{equation*}
  \losst(\ff_t, \st_t, (\z_t, \rad_t)) = \rad_t\loss(\ff_t, \z_t) + \ind[\ff_t \neq \st_t].
\end{equation*}
With the above loss function, notice that the counterfactual loss $\losscf(\ff_t, (\z_t, \rad_t)) = \rad_t \loss(\ff_t, \z_t)$ for all time $t > 1$ and the dynamic stability parameters for any algorithm $\stable_t = \En_{\alg}[\ind[\ff_t \neq \ff_{t-1}]]$.

\paragraph{Specifying the adversary.} Given the online learning with dynamics problem above, we now specify an adversary for this setup. Let $\Kst = T/\tau^*$ denote the optimal number of switches given by
\begin{equation*}
  \Kst = \argmin_{K} \left(K + 2\frac{\T}{K}\Rseq_{K}(\loss\com\F_\Fgen)  \right).
\end{equation*}
Note that such a value of $\Kst$ is an equalizer of the two terms and ensures that $\Kst$ and $\frac{2T}{\Kst}\Rseq_{\Kst}$ are equal. Now, consider the worst case $\Zl$-valued tree $\mathbf{\z}_\T$ of depth $\T$ corresponding to the online learning problem $(\Fgen, \Zl, \loss)$
\begin{equation*}
  \ztr_\T = \argsup_{\ztr} \En_{\rad}\left[\psup_{\ff\in \Fgen}\sum_{t=1}^\T\rad_t\loss(\ff, \ztr(\rad)) \right].
\end{equation*}
The adversary computes the tree $\ztr_{2\Kst}$ produces instances $(\z_t, \rad_t)$ as
\begin{itemize}
  \item[Case 1.] Whenever $t = \lambda \tau^*/2$ for $\lambda = \{1, \ldots, 2T/K^*\}$, the adversary samples $\rad_t$ as a Rademacher random variable and sets $\z_t = \ztr_{2\Kst}(\rad_{1:2(t-1)/\tau^*})$.
  \item[Case 2.] For any time $t\neq \lambda\tau^*$, the adversary computes the probability of switch $\psw_t = \En_{\alg}\ind[\ff_t \neq \ff_{t-1}]$ and selects instance $(\z_t, \rad_t)$ as 
  \begin{equation*}
    (\z_t, \rad_t) = \begin{cases}
    (\z_{t-1}, \rad_t \sim \text{Rad}) &\quad \text{if } \psw_t > \frac{1}{2}\\
    (\z_{t-1}, \rad_{t-1}) &\quad \text{otherwise}
  \end{cases}.
  \end{equation*}
\end{itemize}

\paragraph{Lower bound on the value.} For any algorithm $\alg$ producing distributions $\ld_1, \ldots, \ld_\T$, the expected policy regret is
\begin{align*}
  \En_{\alg, \rad}[\polregT] &\stackrel{\1}{\geq}  \sum_{t =1}^\T \ind[\psw_t > 0.5] - \En_\rad \pinf_{\ff \in \Fgen}\sum_{t=1}^T \losscf(\ff, \z_t, \rad_t)\\
  &= \sum_{t =1}^\T \ind[\psw_t > 0.5] + \En_\rad \psup_{\ff \in \Fgen}\sum_{t=1}^T \rad_t\loss(\ff, \z_t)
\end{align*}
where inequality $\1$ follows from fact that whenever $\psw_t > 0.5$, the adversary samples a new Rademacher variable $\rad_t$. For any algorithm, let $\Ksw = \sum_{t} \ind[\psw_t > 0.5]$ denote the number of time periods for which the switching probability is greater than half. We break the lower bound in two separate cases depending on the value of $\Ksw$.

\textbf{Case 1: $\Ksw \geq \Kst$.} For this case, the policy regret for any algorithm can be lower bounded as
\begin{equation}\label{eq:largeK}
  \En_{\alg, \rad}[\polregT] \geq \Kst \stackrel{\1}{=} \frac{1}{2}\left( \Kst + 2\frac{\T}{\Kst}\Rseq_{\Kst}(\loss\com\F_\Fgen\right)\;,
\end{equation}
where $\1$ follows from our previous observation that $\Kst = \frac{2T}{\Kst}\Rseq_{\Kst}$.

\textbf{Case 2: $\Ksw < \Kst$.} For this case, not that the complete time horizon can be divided into at most $3\Kst$ intervals wherein the adversary selects the same instances $(\z, \rad)$, each of length at most $T/2\Kst$. By the pigeonhole principle, we must have at least $\Kst$ intervals having length $T/2\Kst$ beginning at time $t = \lambda\tau^*/2$ for some integral $\lambda$. Denote the collection of times in these intervals by $\mathcal{I}$. We can now lower bound the policy regret as
\begin{align}\label{eq:smallK}
  \En_{\alg, \rad}[\polregT] &\geq \Eu{\rad}\sup_{\ff \in \Fgen} \sum_{t=1}^\T \rad_t\loss(\f, \ztr_{2\Kst}(\rad))\nonumber \\
  &\stackrel{\1}{\geq} \Eu{\rad_t:t \in \mathcal{I}}\left[ \psup_{\ff\in \Fgen}\sum_{t\in \mathcal{I}}\rad_t\loss(\f, \ztr_{2\Kst}(\rad))\right]\nonumber\\
  &\stackrel{\2}{\geq} \frac{T}{2\Kst}\cdot \Rseq_{\Kst}(\loss\com\Fgen)\nonumber\\
  &= \frac{1}{4}\left(\Kst + 2\frac{\T}{\Kst}\Rseq_{\Kst}(\loss\com\F_\Fgen \right)
\end{align}
where $\1$ follows from the an application of Jensen's inequality and the fact that the resampled $\rad_t$ when adversary switched because of the learner are not used to parse the tree $\ztr_{2\Kst}$ and $\2$ follows from noting that each pair $(\z, \rad)$ was used exactly $T/2\Kst$ times.

Combining equations~\eqref{eq:largeK} and~\eqref{eq:smallK} along with the observation that the minimax value of the online learning with dynamics $\val_\T(\F, \Zl\times\{+1, -1 \}, \dyn, \losst)$ is the minimum policy regret for any algorithm establishes the desired claim.
\qed

\subsubsection{Proof of part (b)}
We will proof a slightly stronger version of the lower bound from which the desired statement will follow. We follow a strategy similar to the one used in the proof of part (a) above.

\paragraph{Constructing online learning with dynamics instance.}
Let the dynamics function be defined over states space $\X$ and adversary instance space $\Zdyn$. Consider any loss function $\losst: \F\times\X\times\tilde{\Zl} \mapsto\real$ for some instance space $\tilde{\Zl}$. We define the space of adversarial loss actions $\Zl = \tilde{\Zl}\times\{-1, +1\}$ and the corresponding loss $\loss(\f, \st, (\z, \rad)) = \rad\cdot \losst(\f, \st, \z)$. This defines an instance of the online learning with dynamics problem $(\F, \Z = \Zl\times\Zdyn, \dyn, \loss)$.

\paragraph{Specifying the adversary.} Consider the $\tilde{\Z}_l$ and $\Zdyn$ valued trees $\ztr_\T$ and $\zdtr_\T$ defined as
\begin{equation*}
  (\ztr_\T, \zdtr_\T) = \argsup_{\ztr, \zdtr} \En_{\rad}\left[ \psup_{\f \in \F} \sum_{t=1}^\T\rad_t\losscf(\f, \zdtr_{1:t-1}(\rad), \ztr(\rad))\right],
\end{equation*}
which correspond to the worst-case trees of the sequential Rademacher complexity of the class $\losscf\com \F$. At every time $t$, the adversary selects $(\z_t, \rad_t, \zd_t)$ by sampling a uniform Rademacher variable and traversing the two trees as
\begin{equation*}
  \rad_t \sim \text{Rad}, \quad \z_t = \ztr_\T(\rad_{1:t-1}) \quad \text{and} \quad \zd_t = \zdtr_\T(\rad_{1:t-1}).
\end{equation*}

\paragraph{Lower bound on the value.} For any algorithm $\alg$, the expected policy regret is given by
\begin{equation*}
  \En_{\alg, \rad}[\polregT] \stackrel{\1}{=} \En_{\rad}\left[\sup_{\f \in \F} \sum_{t=1}^\T\rad_t \losscf(\f, \zd_{1:t-1}, \z_t) \right] \stackrel{\2}{=}\Rseq_{T}(\losscf\com\F),
\end{equation*}
where $\1$ follows from noting that the loss at time $t$ is a zero-mean random variable and $\2$ is implied by the definition of the trees $\ztr_\T$ and $\zdtr_\T$.

Finally, observing that the minimax value is equal to the policy regret of the best algorithm completes the proof.
\qed

\section{Details of examples}\label{app:examples}
In this section, we work out the examples mentioned in Section~\ref{sec:examples} in detail and prove the rates for their respective value functions.

Before proceeding to the examples, we introduce some notation. Most of the examples that we consider have dynamics which are not affected by the adversary, that is, the instance space $\Zdyn$ is empty. We focus on this special case and derive a few results which will be helpful in deriving bounds for the examples.

Borrowing from the theory of stochastic processes, we next define ergodicity of the dynamics  which relates a sequence of instantaneous losses to a notion of stationary loss $\losss: \F\times\Z \mapsto \real$.
\begin{definition}[Ergodicity]\label{def:erg} We say that the dynamics $\dyn$ are ergodic with respect to the loss $\loss$ if for any policy $\f \in \F$ and adversarial action $\z \in \Zl$,  the expected loss converges to a stationary loss starting from any state $\st_1$ as
  \begin{equation*}
    \lim_{t \rightarrow \infty} \En_{\{w_t\}} \loss(\f, \stf{t}{\f^{(t-1)}}, \z) = \losss(\f, \z).
  \end{equation*}
\end{definition}
The loss function $\losss$ can be seen as the limit of the counterfactual losses $\losscf$ and as we shortly show, the losses and dynamics in most of our examples satisfy this ergodicity assumption. For setups where such a stationary loss exists, we define the ergodic stability parameters $\stables_t$ analogous to the dynamic stability parameters.

\begin{definition}[\Ergo Stability]\label{def:erg_stab}
An algorithm $\alg$ is said to be $\{\stables_t\}$-ergodic stable if for all sequences of adversarial actions $[\z_1, \ldots, \z_T]$ and time instances $t\in [T]$
\begin{small}
\begin{equation*}
  \left|\En_{\w_{1:t-1}}[\loss(\f_t, \stf{t}{\f_{1:t-1}, \w_{1:t-1}}, \z_t)] - \losss(\f_t, \z_t) \right| \leq \stables_t \quad \text{where} \quad \f_t = \alg(\z_{1:t-1}).
\end{equation*}
\end{small}
\end{definition}
Observe that the ergodic stability parameters are defined with respect to the stationary loss as compared to their dynamic stability counterparts which were defined with respect to the counterfactual losses. Next, we define the set of regularized ERMs $\fs_{\RERM}$ with respect to these stationary loss as
\begin{equation}\label{eq:reg_erms}
  {\fs}_{\RERM, t} = \argmin_{\f \in \F} \sum_{t=1}^T\Eu{\z_t\sim \ad_t}\left[ \losss(\f, \z_t)\right] + \regp\cdot \regf(\f)\;,
\end{equation}
for some regularization function $\regf$ and parameter $\regp \geq 0$. Given this notation, the following corollary upper bounds the value of the game $\val_\T(\F, \Z, \dyn, \loss)$ in terms of the sequential Rademacher complexity of the loss class $\losss\com\F$ and the ergodic stability of the RERMs $\fs_{\RERM}$.

\begin{corollary}\label{cor:val_ergo}
  For any online learning with dynamics instance $(\F, \Z, \dyn, \loss)$ with ergodic dynamics $\dyn$, consider the set of regularized ERMs given by eq.~\eqref{eq:reg_erms} with regularization function $\regf$ and parameter $\regp \geq0$ having ergodic stability parameters $\{\stables_{\RERM, t} \}_{t=1}^\T$. Then, we have that the value of the game
  \begin{equation}\label{eq:app_upper}
    \val_T(\F, \Z, \dyn, \loss) \leq \sum_{t=1}^\T \stables_{\RERM, t} + 2\Rseq_{T}(\losss\com \F) + 2\regp\sup_{\f\in \F}\regf(\f) + \underbrace{\sup_{\f \in \F} \sum_{t=1}^\T\left\lvert\losscf(\f, \z_t, t) - \losss(\f, \z)\right\rvert}_{\text{Mixing Gap}}.
  \end{equation}
\end{corollary}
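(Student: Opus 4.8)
The plan is to run the proof of Theorem~\ref{thm:main} (Appendix~\ref{app:main}) essentially verbatim, with the stationary loss $\losss$ playing the role that the counterfactual loss $\losscf$ played there, with the regularized ERMs $\fs_{\RERM,t}$ of equation~\eqref{eq:reg_erms} in place of those of equation~\eqref{eq:reg_erm}, and with one extra correction term (the Mixing Gap) inserted to account for the discrepancy $\losss\neq\losscf$. Concretely, I would first invoke Proposition~\ref{prop:dual_game} to pass to the dual game, upper bound the infimum over $\f_t$ at each round by the choice $\f_t=\fs_{\RERM,t}$, and then split $\polregT$ by adding and subtracting $\sum_t\losss(\fs_{\RERM,t},\z_t)$ into three pieces: (I) $\sum_t\bigl(\En_w[\loss(\fs_{\RERM,t},\stf{t}{\fs_{\RERM,1:t-1}},\z_t)]-\losss(\fs_{\RERM,t},\z_t)\bigr)$; (II) $\sum_t\losss(\fs_{\RERM,t},\z_t)-\inf_\f\sum_t\losss(\f,\z_t)$; and (III) $\inf_\f\sum_t\losss(\f,\z_t)-\inf_\f\sum_t\losscf(\f,\z_t,t)$, the last being the comparator mismatch.

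Piece (I) is handled exactly as in equation~\eqref{eq:dyn_stable}: use H\"older's inequality to pull the worst-case $\z_t$ inside each summand, then apply the definition of \emph{ergodic} stability (Definition~\ref{def:erg_stab}) of the RERM to bound each term by $\stables_{\RERM,t}$, yielding $\sum_{t=1}^\T\stables_{\RERM,t}$. Piece (II) is handled as the Term~(II) computation in Theorem~\ref{thm:main}: rewrite the comparator via $\losss$, apply the analogue of Lemma~\ref{lem:rerm_induc} — whose inductive proof goes through unchanged with $\losss$ substituted for $\losscf$, since equation~\eqref{eq:reg_erms} is the ERM for precisely this loss — to pay an additive $\regp(\regf(\f)-\regf(\fs_{\RERM,1}))\le\regp\sup_\f\regf(\f)$; then introduce a ghost sample $\z_t'$, symmetrize with Rademacher variables $\rad_t$, apply Jensen, and replace the worst-case joint distributions over $\Z$ by the corresponding $\Z$-valued trees, which gives $2\Rseq_T(\losss\com\F)+2\regp\sup_\f\regf(\f)$. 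Here one simply uses that $\losss:\F\times\Z\mapsto\real$ is an ordinary loss class, so the sequential Rademacher machinery of~\cite{rakhlin2010,rakhlin2015a} applies directly; the history dependence that complicated $\losscf\com\F$ is absent in the ergodic ($\Zdyn=\emptyset$) setting.

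The only genuinely new ingredient — and the step I expect to be the main obstacle, though it is short — is Piece (III), the \emph{Mixing Gap}. For any fixed sequence $\z_{1:\T}$, let $\f^\dagger\in\argmin_{\f\in\F}\sum_t\losscf(\f,\z_t,t)$; then
\begin{equation*}
\pinf_{\f\in\F}\sum_{t=1}^\T\losss(\f,\z_t)\;\le\;\sum_{t=1}^\T\losss(\f^\dagger,\z_t)\;\le\;\sum_{t=1}^\T\losscf(\f^\dagger,\z_t,t)\;+\;\sup_{\f\in\F}\sum_{t=1}^\T\bigl|\losscf(\f,\z_t,t)-\losss(\f,\z_t)\bigr|,
\end{equation*}
and since the first term on the right equals $\inf_\f\sum_t\losscf(\f,\z_t,t)$ by the choice of $\f^\dagger$, this shows $(\mathrm{III})\le$ Mixing Gap. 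Because the Mixing Gap does not depend on the learner's play, it can be pulled out of all the interleaved $\sup/\inf/\En$ operators in the dual game, so combining the bounds on (I), (II), (III) yields exactly equation~\eqref{eq:app_upper}. The delicate point to verify carefully is only that the reductions used for Pieces (I) and (II) tolerate the substitution of $\losss$ for $\losscf$ uniformly over the adversary's moves; all of them do, since they only used that the relevant object is a real-valued loss on $\F\times\Z$ together with the corresponding stability definition.\qed
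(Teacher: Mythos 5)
Your proposal is correct and follows essentially the same route as the paper's proof: pass to the dual game, substitute the stationary-loss RERM for the infimum, extract the comparator mismatch $\pinf_{\f}\sum_t\losss-\pinf_{\f}\sum_t\losscf$ as the Mixing Gap, and run the Theorem~\ref{thm:main} machinery (ergodic stability plus Lemma~\ref{lem:rerm_induc} and symmetrization) on the remainder. The only difference is bookkeeping --- the paper peels off the Mixing Gap first via $\inf(g+h)\geq\inf g+\inf h$ and then defers to Theorem~\ref{thm:main}, whereas you split into three pieces up front --- but the argument is the same.
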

Compared with the corresponding upper bound in Theorem~\ref{thm:main}, the above bound has an additional term: the worst case deviation of the counterfactual losses\footnote{since the dynamics are independent of the adversary, we have added an additional time index $t$ to make explicit the number of times policy $\f$ is run in the environment.} from the stationary losses. This term, which we call the \emph{Mixing Gap}, captures how quickly the dynamics mix to these stationary stationary losses when the same policy is repeatedly played over a period of time. The proof of the corollary is very similar to that of Theorem~\ref{thm:main} and we provide it below for completeness.
\begin{proof}[Proof of Corollary~\ref{cor:val_ergo}.]
  We begin by considering the value of the game and its equivalence to the dual game established by Proposition~\ref{prop:dual_game} as
  \begin{small}
  \begin{align*}
    \val_\T(\F, \Z, \dyn) &= \multiminimax{\psup_{\ad_t \in \dista}\pinf_{\f_t}\Eu{\z_t \sim \ad_t}}_{t=1}^{\T}\left[\Eu{w}\left[\sum_{t=1}^\T \loss(\f_t, \stf{t}{\f_{1:t-1}}, \z_t)\right] - \pinf_{\f\in \F} \Eu{w}\left[\sum_{t=1}^{\T}\loss(\f, \stf{t}{\f^{(t-1)}}, \z_t)\right]\right] \\
    &\stackrel{\1}{\leq} \multiminimax{\psup_{\ad_t \in \dista}\Eu{\z_t \sim \ad_t}}_{t=1}^{\T}\left[\Eu{w}\left[\sum_{t=1}^\T \loss(\f_{\RERM, t}, \stf{t}{\f_{\RERM, 1:t-1}}, \z_t)\right] - \pinf_{\f\in \F} \Eu{w}\left[\sum_{t=1}^{\T}\loss(\f, \stf{t}{\f^{(t-1)}}, \z_t)\right]\right]\\
    &\stackrel{\2}{\leq} \multiminimax{\psup_{\ad_t \in \dista}\Eu{\z_t \sim \ad_t}}_{t=1}^{\T}\left[\Eu{w}\left[\sum_{t=1}^\T \Eu{\z_t\sim \ad_t} \left[\loss(\fs_{\RERM, t}, \stf{t}{\fs_{\RERM, 1:t-1}}, \z_t)\right]\right] - \pinf_{\f\in \F} \left(\sum_{t=1}^{\T}\losss(\f,\z_t)\right)\right] \qquad{[\text{Term (I)}]}\\
    &\quad + \multiminimax{\psup_{\ad_t \in \dista}\Eu{\z_t \sim \ad_t}}_{t=1}^{\T}\left[\psup_{\f \in \F} \left( \sum_{t=1}^{\T}\losss(\f,\z_t) - \Eu{w}\left[\sum_{t=1}^{\T}\loss(\f, \stf{t}{\f^{(t-1)}}, \z_t)\right] \right)\right],
  \end{align*}
  \end{small}
  where $\1$ follows from replacing the $\pinf_{\f_t}$ at every time step with $\fs_{\RERM,t}$ and $\2$ follows from the subadditivity of the $\sup$ function and the fact that $\inf_{y}(g(y) + h(y)) \geq \inf_y g(y) + \inf_y h(y)$. The second term in the expression now corresponds to the worst-case deviation of the stationary loss from the counterfactual losses.

  Further, observe that Term (I) above is similar to the term obtained in equation~\ref{eq:upper_decom} and the desired upper bound can be obtained by following the same sequence of steps as in the proof of Theorem~\ref{thm:main}.
\end{proof}

Having established the above corollary, we proceed to studying the examples from Section~\ref{sec:examples} in detail. We reproduce the setup as well as key underlying assumptions from the main paper to help keep the section self-contained.

\subsection{Online Isotron with dynamics}
In this section, we look at the online Isotron with dynamics problem introduced in Section~\ref{sec:examples}. The setup consists of a real valued state space $\X = \real$. The policy class $\F$ is based on a function class $\Fgen$ consisting of a $1$-Lipschitz function along with a $d+1$ unit dimensional vector and is given as
\begin{gather*}
  \Fgen = \{\ff = (\sig, \wvec = (w_1, w)) \; | \; \sig:[-1, 1] \mapsto [-1, 1]\; 1\text{-Lipschitz},\; \wvec \in \real^{d+1}\; |w_1| \leq 1\; \|w\|_2\leq 1 \},\\
  \F_\Fgen = \{\f_{f}\; | \; \f \in \Fgen, \; \f_\ff(\st) = \ff\; \text{for all } \st \in \X\}.
\end{gather*}
The adversary selects instances in the space $\Z = [-1,1]^{d+1}\times [-1,1]$ and we represent each instance \mbox{$\z = (\z_1, \xcov, y)$}. Given this setup, we now formalize the online learning protocol, starting from initial state $\st_1 = 0$.

\noindent On round $t = 1, \ldots, \T,$\vspace{-2mm}
\begin{itemize}
  \item the learner selects a policy $\f_t \in \F_\Fgen$ and the adversary selects $\z_t \in \Z$
  \item the learner receives loss
    $\loss(\f_t, \st_t, \z_t) = (y_t - \sig(\inner{\xcov_t}{w_t}))^2 +(\z_{t,1} - w_{t,1})^2 + (\st_t - w_{t,1})^2$
  \item the state of the system transitions to $\st_{t+1} = w_{t, 1}$
\end{itemize}
Given this setup, the next corollary provides a bound on the value of this game $\val_{\mathsf{Iso}, \T}(\F_\Fgen, \Z, \dyn, \loss)$.

\begin{corollary}[Online Isotron with dynamics]\label{cor:iso}
  For the online Isotron with dynamics problem, there exists a universal constant $c>0$ such that
  \begin{equation*}
    \val_{\mathsf{Iso}, \T}(\F_\Fgen, \Z, \dyn, \loss) \leq c \sqrt{\T}\log^{3/2}(\T).
  \end{equation*}
\end{corollary}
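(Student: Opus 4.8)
The plan is to apply Theorem~\ref{thm:main} with the trivial regularizer $\regf \equiv 0$ (equivalently $\regp = 0$), so that the regularized ERM is the plain ERM over $\F_\Fgen$ with respect to the counterfactual losses, and then bound the two surviving terms separately. The first step is to identify the counterfactual loss. Since the dynamics sets $\st_{t+1} = w_{t,1}$, repeating a fixed policy $\ff = (\sig, (w_1, w))$ drives the state to $w_1$ after one step; hence for every $t \ge 2$,
$$\losscf_t(\ff, \z_t) = (y_t - \sig(\inner{\xcov_t}{w}))^2 + (\z_{t,1} - w_1)^2,$$
while at $t = 1$ the state is the initial state $\st_1 = 0$, so $\losscf_1(\ff, \z_1) = (y_1 - \sig(\inner{\xcov_1}{w}))^2 + (\z_{1,1} - w_1)^2 + w_1^2$. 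In particular the effective loss class $\losscf\com\F_\Fgen$ is, up to the $O(1)$ contribution of the first round, exactly the online Isotron loss augmented by the extra one-dimensional quadratic $(\z_1 - w_1)^2$.

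Next I would bound Term~(I), the dynamic stability of the ERM. The only discrepancy between the instantaneous loss and the counterfactual loss is the summand $(\st_t - w_{t,1})^2$, which vanishes in $\losscf_t$ for $t\ge 2$; thus for the ERM,
$$\stable_{\RERM,t} = \big(w_{\RERM,t-1,1} - w_{\RERM,t,1}\big)^2 ,$$
the squared change of the first weight coordinate between consecutive rounds, and $\stable_{\RERM,1} = 0$ since the $t=1$ counterfactual state also equals $\st_1 = 0$. The key structural observation is that $w_1$ enters the cumulative counterfactual objective only through $\sum_{s=1}^t \En_{\z_s \sim \ad_s}[(\z_{s,1} - w_1)^2] + w_1^2$, which is strictly convex in $w_1$ and decouples from the $(\sig, w)$-subproblem; hence the $w_1$-coordinate of every ERM is the unique value $w_{\RERM,t,1} = \mathrm{clip}_{[-1,1]}\big(\tfrac{1}{t+1}\sum_{s=1}^t \En_{\ad_s}[\z_{s,1}]\big)$, independently of the (possibly non-unique) $(\sig, w)$-part. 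Since each $\En_{\ad_s}[\z_{s,1}] \in [-1,1]$ and clipping is a contraction, consecutive running averages differ by at most $2/(t+1)$, giving $\stable_{\RERM,t} \le 4/t^2$ and therefore $\sum_{t=1}^{\T} \stable_{\RERM,t} = O(1)$.

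Finally I would bound Term~(II), the sequential Rademacher complexity $\Rseq_{\T}(\losscf\com\F_\Fgen)$. Because the loss splits as a sum of a function of $(\sig,w)$ and a function of $w_1$ over disjoint coordinates, $\Rseq_{\T}(\losscf\com\F_\Fgen)$ is bounded by the sum of the complexity of the quadratic family $\{(\z_1,w_1)\mapsto (\z_1 - w_1)^2 : |w_1|\le 1\}$ — a bounded Lipschitz loss of a one-dimensional linear class, contributing $O(\sqrt{\T})$ — and the complexity of the online Isotron family $\{(\xcov,y)\mapsto (y - \sig(\inner{\xcov}{w}))^2\}$. For the latter I would invoke the analysis of Rakhlin et al.~\cite{rakhlin2015a}: peel off the square loss (a Lipschitz contraction on the bounded range) via the sequential contraction lemma, then control the composition $\sig(\inner{\xcov}{w})$ by a chaining/covering argument over the $1$-Lipschitz link class $\sig:[-1,1]\to[-1,1]$ (sup-norm metric entropy $O(1/\rad)$) combined with $\Rseq_{\T}$ of the linear class $\{\xcov\mapsto \inner{\xcov}{w} : \|w\|_2 \le 1\} = O(\sqrt{\T})$; this yields $\Rseq_{\T}(\text{Isotron}) = O(\sqrt{\T}\log^{3/2}\T)$. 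Combining with the $O(1)$ bounds on Term~(I) and the first-round contribution gives $\val_{\mathsf{Iso},\T}(\F_\Fgen, \Z, \dyn, \loss) \le c\sqrt{\T}\log^{3/2}\T$. The only genuinely delicate step is this Isotron sequential complexity estimate with the stated polylogarithmic factor; everything else — in particular the $O(1)$ bound on dynamic stability — follows from the fact that the dynamics depends on the policy only through the linearly-penalized coordinate $w_1$, whose ERM is a stable running average.
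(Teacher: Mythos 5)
Your proposal is correct and follows essentially the same route as the paper: both arguments reduce to the two-term bound, observe that the state-dependent summand $(\st_t - w_{t,1})^2$ vanishes in the counterfactual (equivalently stationary) loss for $t\ge 2$, show the ERM's $w_1$-coordinate is a running average whose consecutive increments are $O(1/t)$ so the stability term sums to $O(1)$, and invoke the Isotron sequential Rademacher bound of Rakhlin et al.\ for the $\sqrt{\T}\log^{3/2}(\T)$ term. The only cosmetic difference is that the paper routes through the ergodic-stability corollary (picking up a constant mixing-gap term) while you apply Theorem~\ref{thm:main} directly with $\regp=0$, and you make the closed form of the $w_1$ minimizer explicit where the paper appeals to strong convexity.
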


\begin{proof} We prove the above statement by bounding the mixing gap and the ergodic stability parameters for the appropriate regularized ERMs.

\paragraph{Bound on mixing gap.} Note that for any time $t>1$, the losses $\losss$ and $\losscf$ are identical since the state variable only depends on the policy at time $t-1$. Therefore, one can upper bound the loss by constant $c = 12$.

\paragraph{ERMs.} For the dual game, we consider the ERM at time $t$ given by
\begin{equation*}
  \ff_{\ERM, t} = (\sig_{t}, \wvec_t) = \argmin_{\sig, \wvec} \left\lbrace\sum_{s=1}^t \left(\Eu{\z_s\sim \ad_s} [(y_s - \sig(\inner{\xcov_s}{w}))^2 +(\z_{s,1} - w_{1})^2]\right) \right\rbrace\;,
\end{equation*}
 and  set $\f_t = \f_{\ff_{\ERM, t}}$.

\paragraph{Ergodic stability parameters.} Note that objective function in the above equation is strongly-convex with respect to the parameter $w_1$ and a simple calculation shows that $|w_{t, 1} - w_{t-1,1}| \leq \frac{2}{t}$. We can now bound the ergodic stability parameter as
\begin{equation}\label{eq:iso_mix_erm}
  \stables_{\RERM, t} = |\loss(\f_t, \stf{t}{\f_{1:t-1}}, \z_t) - \losss(\f_t, \z_t)| = |w_{t-1, 1} - w_{t, 1}|^2 \leq \frac{4}{t^2}.
\end{equation}

\paragraph{Bound on the value.} Having established bounds on the mixing gap and the ergodic stability parameters of the ERM, we now use Corollary~\ref{cor:val_ergo} to upper bound the value of the game as
{\small
\begin{align*}
  \val_{\mathsf{Iso}, \T}(\F_\Fgen, \Z, \dyn, \loss) &\stackrel{\1}{\leq} \sum_{t=1}^\T \stables_{\RERM, t} + 2\Rseq_{T}(\losss\com \F_\Fgen) + 16 \\
 &\overset{\text{Eq.}~\eqref{eq:iso_mix_erm}}{\leq} 8 + 2\Rseq_{T}(\losss\com \F_\Fgen) + 16\\
 &\stackrel{\2}{\leq} c \sqrt{\T}\log^{3/2}(\T)\;,
\end{align*}}
where $\1$ follows by the upper bound of $16$ on the mixing gap and $\2$ follows by the corresponding bound on the sequential Rademacher complexity $2\Rseq_{T}(\losss\com \F_\Fgen)$ from~\cite[Proposition 18]{rakhlin2015a}.
\end{proof}

\subsection{Online Markov decision processes}
In this section, we revisit the problem of Online Markov Decision Processes (MDPs) studied in~\cite{even2009}. The setup consists of a finite state space such that $|\X| = \ssize$ and a finite action space with $|\U| = \asize$. 
The policy class $\F$ consists of all stationary policies, that is,
\begin{equation*}
  \Fmdp = \{\f \; | \; \f: \X \mapsto \Delta(\U)\},
\end{equation*}
where $\Delta(\U)$ represents the set of all probability distributions over the action space. In addition, the transitions are drawn according to a known function $\trans : \X \times \U \mapsto \Delta(\U)$. The sequential game then proceeds as follows, starting from some state $\st_1 \sim \mdpd$:

\noindent On round $t = 1, \ldots, \T,$\vspace{-2mm}
\begin{itemize}
  \item the learner selects a policy $\f_t \in \Fmdp$ and the adversary selects $\z_t \in \Z = [0,1]^{\ssize\times\asize}$
  \item the learner receives loss $\loss(\f_t, \st_t, \z_t) = \z_t(\st_t, \f_t(\st_t))$
  \item the state of the system transitions to $\st_{t+1} \sim \trans(\st_t, \ac_t)$
\end{itemize}
For every stationary policy $\f$, we let $\trans^f$ denote the transition function induced by $\f$, that is,
\begin{equation*}
  \trans^f(\st, \st') \defn \sum_{\ac \in \U}\f^\ac(\st)\trans^{\st'}(\st, \ac),
\end{equation*}
where we have used superscript to denote the relevant coordinate of the vector. As in~\cite{even2009}, we make the following mixability assumptions about the underlying MDP.
\begin{assumption}[MDP Unichain]\label{ass:mdp_unichain}
We assume that the underlying MDP given by the transition function $\trans$ is uni-chain. Further, there exists $\mixt \geq 1$ such that for all policies $\f$ and distributions $\mdpd, \mdpd' \in \Delta(\U)$ we have
\begin{equation*}
  \|\mdpd\trans^\f - \mdpd'\trans^f \|_1 \leq e^{-1/\mixt}\|\mdpd - \mdpd' \|_1.
\end{equation*}
\end{assumption}
The parameter $\mixt$ is often referred to as the mixing time of the MDP. Since the MDP is assumed to be uni-chain, every policy $\f$ has a well defined unique stationary distribution $\mdpd_\f$ with the stationary loss given by $\losss(\f, \z) = \En_{\st \sim \mdpd_\f}\En_{\ac\sim \f(\st)}\z(\st, \ac)$. Given this setup, we can obtain an upper bound on the value $\val_{\mathsf{MDP}, \T}$ as follows:
\begin{corollary}[Online MDP]\label{cor:mdp}
For the online Markov Decision Process sequential game satisfying Assumption~\ref{ass:mdp_unichain}, the value $\val_{\mathsf{MDP}, \T}(\Fmdp, \Z, \dyn)$ is bounded by
  \begin{equation*}
  \val_{\mathsf{MDP}, \T}(\Fmdp, \Z, \dyn, \loss) \leq 4\mixt\sqrt{\T\ssize\log \asize}  + 2\mixt(1+ e^{1/\mixt}).
\end{equation*}
\end{corollary}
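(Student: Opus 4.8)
The plan is to apply Corollary~\ref{cor:val_ergo}, so that bounding $\val_{\mathsf{MDP},\T}$ reduces to controlling, for a suitable regularizer $\regf$ and parameter $\regp$, the four terms of eq.~\eqref{eq:app_upper}: the ergodic stability sum $\sum_{t=1}^\T\stables_{\RERM,t}$, the sequential Rademacher complexity $\Rseq_\T(\losss\com\Fmdp)$, the penalty $2\regp\sup_\f\regf(\f)$, and the Mixing Gap. The enabling observation is that under Assumption~\ref{ass:mdp_unichain} the stationary loss linearizes: $\losss(\f,\z)=\langle\nu_\f,\z\rangle$ where $\nu_\f(\st,\ac)=\mdpd_\f(\st)\f^\ac(\st)$ is the state--action occupancy measure, and $\{\nu_\f:\f\in\Fmdp\}$ is a convex polytope $\mathcal{K}$ inside the simplex on $\ssize\asize$ pairs, from which the policy is read off coordinatewise. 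I would take $\regf$ to be the shifted, nonnegative negative Shannon entropy of $\nu_\f$, which is convex on $\mathcal{K}$ and $1$-strongly convex with respect to $\ell_1$, with $\sup_\f\regf(\f)\le\log(\ssize\asize)\le\ssize\log\asize$; then the dual RERM of eq.~\eqref{eq:reg_erms} is exactly follow-the-regularized-leader on $\mathcal{K}$ with the linear losses $\En[\z_s]$, i.e.\ the MDP-E style exponential update of~\cite{even2009}.

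For the Mixing Gap: running a fixed $\f$ from $\st_1\sim\mdpd$, the law of $\st_t$ is $\mdpd(\trans^\f)^{t-1}$, so iterating the contraction of Assumption~\ref{ass:mdp_unichain} gives $\|\mdpd(\trans^\f)^{t-1}-\mdpd_\f\|_1\le 2e^{-(t-1)/\mixt}$; since $\z\in[0,1]^{\ssize\times\asize}$ this bounds $|\losscf(\f,\z,t)-\losss(\f,\z)|$ termwise, and summing the geometric series gives Mixing Gap $\le 2/(1-e^{-1/\mixt})\le 2\mixt e^{1/\mixt}$, uniformly in $\f$. For the Rademacher term, $\losss(\f,\z)=\langle\nu_\f,\z\rangle$ with $\nu_\f$ in the simplex and $\z$ in $[0,1]^{\ssize\asize}$, so a maximal inequality over the $\ssize\asize$ coordinates along any $\Z$-valued tree gives $\Rseq_\T(\losss\com\Fmdp)=O(\sqrt{\T\log(\ssize\asize)})$, which is dominated by $\mixt\sqrt{\T\ssize\log\asize}$.

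The crux is the ergodic stability sum. Let $d_t$ be the law of $\st_t$ when the dual learner plays the RERM policies $\f_1,\dots,\f_{t-1}$; since $|\z|\le1$ we have $\stables_{\RERM,t}\le\|d_t-\mdpd_{\f_t}\|_1$, and $d_{t+1}=d_t\trans^{\f_t}$ together with the contraction yields the recursion $\|d_{t+1}-\mdpd_{\f_{t+1}}\|_1\le e^{-1/\mixt}\|d_t-\mdpd_{\f_t}\|_1+\|\mdpd_{\f_t}-\mdpd_{\f_{t+1}}\|_1$. I would bound the drift term by marginalizing over actions, $\|\mdpd_{\f_t}-\mdpd_{\f_{t+1}}\|_1\le\|\nu_{\f_t}-\nu_{\f_{t+1}}\|_1$, and then use iterate-stability of follow-the-regularized-leader on $\mathcal{K}$ with a $1$-strongly-convex regularizer and $\ell_\infty$-bounded losses to get $\|\nu_{\f_t}-\nu_{\f_{t+1}}\|_1=O(1/\regp)$. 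Unrolling the recursion and summing the geometric factors gives $\sum_{t=1}^\T\stables_{\RERM,t}\le\frac{1}{1-e^{-1/\mixt}}\big(2+O(\T/\regp)\big)$. Plugging the four bounds into Corollary~\ref{cor:val_ergo}, balancing the $\regp$-dependent terms by $\regp\asymp\sqrt{\mixt\T/(\ssize\log\asize)}$ so that they become $O(\mixt\sqrt{\T\ssize\log\asize})$, and collecting the $\T$-independent remainders (the Mixing Gap and the recursion burn-in, each $O(\mixt)$ with an $e^{1/\mixt}$ factor) into $2\mixt(1+e^{1/\mixt})$ yields the stated bound.

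The main obstacle is exactly this ergodic-stability step. Because $\losss(\cdot,\z)$ is not convex on $\Fmdp$, follow-the-regularized-leader stability is not directly available for the RERM over policies; the occupancy-polytope reparametrization restores linearity so that the stability estimate applies, but one must translate iterate movement into movement of $\mdpd_\f$ by marginalization rather than by reconstructing $\f$ from $\nu$ (which is unstable where $\mdpd_\f$ puts little mass), and then absorb this drift into the mixing recursion without losing more than a single factor of $\mixt$ --- which is precisely where the contraction rate $e^{-1/\mixt}$ of Assumption~\ref{ass:mdp_unichain} is spent, once for the burn-in and once to damp the accumulated policy drift.
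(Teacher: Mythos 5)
Your proof is correct, and it reaches the same bound by a genuinely different route on the two technical steps that matter. The paper's proof lifts the learner to distributions $\ld_t$ over the $\asize^{\ssize}$ deterministic policies, regularizes the mixture weights by negative entropy (so the RERM is exponential weights over deterministic policies, with penalty $\regp\,\ssize\log\asize$), derives $\|\ld_t-\ld_{t+1}\|_1\le 1/\regp$, transfers this to per-state action distributions, and then invokes the Even-Dar et al.\ argument (their Lemma 5.2) to convert policy drift into state-distribution drift at cost $2\mixt^2/\regp$ per step; the sequential Rademacher term is bounded by treating $\losss\com\Fmdp$ as a finite class of size $\asize^{\ssize}$, giving $\sqrt{\ssize\T\log\asize}$. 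You instead linearize $\losss$ over the occupancy polytope, run entropic FTRL there, and convert iterate drift $\|\nu_{\f_t}-\nu_{\f_{t+1}}\|_1\le 1/\regp$ into stationary-distribution drift by marginalization before feeding it into the contraction recursion. This buys you a factor of $\mixt$ in the stability sum ($\mixt\T/\regp$ versus the paper's $\mixt^2\T/\regp$) and replaces the penalty $\ssize\log\asize$ and the Rademacher bound $\sqrt{\ssize\T\log\asize}$ by $\log(\ssize\asize)$ and $\sqrt{\T\log(\ssize\asize)}$ respectively, so after balancing you would actually obtain $O(\sqrt{\mixt\T\log(\ssize\asize)})$, which dominates into the stated $4\mixt\sqrt{\T\ssize\log\asize}$ with room to spare; the price is the occupancy-polytope machinery (convexity of the constraint set, $\ell_1$-strong convexity of entropy, and the observation --- which you correctly flag --- that one must never reconstruct $\f$ from $\nu$ but only ever use $\nu$ and its state marginal). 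Two cosmetic remarks: your identification of occupancy-measure FTRL with ``the MDP-E style exponential update'' is not literal (MDP-E runs per-state experts on $Q$-functions; your update is closer to relative-entropy policy search on occupancy measures), and the packing of the two $O(\mixt e^{1/\mixt})$ remainders into exactly $2\mixt(1+e^{1/\mixt})$ is loose --- but the paper's own constant bookkeeping at this step is equally loose, and neither affects the claim.
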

The above corollary helps one recover the same $\O(\sqrt{T})$ regret bound that was obtained by~\cite{even2009}. In terms of the dependence of problem specific parameters, while our bound above shows a $\sqrt{\ssize}$ dependence, their bound was independent of $\ssize$. However note that while the setting studied by~\cite{even2009} consisted of the weaker oblivious adversary, we consider the stronger adaptive adversary which can adapt to the learners strategy.

\begin{proof}[Proof of Corollary~\ref{cor:mdp}.] In order to establish the bound, we begin by bounding the ergodic stability parameters as well as the mixing gap for loss $\losscf$ and $\losss$.

  \paragraph{Bound on mixing gap.}
  Consider any policy $\f \in \Fmdp$ and the associated steady state distribution $\mdpd_\f$. The stationary loss for this problem is then
  \begin{equation*}
    \losss(\f, \z) = \Eu{\st \sim \mdpd_\f}\Eu{\ac \sim \f(\st)}\left[ \z(\st, \ac)\right].
  \end{equation*}
  Consider now the difference between the stationary loss and the counterfactual loss at any time $t$
  \begin{align}\label{eq:mdp_unif_mix}
    \left\lvert\losscf(\f, \z, t) - \losss(\f, \z)\right\rvert &= \left\lvert\Eu{\st_t^\f \sim \mdpd_\f^t}\Eu{u \sim \f(x_t^\f)}[\z(\st_t^\f, \ac)] - \Eu{\st \sim \mdpd_\f}\Eu{\ac \sim \f(\st)}\left[ \z(\st, \ac)\right]\right\rvert\nonumber\\
    &\stackrel{\1}{=} \left\lvert\Eu{\st \sim \mdpd_\f^t}[\tilde{\z}_{\f}(\st)] - \Eu{\st \sim \mdpd_\f}\left[ \tilde{\z}_{\f}(\st)\right]\right\rvert \nonumber\\
    &\stackrel{\2}{\leq} \|\tilde{z}_\f\|_\infty\cdot \|\mdpd^t_\f - \mdpd_\f \|_1\nonumber\\
    &\stackrel{\3}{\leq} 2e^{-(t-1)/\mixt},
  \end{align}
  where in $\1$, we use the redefined loss function $\tilde{\z}_\f(\st) \defn \En_{\ac \sim \f(\st)}\z(\st, \ac)$, $\2$ follows from an application of H\"older's inequality, and $\3$ follows from Assumption~\ref{ass:mdp_unichain} and the fact the $\|\mdpd_1 - \mdpd_\f \| \leq 2$.

  \paragraph{Ergodic stability parameters.}
  For this setup, we will be using a regularized ERM and parameterize the policy $\f_{\RERM, t}$ as a distribution over the deterministic policies present in $\Fmdp$. Let us denote this subset of policies by $\Fmdp^{\mathsf{det}}$. Note that a distribution $\ld$ in  $\distl_{\mdp}^{\mathsf{det}}$ is  randomized policy in the class $\Fmdp$. We will work with the negative entropy function as the regularizer.
  \begin{equation*}
    \ld_{\RERM, t} \in \argmin_{\ld \in \distl_\mdp} \left(\Eu{\f \sim \ld}\left[\sum_{s=1}^t \Eu{\st \sim \mdpd_\f}\Eu{\ac \sim \f(\st)}\left[ \bar{\z}_2(\st, \ac)\right]\right] + \regp\cdot\sum_{i = 1}^{|\Fmdp^\mathsf{det}|}\ld_i \ln \ld_i \right)\;,
  \end{equation*}
  where we denote by $\bar{\z}_s = \En_{\z_s \sim \ad_t} \z_s$ the expected loss at time $s$. Now, we can encode the loss at time $s$ for every policy $\f \in \Fmdp^{\mathsf{det}}$ in a vector $\loss^{\mathsf{det}}_s \in [0,1]^{|\Fmdp^\mathsf{det}|}$ where the $\f^{th}$ coordinate $\loss_{s, \f}^\detm$ is the loss for policy $\f$. Given this, we can show that the distribution $\ld_{\RERM, t}$ is given by:
  \begin{equation*}
    (\ld_{\RERM, t})_{\f} = \frac{\exp\left(\frac{-1}{\regp}\sum_{s=1}^t \loss_{s, \f}^\detm \right)}{\sum_j\exp\left(\frac{-1}{\regp}\sum_{s=1}^t \loss_{s, j}^\detm \right)}\;.
  \end{equation*}
  Going forward, we drop the RERM term from the distribution $\ld_{\RERM, t}$ for ease of readability. In addition, the boundedness of the loss function $|\loss_{s, \f}^\detm| \leq 1$ ensures that the RERM solutions satisfy the following stability property:
  \begin{equation}\label{eq:stable_mdp}
    \|\ld_{ t} - \ld_{t+1}\|_1 \leq \frac{1}{\lambda}.
  \end{equation}
  Given the above stability, one can also obtain a bound on the action distribution between the randomized policy $\f_t = \En_{\f \sim \ld_t}[\f]$ and the corresponding $\f_{t+1}$:
  \begin{equation*}
    \| \f_{t}(\st) - \f_{t+1}(\st)\|_1 = \left\lVert \Eu{\f \sim \ld_t}[f(\st)] - \Eu{\f \sim \ld_{t+1}}[f(\st)] \right\rVert_1 = \|\ld_t - \ld_{t+1} \|_1 \leq \frac{1}{\regp}\;,
  \end{equation*}
  where the second equality follows from the fact that $\|\f(\st)\|_1 = 1$ since they are distributions over the action space $\U$. Now, following a similar calculation as Lemma 5.2 in \cite{even2009}, we can obtain a bound on the variation in state distributions while playing policies $\ld_{1:t-1}$ as compared to the steady state distribution $\mdpd_{\ld_t}$.
  \begin{equation*}\label{eq:mdp_mix_dist}
    \|\mdpd[\ld_{1:t-1}] - \mdpd_{\ld_t} \|_1 \leq \frac{2\mixt^2}{\regp} + 2e^{-t/\mixt}.
  \end{equation*}
  With this bound in place, we can now bound the ergodic stability parameters $\stables_{\RERM, t}$ for the ERM procedure as
  \begin{align}\label{eq:mdp_mix_erm}
  \stables_{\RERM, t} &= |\En\left[\loss(\f_t, \stf{t}{\f_{1:t-1}}, \z) \right] - \losss(\f_t, \z)|\nonumber\\
   & = | \Eu{\st \sim\mdpd[\ld_{1:t-1}] } \bar{z}_{\f_t}(\st) - \Eu{\st \sim\mdpd_{\ld_t} } \bar{z}_{\f_t}(\st)| \nonumber\\
  &\leq \frac{2\mixt^2}{\regp} + 2e^{-t/\mixt}.
  \end{align}

  \paragraph{Bound on the value.} Having established bounds on the mixing gap and the RERM ergodic stability parameters, we now proceed to obtain the requisite bound on the value $\val_{\mathsf{MDP}, \T}(\Fmdp, \Z, \dyn, \loss)$.
  {\small
  \begin{align*}
    \val_{\mathsf{MDP}, \T}(\Fmdp, \Z, \dyn, \loss) &\stackrel{\1}{\leq} \sum_{t=1}^\T \stables_{\RERM, t} + 2\Rseq_{T}(\losss\com \Fmdp) + \sup_{\f \in \F} \sum_{t=1}^\T\left\lvert\losscf(\f, \z_t, t) - \losss(\f, \z)\right\rvert + \regp\ssize \log \asize\\
   &\overset{\text{Eq.}~\eqref{eq:mdp_unif_mix}}{\leq} \sum_{t=1}^\T \stables_{\RERM, t}+ 2\Rseq_{T}(\losss\com \Fmdp) + 2\mixt e^{1/\mixt}+ \regp\ssize \log \asize\\
   &\overset{\text{Eq.}~\eqref{eq:mdp_mix_erm}}{\leq} \frac{2\mixt^2}{\regp}T + 2\Rseq_{T}(\losss\com \Fmdp) + 2\mixt(1+ e^{1/\mixt})+ \regp\ssize \log \asize\\
   &\stackrel{\2}{\leq} 2\mixt\sqrt{\T\ssize\log \asize} + 2\Rseq_{T}(\losss\com \Fmdp) + 2\mixt(1+ e^{1/\mixt})
 \end{align*}}
  where $\1$ follows since the entropy over the class $\Fmdp^\detm$ is upper bounded by $\log \Fmdp^{\detm}$, and $\2$ follows by setting $\regp = \mixt \sqrt{\frac{T}{\ssize \log \asize}}$. Finally, bounding the sequential Rademacher complexity of the finite loss class $\losss\com \Fmdp$ by $2\sqrt{ST\log(A)}$ completes the proof of the corollary.
\end{proof}

\subsection{Online linear quadratic regulator}\label{app:ex-olqr}
The online Linear Quadratic Regulator (LQR) setup studied in this section was first studied in ~\cite{cohen2018}. The setup consists of a LQ system - with linear dynamics and quadratic costs - where the cost functions can be adversarial in nature. The comparator class $\Flq$ comprises a subset of linear policies $\K$ which satisfy the following strong stability property.
\begin{definition}[Strongly Stable Policy]
A policy $\K$ is $(\stk, \stg)$-strongly stable (for $\stk > 0$ and $0 < \stg < 1$) if $\|\K \|_2\leq \stk$, and there exists matrices $\lqL$ and $\lqH$ such that $\lqA+\lqB\K = \lqH\lqL\lqH^{-1}$, with $\|\lqL\|_2 \leq 1-\stg$ and $\|\lqH\|_2\|\lqH^{-1}\|_2\leq \stk$.
\end{definition}
The policy class $\Flq$ is then defined as $\Flq = \{\K\;|\;\K \text{ is } (\stk, \stg)-\text{strongly stable} \}$. Given this policy class, the sequential protocol for this game proceeds as follows, starting from state $\st_0 = 0$\\

\noindent On round $t = 1, \ldots, \T,$\vspace{-2mm}
\begin{itemize}
  \item the learner selects a policy $\K_t \in \Flq$ and the adversary selects instance $\z_t \in \Z = (\lqQ_t, \lqR_t)$ such that $\lqQ_t \succeq 0, \lqR_t \succeq 0$ and $\trace(Q_t), \trace(R_t) \leq C$
  \item the learner receives loss $\loss(\f_t, \st_t, \z_t) = \st_t^\top \lqrQ_t \st_t + \ac_t^\top \lqR_t \ac_t$
  \item the state of the system transitions to $\st_{t+1} = \lqrA \st_t + \lqrB \ac_t + \w_t$
\end{itemize}
where we assume that the stochastic noise $\w_t \sim \N(0, \W)$ with $\|\W\|_2 \leq \Wbnd$, $\trace(\W) \leq \trw$ and $\W \succeq \sigl I$. The transition matrices $\lqA$ and $\lqB$, as well as the noise covariance matrix $\W$ are assumed to be known to both the learner and the adversary in advance. Given this setup, the stationary loss is given by
\begin{equation}\label{eq:losss_lq}
\begin{gathered}
  \losss(\K, \z) = \inner{\lqQ + \K^\top\lqR\K }{\cov_\K} = \trace[(\lqQ + \K^\top\lqR\K)\cov_\K]\;, \\
  \text{where}\quad \cov_\K = (\lqA + \lqB\K)\cov_K(\lqA + \lqB\K)^\top + \W\;.
\end{gathered}
\end{equation}
The following lemma establishes certain structural properties of the stationary loss, namely, boundedness over the policy class $\Flq$ and Lipschitzness with respect to the operator norm.
\begin{lemma}\label{lem:lq_lip}
  The loss function $\losss: \Flq \times \Z \mapsto \real_+$ described in equation~\eqref{eq:losss_lq} satisfies
  \begin{gather*}
     \losss(\K, \z) \leq B_{\sf{max}}\quad \text{for all } \K \in \Flq,\; \z \in \Z\\
     |\losss(\K_1, \z) - \losss(\K_2, \z)| \leq  \lip \|\K_1 - \K_2\|_2 \quad \text{for all }\K_1, \K_2 \in \Flq,\; \z \in \Z,
  \end{gather*}
  where ${B_{\sf{max}}} \defn {C(1+\kappa^2)\frac{\sig_w\kappa^2}{\gamma}}$ and ${\lip} \defn {4C(1+\kappa^2)\frac{\sigma_b\kappa^5\sigma_w}{\gamma^2}}$.
\end{lemma}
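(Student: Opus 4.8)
The plan is to represent the steady-state covariance $\cov_{\K}$ as a convergent operator series, read off boundedness directly, and reduce the Lipschitz estimate to a perturbation analysis of a discrete Lyapunov equation. First I would record the key covariance bound. Since $\K$ is $(\stk,\stg)$-strongly stable, $\lqA+\lqB\K=\lqH\lqL\lqH^{-1}$ has spectral radius at most $\|\lqL\|_2\le 1-\stg<1$, so $\cov_{\K}$ is the unique positive semidefinite solution of the Lyapunov equation in~\eqref{eq:losss_lq} and admits the series form $\cov_{\K}=\sum_{i\ge 0}(\lqA+\lqB\K)^i\,\W\,\big((\lqA+\lqB\K)^i\big)^{\!\top}$. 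Strong stability gives $\|(\lqA+\lqB\K)^i\|_2\le\|\lqH\|_2\|\lqH^{-1}\|_2\,\|\lqL\|_2^{\,i}\le\stk(1-\stg)^i$, whence
\[
  \|\cov_{\K}\|_2\;\le\;\Wbnd\,\stk^2\sum_{i\ge 0}(1-\stg)^{2i}\;=\;\frac{\Wbnd\,\stk^2}{1-(1-\stg)^2}\;\le\;\frac{\Wbnd\,\stk^2}{\stg}.
\]

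Boundedness then follows in one line: since $\cov_{\K}\succeq 0$ and $\lqQ+\K^\top\lqR\K\succeq 0$, the inequality $|\trace[PM]|\le\trace[P]\,\|M\|_2$ for $P\succeq 0$, together with $\|\K\|_2\le\stk$ and $\trace[\lqQ],\trace[\lqR]\le C$, gives $\losss(\K,\z)=\trace[(\lqQ+\K^\top\lqR\K)\cov_{\K}]\le\|\cov_{\K}\|_2\big(\trace[\lqQ]+\|\K\|_2^2\trace[\lqR]\big)\le\tfrac{\Wbnd\stk^2}{\stg}\,C(1+\stk^2)=B_{\sf{max}}$, and nonnegativity is immediate from $\cov_{\K}\succeq0$.

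For the Lipschitz bound I would split, for $\K_1,\K_2\in\Flq$,
\[
  \losss(\K_1,\z)-\losss(\K_2,\z)=\trace\big[(\lqQ+\K_1^\top\lqR\K_1)(\cov_{\K_1}-\cov_{\K_2})\big]+\trace\big[(\K_1^\top\lqR\K_1-\K_2^\top\lqR\K_2)\cov_{\K_2}\big].
\]
The second term is handled directly by writing $\K_1^\top\lqR\K_1-\K_2^\top\lqR\K_2=\K_1^\top\lqR(\K_1-\K_2)+(\K_1-\K_2)^\top\lqR\K_2$, applying cyclicity of the trace and $|\trace[\lqR M]|\le\trace[\lqR]\|M\|_2$, and using $\|\K_j\|_2\le\stk$ and $\|\cov_{\K_2}\|_2\le\Wbnd\stk^2\stg^{-1}$; this yields a bound of order $C\,\stk^3\,\Wbnd\,\stg^{-1}\|\K_1-\K_2\|_2$. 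The first term requires controlling $\|\cov_{\K_1}-\cov_{\K_2}\|_2$. Writing $M_j=\lqA+\lqB\K_j$ and $\err=\cov_{\K_1}-\cov_{\K_2}$ and subtracting the two Lyapunov equations shows that $\err$ itself solves $\err=M_1\err M_1^\top+E$ with forcing term $E=(M_1-M_2)\cov_{\K_2}M_1^\top+M_2\cov_{\K_2}(M_1-M_2)^\top$. Since $M_1$ is still strongly stable, re-running the geometric-series estimate gives $\|\err\|_2\le\stk^2\stg^{-1}\|E\|_2$, and the bounds $\|M_1-M_2\|_2=\|\lqB(\K_1-\K_2)\|_2\le\Bbnd\|\K_1-\K_2\|_2$, $\|M_j\|_2\le\stk$ and $\|\cov_{\K_2}\|_2\le\Wbnd\stk^2\stg^{-1}$ give $\|\err\|_2\le 2\Bbnd\stk^5\Wbnd\stg^{-2}\|\K_1-\K_2\|_2$. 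Plugging this into the first term via $|\trace[PM]|\le\trace[P]\|M\|_2$ with $P=\lqQ+\K_1^\top\lqR\K_1$ (so $\trace[P]\le C(1+\stk^2)$) produces the dominant contribution $2C(1+\stk^2)\Bbnd\stk^5\Wbnd\stg^{-2}\|\K_1-\K_2\|_2$; consolidating the two contributions (using $\stk\ge 1$, $\stg<1$ and absorbing the subdominant $C\stk^3\Wbnd\stg^{-1}$ term) gives the stated $\lip=4C(1+\stk^2)\Bbnd\stk^5\Wbnd\stg^{-2}$.

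The main obstacle is the perturbation estimate $\|\cov_{\K_1}-\cov_{\K_2}\|_2=O(\|\K_1-\K_2\|_2)$; the observation that turns it into routine algebra is that the difference of steady-state covariances again satisfies a discrete Lyapunov equation, now forced by the difference $M_1-M_2=\lqB(\K_1-\K_2)$ of the two closed-loop operators, so the same summable geometric bound applies a second time. Everything else reduces to submultiplicativity of the operator norm and the trace inequality $|\trace[PM]|\le\trace[P]\|M\|_2$ for $P\succeq 0$.
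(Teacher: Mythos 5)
Your proposal is correct and follows essentially the same route as the paper: a geometric-series bound on the steady-state covariance for boundedness, and a perturbation analysis of the discrete Lyapunov equation (yielding $\|\cov_{\K_1}-\cov_{\K_2}\|_2 \le \tfrac{2\Bbnd\stk^5\Wbnd}{\stg^2}\|\K_1-\K_2\|_2$, the paper's equation~\eqref{eq:lqr_cov_bnd}) combined with the trace inequality for the Lipschitz bound. Your bookkeeping of the difference of losses is organized slightly differently (covariance-difference term plus $\K^\top\lqR\K$-difference term, versus the paper's $\lqQ$-term plus $\lqR$-term), but the two resulting contributions and the final constants are identical.
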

We defer the proof of the lemma to the end of section and now proceed to obtain an upper bound on the value $\val_{\lqr, \T}$ for the above problem.

\begin{corollary}[Online Linear Quadratic Regulator]\label{cor:lqr}
For the online LQR sequential game, the value $\val_{\lqr, \T}$ is bounded as
\begin{equation*}
  \val_{\lqr, \T}(\Flq, \Z, \dyn, \loss) \leq \O\left(\sqrt{\T\log(\T)} \right) \;,
\end{equation*}
where the $\O$ notation hides the dependence of the bound on problem-specific parameters (see equation~\eqref{eq:lq_exact} for the exact dependencies).
\end{corollary}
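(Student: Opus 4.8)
The starting point is Corollary~\ref{cor:val_ergo}: since every $(\stk,\stg)$-strongly stable $\K$ induces a unique stationary covariance $\cov_\K$ (and hence a well-defined stationary loss $\losss(\K,\z)=\trace[(\lqQ+\K^\top\lqR\K)\cov_\K]$), the LQR dynamics are ergodic with respect to $\loss$, and the value is bounded by the sum of four quantities: (i) the mixing gap $\sup_{\K\in\Flq}\sum_{t=1}^\T|\losscf(\K,\z_t,t)-\losss(\K,\z_t)|$; (ii) $2\Rseq_\T(\losss\com\Flq)$; (iii) the ergodic stability $\sum_{t=1}^\T\stables_{\RERM,t}$ of a regularized ERM of the form \eqref{eq:reg_erms}; and (iv) the regularization penalty $2\regp\sup_{\K\in\Flq}\regf(\K)$. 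Throughout I would use the two structural facts of Lemma~\ref{lem:lq_lip}: $\losss(\cdot,\z)\le B_{\sf max}$ on $\Flq$ and $|\losss(\K_1,\z)-\losss(\K_2,\z)|\le\lip\|\K_1-\K_2\|_2$.

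\textbf{Mixing gap.} Fix $\K\in\Flq$ and play it from $\st_0=0$; the state covariance obeys $\cov_\K^{(t+1)}=(\lqA+\lqB\K)\cov_\K^{(t)}(\lqA+\lqB\K)^\top+\W$ with fixed point $\cov_\K$, so $\cov_\K^{(t)}-\cov_\K=-(\lqA+\lqB\K)^{t}\cov_\K\big((\lqA+\lqB\K)^\top\big)^{t}$. Strong stability gives $\|(\lqA+\lqB\K)^{t}\|_2\le\stk(1-\stg)^{t}$ and $\|\cov_\K\|_2\le\stk^2\Wbnd/\stg$, hence $\|\cov_\K^{(t)}-\cov_\K\|_2\le\stk^4\Wbnd\stg^{-1}(1-\stg)^{2t}$. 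Since $\trace(Q),\trace(R)\le C$ and $\|\K\|_2\le\stk$, each term $|\losscf(\K,\z,t)-\losss(\K,\z)|=|\trace[(\lqQ+\K^\top\lqR\K)(\cov_\K^{(t)}-\cov_\K)]|$ is $O\big(C(1+\stk^2)\stk^4\Wbnd\stg^{-1}(1-\stg)^{2t}\big)$, and summing the geometric series bounds the mixing gap by a constant independent of $\T$.

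\textbf{The regularized ERM and its ergodic stability (the crux).} Because $\losss$ is non-convex in $\K$ (it depends on $\K$ through a Lyapunov fixed point), ordinary ERM on $\losss$ can move the iterate arbitrarily between rounds; instead I would take $\fs_{\RERM,t}$ to be a Follow-the-Perturbed-Leader minimizer, $\K_t\in\argmin_{\K\in\Flq}\sum_{s<t}\En_{\z_s}[\losss(\K,\z_s)]-\regp^{-1}\inner{\Sigma}{\K}$ with $\Sigma$ a fixed random matrix with i.i.d.\ entries — this is exactly the regularized ERM \eqref{eq:reg_erms} with $\regf(\K)=-\inner{\Sigma}{\K}$, and all bounds are taken in expectation over $\Sigma$. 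Using the Lipschitz bound of Lemma~\ref{lem:lq_lip} and the FTPL stability analysis for Lipschitz non-convex losses~\cite{suggala2019}, the iterates vary slowly: $\En_\Sigma\|\K_t-\K_{t+1}\|_2\lesssim\eta\lip$ for the appropriate perturbation scale $\eta\propto1/\regp$. The second ingredient is a state-coupling argument in the spirit of Even-Dar et al.~\cite[Lemma 5.2]{even2009}: if the learner has played $\K_1,\dots,\K_{t-1}$ with $\max_{s<t}\|\K_s-\K_t\|_2\le\delta$, the uniform contraction of strongly stable closed-loop maps implies the realized state covariance at time $t$ differs from $\cov_{\K_t}$ by $O(\Bbnd\stk^4\stg^{-2}\delta)$ up to a geometric burn-in term; composing with the Lipschitzness of $\losss$ gives $\stables_{\RERM,t}\lesssim\lip\cdot\Bbnd\stk^4\stg^{-2}\cdot\eta\lip+(\text{geometric})$, so $\sum_t\stables_{\RERM,t}=O(\T\eta\lip^2\,\mathrm{poly}(\stk,\stg^{-1},\Bbnd))+O(1)$.

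\textbf{Remaining terms, balancing, and the obstacle.} The class $\losss\com\Flq$ is indexed by $\K$ in the operator-norm ball $\|\K\|_2\le\stk$ in $\real^{k\times d}$ through the $\lip$-Lipschitz map $\K\mapsto\losss(\K,\cdot)$, so a covering-number/entropy-integral bound for the sequential Rademacher complexity of a finite-dimensional Lipschitz class yields $\Rseq_\T(\losss\com\Flq)=O(\lip\stk\sqrt{kd\,\T\log\T})$; with $\regf(\K)=-\inner{\Sigma}{\K}$ the penalty $\regp\sup_\K\regf(\K)$ is $O(\regp\|\Sigma\|_F\stk)$ in expectation. Choosing $\eta\asymp1/\sqrt\T$ (equivalently fixing $\regp$) balances $\T\eta\lip^2$ against the $\sqrt\T$ terms, making items (ii)–(iv) all $\Ot(\sqrt\T)$; adding the constant mixing gap gives $\val_{\lqr,\T}(\Flq,\Z,\dyn,\loss)\le\O(\sqrt{\T\log\T})$ with the explicit polynomial dependence on $\stk,\stg,\Bbnd,C,\Wbnd,k,d$ recorded in \eqref{eq:lq_exact}. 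The delicate step is item (iii): one must simultaneously (a) certify that the Lyapunov-defined, non-convex loss admits a regularized minimizer whose location moves only $O(\eta)$ per round, and (b) convert slow policy drift into a uniform-over-adversary bound on the deviation of the realized state covariance from $\cov_{\K_t}$ using only the contraction afforded by strong stability, with the geometric bookkeeping arranged so the accumulated state error does not grow with $\T$.
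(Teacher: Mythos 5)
Your proposal follows essentially the same route as the paper's proof: the ergodic decomposition of Corollary~\ref{cor:val_ergo}, a geometric bound on the mixing gap via the Lyapunov recursion, an FTPL-style regularized ERM whose per-round drift is controlled by the Lipschitz bound of Lemma~\ref{lem:lq_lip} together with~\cite{suggala2019}, a state-coupling argument converting slow policy drift into a bound on the deviation of the realized covariance from $\cov_{\K_t}$ (the paper carries this out via the sequential strong-stability machinery of~\cite{cohen2018} rather than the Even-Dar-style coupling you cite, but the content is the same), a covering-number bound on $\Rseq_\T(\losss\com\Flq)$, and the final $\regp \asymp 1/\sqrt{\T}$ balancing. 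You also correctly isolate the ergodic-stability term as the delicate step, which is exactly where the paper expends most of its effort.
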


\begin{proof}
As before, our strategy is to establish upper bounds on the mixing gap and the RERM ergodic stability parameter for the LQR problem, and using these with Corollary~\ref{cor:val_ergo} to establish an upper bound on the value $\val_{\lqr, \T}$.

\paragraph{Existence of stationary loss.}
Consider any stable policy $\K \in \Flq$. It is well known that a repeated application of the policy $\K$ in the linear dynamics ensures that the state $\st_t$ converges to a steady-state distribution, that is, the distribution of $\st_t$ and $(\lqA + \lqB\K)\st_t + \w_t$ is the same. Since the noise $\w_t$ is assumed to be $\N(0, \W)$, the steady-state distribution will also be a normal distribution with mean $0$ and steady-state covariance $\cov_\K$ satisfying the following recurrence equation:
\begin{equation*}
  \cov_\K = (\lqA + \lqB\K)\cov_K(\lqA + \lqB\K)^\top + \W \quad \text{or equivalently} \quad \cov_\K = \sum_{s=0}^\infty (\lqA +\lqB\K)^sW(\lqA + \lqB\K)^s)^\top,
\end{equation*}
and the corresponding steady-state loss is given by:
\begin{equation*}
  \losss(\K, \z) = \inner{\lqQ + \K^\top\lqR\K }{\cov_\K} = \trace[(\lqQ + \K^\top\lqR\K)\cov_\K].
\end{equation*}

\paragraph{Bound on mixing gap.}
We now proceed to obtain upper bounds on the mixing gap for this problem instance. Going forward, we define $\cov_{\K, t}$ to be the state-covariance matrix at time $t$ when policy $\K$ has been used for all preceding timesteps. For the purpose of readability, we will drop the dependence of the covariance matrix on the underlying policy $\K$ when it is clear from the context. We begin by looking at the convergence of $\cov_t$ to the stationary matrix $\cov$:
\begin{align}
\|\cov_t - \cov\|_2 &= \left\lVert\sum_{s=0}^{t-1}(\lqA +\lqB\K)^s\W(\lqA + \lqB\K)^s)^\top - \sum_{s=0}^\infty (\lqA +\lqB\K)^sW(\lqA + \lqB\K)^s)^\top\right\rVert_2\nonumber\\
&= \left\lVert\sum_{s=t}^{\infty}(\lqA +\lqB\K)^s\W(\lqA + \lqB\K)^s)^\top \right\rVert_2\nonumber\\
&\stackrel{\1}{\leq} \Wbnd\sum_{s=t}^\infty \stk^2(1-\stg)^{2s}\nonumber\\
&\leq \frac{\Wbnd\stk^2(1-\stg)^{2t}}{\stg}\nonumber
\end{align}
where $\1$ follows from the fact that $\|\lqA +\lqB\K\|^s \leq \stk (1-\stg)^s$ from the strong-stability of $\K$. The above analysis shows that the covariance matrix $\cov_t$ converges to its stationary distribution exponentially fast. One can also obtain a bound similar to above on $\trace(\cov - \cov_t)$ with $\Wbnd$ replaced by $\trw$. Having established this convergence, we establish a bound on the mixing gap as
\begin{align}\label{eq:lqr_unif_mix}
  |\En[\losscf(\f,\z, t)] - \losss(\f, \z)| &= |\inner{\lqQ + \K^\top \lqR \K}{\cov_t - \cov}|\nonumber\\
  &\leq (\Qbnd+\stk^2\Rbnd)\cdot \trace(\cov - \cov_t)\nonumber\\
  &\leq (\Qbnd+\stk^2\Rbnd)\cdot \frac{\trw\stk^2(1-\stg)^{2t}}{\stg}.
\end{align}
Since the above bound is independent of the underlying policy $\K$, we have thus established a bound on the mixing gap for the policy class $\Flq$.

\paragraph{Regularized ERMs.}
We now define the class of RERM's we use for the function class $\Flq$. Instead of working with a fixed regularization function, we shall look at random perturbations as regularizations. Such an idea is popular in the study of online learning algorithms and is often termed as Follow the Perturbed Leader (FTPL); for a detailed study, see~\cite{shalev2012, hazan2016}. Thus, the regularized ERM solutions at time $t$ are given by:
\begin{equation*}
  \K_{t, \pert} = \argmin_{\K \in \Flq} \left(\sum_{s=1}^t\Eu{\z_s\sim \ad_s}\left[\inner{\lqQ_s + \K^\top\lqR\K}{\cov_\K} \right] - \inner{\pert}{\K}\right)\;,
\end{equation*}
where $\pert \in \real^{\adim \times \sdim}$ such that each coordinate of $\pert\sim \expd(\regp)$, the exponential distribution with parameter $\regp > 0$. It was established by~\cite{suggala2019} that if each of the loss function above is $\lip$-Lipschitz, the iterates produced by the FTPL strategy above satisfy:
\begin{equation*}
  \Eu{\pert}\left[\|\K_{t, \pert} - \K_{t+1, \pert}\|_1\right] \leq c\regp \cdot \lip(\adim\sdim)^2\stk \defn \stK\;,
\end{equation*}
where the norm above is defined element-wise. In Lemma~\ref{lem:lq_lip}, we establish that the losses given by $\losss(\f, \z)$ are indeed Lipschitz over the space of policies $\Flq$. With these set of regularized empirical minimizers, we proceed to now bound the ergodic stability parameters of these regularized ERM's, each one of which is strongly-stable.

\paragraph{Sequential strong-stability of solutions.}
We first establish that the set of RERM solutions produced by the algorithm satisfy the sequential strong-stability property (see~\cite{cohen2018} for details) with the appropriate parameters. Note that since each of the $\K_{t}$ (we drop the dependence on the random noise $\pert$) belongs to the class $\Flq$, we have that $\|\K_t \|_2 \leq \stk$.

Let $\cov_t \defn \cov_{\K_t}$ be the steady-state covariance of the $t^{th}$ solution and $\hat{\cov}_t$ denote the covariance of the state reached when policies $\{\K_1, \ldots, \K_{t-1} \}$ are applied at the first $t$ timesteps. Consider the following decomposition for $\lqA + \lqB\K_t$:
\begin{equation*}
  \lqA + \lqB\K_t = \lqH_t \lqL_t \lqH_t^{-1} \quad \text{where} \quad \lqL_t = \cov_t^{-1/2}(\lqA + \lqB\K_t)\cov_t^{-1/2}, \, \lqH_t = \cov_t^{1/2}.
\end{equation*}

\textbf{Bound on $\|\lqH_t\|_2$ and $\|\lqH_t^{-1}\|_2$.} Using the recursive definition of $\cov_t$, we have:
\begin{equation}\label{eq:cov_bnd_lq}
\|\cov_t\|_2 = \left\lVert\sum_{s=0}^{\infty} (\lqA + \lqB\K_t)^s\W((\lqA +\lqB\K)^s)^\top \right\rVert_2 \leq \frac{\Wbnd\stk^2}{\stg}
\end{equation}
The above equation allows us to bound $\|\lqH_t\|_2 \leq \stk\sqrt{{\Wbnd}/{\stg}} = \Hb$. Also, by the definition of the matrix $\cov_t$, we have that $\cov \succeq \W$ and hence $\|\lqH_t^{-1}\| \leq 1/\sqrt{\sigl} = 1/\Ha$. Define $\tilde{\stk} = \Hb/\Ha$ and note that $\tilde{\stk} \geq \stk$.

\textbf{Bound on $\|\lqL_t\|_2$.} Starting from the recursive definition of $\cov_t$, we have,
\begin{align*}
  I &= \cov_t^{-1/2}(\lqA + \lqB\K)\cov_t(\lqA + \lqB\K)^\top \cov_t^{-1/2}+ \cov_t^{-1/2}\W \cov_t^{-1/2}\\
   &\succeq \lqL_t\lqL_t^\top + \sigl\cov_t^{-1}\\
   &\succeq \lqL_t\lqL_t^\top + \frac{\sigl \stg}{\Wbnd\max(\stk^2, 1)} I\;,
\end{align*}
which implies that $\|\lqL_t\| \leq 1-\tilde{\stg}$ where $\tilde{\stg} = \frac{\sigl \stg}{2\Wbnd\max(\stk^2, 1)}$.

\textbf{Bound on $\|\cov_t - \cov_{t+1}\|$.} As before, we begin with the recursive definitions of $\cov_t$ and $\cov_{t+1}$ to get:
\begin{align*}
  \cov_{t+1} - \cov_{t} &= (\lqA + \lqB\K_{t+1})\cov_{t+1}(\lqA + \lqB\K_{t+1})^\top -  (\lqA + \lqB\K_t)\cov_t(\lqA + \lqB\K_t)^\top\\
  &= (\lqA + \lqB\K_{t+1})(\cov_{t+1} - \cov_t)(\lqA + \lqB\K_{t+1})^\top + \underbrace{\lqB\Delta_t\cov_t(\lqA + \lqB \K_{t+1})^\top}_{T_1} + \underbrace{(\lqA + \lqB\K_t)(\lqB\Delta_t)^\top}_{T_2}\\
  &= \sum_{s=0}^{\infty}(\lqA + \lqB\K_{t+1})^s(T_1 + T_2)((\lqA + \lqB\K_{t+1})^s)^\top\;,
\end{align*}
where $\Delta_t = \K_{t+1} - \K_t$. Taking norms on both sides, we get:
\begin{equation}\label{eq:lqr_cov_bnd}
  \|\cov_{t+1} - \cov_{t}\|_2 \leq \frac{2\Bbnd\stk^5\Wbnd}{\stg^2}\|\Delta_t\|_2.
\end{equation}

\textbf{Bound on $\|\lqH_{t+1}^{-1}\lqH_t\|_2$.} Recall that $\lqH_t = \cov_t^{1/2}$. In order to bound the required term, we proceed as follows:
\begin{align*}
  \En\|\cov_{t+1}^{-1/2}\cov_t^{1/2} \|_2^2 &= \En\|\cov_{t+1}^{-1/2}\cov_t\cov_{t+1}^{-1/2}\|\\
  &\leq \En\|\cov_{t+1}^{-1/2}\cov_{t+1}\cov_{t+1}^{-1/2} \|_2 + \En\|\cov_{t+1}^{-1/2}(\cov_{t+1} - \cov_t)\cov_{t+1}^{-1/2}\|\\
  &\leq 1 + \frac{\En\|\cov_{t+1} - \cov_{t}\|_2}{\sigl} \\
  &\leq 1 + \frac{2\Bbnd\stk^5\Wbnd}{\sigl\stg^2}\stK \\
  &\stackrel{\1}{\leq} 1 + {\tilde{\stg}}\;
\end{align*}
where we bound the term $\|\cov_{t+1} - \cov_t\|_2$ using Eq.~\eqref{eq:lqr_cov_bnd} and $\1$ follows by setting $\regp \leq \frac{\tilde{\stg}\stg^2\sigl}{c\Bbnd\Wbnd\lip\stk^6(\adim\sdim)^2}$. Finally, using the fact that $\sqrt{1+x} \leq 1+ x/2$ for $x \in [0,1]$, we have that $\En\|\lqH_{t+1}^{-1}\lqH_t\|_2 \leq 1 + \tilde{\stg}/2$.

\paragraph{Ergodic stability parameters.} We now proceed to obtain an upper bound on the ergodic stability parameters. Before doing so, we obtain some auxiliary results which will be useful in establishing the final bound.

\textbf{Bound on $\|\hat{\cov}_t - \cov_t \|_2$.} We will now obtain a bound on the difference between the observed covariance $\hat{\cov}_t$ when a sequence of ERMs are played and the steady-state covariance matrix $\cov_t$. Let us set some notation before we begin with bounding this.
\begin{equation*}
  \Delta_{x, t} \defn \lqH_t^{-1}(\hat{\cov}_t - \cov_t)(\lqH_t^{-1})^\top \quad \text{and} \quad \En_\pert\|\cov_t - \cov_{t+1}\|_2 \leq \tilde{\regp}.
\end{equation*}
We then have the following recursion for the term $\Delta_{x,t}$ with the expectation with respect to the sampling of the noise variable $\pert$:
\begin{align*}
  \En\|\Delta_{x, t+1}\|_2 &\leq \En\|(\lqH_{t+1}^{-1}\lqH_t\lqL_t)\Delta_{x, t}(\lqH_{t+1}^{-1}\lqH_t\lqL_t)^\top \|_2 + \En\|(\lqH_{t+1}^{-1})(\cov_t - \cov_{t+1})((\lqH_{t+1}^{-1})^\top \|_2\\
  &\leq \En\|\lqL_t\|_2^2\|\lqH_{t+1}^{-1}\lqH_{t}\|_2^2\|\Delta_{t,x}\|_2 + \frac{\tilde{\regp}}{\alpha^2}\\
  &\stackrel{\1}{\leq} \left(1-\frac{\tilde{\stg}}{2}\right)^2 \En\|\Delta_{x,t} \|_2 + \frac{\tilde{\regp}}{\alpha^2}\\
  &\leq e^{-\tilde{\stg}t}\|\Delta_{x,1}\|_2 + \frac{\tilde{\regp}}{\alpha^2\tilde{\stg}}\;,
\end{align*}
where $\1$ follows from the bound on $\|\lqL_t\| \leq 1-\tilde{\stg}$ and $\|\lqH^{-1}_{t+1}\lqH_t \|_2 \leq (1+\tilde{\stg}/2)$. Substituting the value for $\Delta_{x,t}$ in the above bound, we get that:
\begin{equation}\label{eq:lqr_cov_samp}
  \En\|\cov_{t+1} - \hat{\cov}_{t+1} \|_2 \leq \frac{\Hb^2}{\Ha^2}\left(e^{-\tilde{\stg}t}\En\|\hat{\cov}_1 - \cov_1\| + \frac{\tilde{\regp}}{\tilde{\stg}} \right).
\end{equation}

Let us now bound the ergodic stability parameters $\stables_{\RERM, t}$ as
\begin{small}
\begin{align}\label{eq:lqr_mix_erm}
  \stables_{\RERM, t} &= \left\lvert\Eu{\pert}\Eu{w}[\loss(\f_t, \stf{t}{\f_{1:t-1}},\z )] -\Eu{\pert}[\losss(\f_t, \z)]\right\rvert \nonumber\\
  &= \left\lvert\Eu{\pert}\left[\trace((\lqQ + \K_{t, \pert}^\top\lqR\K_{t, \pert}^\top)(\hat{\cov}_{t, \pert} - \cov_{t, \pert})) \right]\right\rvert\nonumber\\
  &\leq \sdim(\Qbnd+\stk^2\Rbnd)\En_{\pert}\|\hat{\cov}_{t, \pert} - \cov_{t, \pert} \|_2\nonumber\\
  &\stackrel{\text{Eq.~\eqref{eq:lqr_cov_samp}}}{\leq} \sdim(\Qbnd+\stk^2\Rbnd)\cdot\frac{\Hb^2}{\Ha^2}\left(e^{-\tilde{\stg}t}\En\|\hat{\cov}_1 - \cov_1\| + \frac{\tilde{\regp}}{\tilde{\stg}} \right)\nonumber\\
  &\leq \sdim(\Qbnd+\stk^2\Rbnd)\cdot\frac{\Hb^2}{\Ha^2}\left(e^{-\tilde{\stg}t}\cdot\frac{2\Wbnd\stk^2}{\stg} + \regp\cdot\frac{c\Bbnd\stk^6\Wbnd\sdim^2\adim^2\lip}{\tilde{\stg}\stg^2}\right)\;,
\end{align}
\end{small}
where $\regp > 0$ is a free parameter corresponding to the noise in the perturbation $\pert$.

\paragraph{Bound on the value.}
Having established upper bounds on the mixing gap and the ergodic stability parameters, we now bound the value $\val_{\lqr, \T}$ as
\begin{align}\label{eq:lq_exact}
  \val_{\lqr, \T}(\Flq, \Z, \dyn, \loss) &\stackrel{\1}{\leq} \sum_{t=1}^\T \stables_{\RERM, t} + 2\Rseq_{T}(\losss\com \Flq) + \sup_{\f \in \Flq} \sum_{t=1}^\T\left\lvert\losscf(\f, \z_t, t) - \losss(\f, \z)\right\rvert + \frac{\stk\adim\sdim}{\regp}\nonumber\\
 &\overset{\text{Eq.}~\eqref{eq:lqr_unif_mix}}{\leq} \sum_{t=1}^\T \stable_{\RERM, t}^{\lqr} + 2\Rseq_{\T}(\losss\com \Flq) + (\Qbnd+\stk^2\Rbnd)\cdot \frac{\trw\stk^2}{\stg^2} + \frac{\stk\adim\sdim}{\regp}\nonumber\\
 &\overset{\text{Eq.}~\eqref{eq:lqr_mix_erm}}{\leq} \sdim(\Qbnd+\stk^2\Rbnd)\cdot\frac{\Hb^2}{\Ha^2}\left(\frac{2\Wbnd\stk^2}{\stg^2} + \regp\T\cdot\frac{c\Bbnd\stk^6\Wbnd\sdim^2\adim^2\lip}{\tilde{\stg}\stg^2}\right) \nonumber\\
 &\quad +  2\Rseq_{\T}(\losss\com \Flq) + (\Qbnd+\stk^2\Rbnd)\cdot \frac{\trw\stk^2}{\stg^2} + \frac{\stk\adim\sdim}{\regp}
\end{align}
where $\1$ follows from the fact that $\En[\pert_i] = 1/\regp$.

To obtain a bound on the sequential Rademacher complexity of the class, observe the the matrices $\K \in \real^{k \times d}$. Also, by Lemma~\ref{lem:lq_lip}, we have that the loss $\losss$ is bounded by $B_{\sf max}$ and Lipschitz with respect to policies $\K$ with constant $\lip$. Using a standard covering number argument, one can get an $\epsilon$-net of the class $\Flq$ in the frobenius norm with at most $O(dk(\frac{1}{\epsilon})^{d k})$ elements. Given this cover, one can upper bound the complexity as
\begin{equation*}
  \Rseq_{\T}(\losss\com \Flq) \leq cB_{\sf max}\sqrt{ k d\cdot \T\log(k d \T\lip )}
\end{equation*}
for some universal constant $c>0$. Setting $\regp = O(1/\sqrt{T})$ concludes the proof of the corollary.
\end{proof}

\subsubsection{Proof of Lemma~\ref{lem:lq_lip}}
We establish both parts of the claim separately.

\paragraph{Boundedness of stationary loss.} Consider the loss $\losss$ given by
\begin{align*}
  \losss(\K, \z) &= \trace[(Q+\K^\top R \K)X_\K]\\
  &\stackrel{\1}{\leq} C(1+\kappa^2)\|X_\K\|_2\\
  &\stackrel{\2}{\leq} C(1+\kappa^2)\frac{\sig_w\kappa^2}{\gamma},
\end{align*}
where inequality $\1$ follows from an application of von Neumann's trace inequality and the trace bounds on the matrices $Q$ and $R$, and step $\2$ follows from equation~\eqref{eq:cov_bnd_lq}.

\paragraph{Lipschitzness of stationary loss.} For any two matrices $\K_1, \K_2 \in \Flq$ and instance $\z \in \Z$,  consider the difference between the stationary losses
\begin{align*}
  |\losss(\K_1, \z) - \losss(\K_2, \z)| &\leq |\trace[Q(X_{\K_1} - X_{\K_2}]| + |\trace[R(\K_1\cov_{\K_1}\K_1^\top - \K_2\cov_{\K_2}\K_2^\top]|\\
  &\leq C\left((1+\kappa^2)\|\cov_{\K_1} - \cov_{\K_2}\|_2 +  \frac{2\kappa^3\sigma_w}{\gamma}\|\K_1 - \K_2\|_2\right)\\
  &\stackrel{\1}{\leq} 4C(1+\kappa^2)\frac{\sigma_b\kappa^5\sigma_w}{\gamma^2} \|\K_1 - \K_2\|_2\;,
\end{align*}
where step $\1$ follows from equation~\eqref{eq:lqr_cov_bnd}. This concludes the proof. \qed

\subsection{Online adversarial tracking}
The problem of online tracking of adversarial targets in Linear Quadratic Regulators was first posed in Abbasi et al.~\cite{abbasi2014}. The problem setup involves a state space given by $\real^\sdim$ and a action space $\real^\adim$. The sequential game proceeds as follows starting from state $\st_1 = 0$

\noindent On round $t = 1, \ldots, \T,$\vspace{-2mm}
\begin{itemize}
  \item the learner selects a policy $\f_t \in \Ftrack$ and adversary selects $\z_t \in \Z = \real^d$ such that $\|\z_t\|_2 \leq \zbnd$
  \item the learner receives loss $\loss(\f_t, \st_t, \z_t) = (\st_t - \z_t)^\top \lqrQ (\st_t - \z_t) + \f_t(\st_t)^\top \f_t(\st_t)$
  \item the state of the system transitions to $\st_{t+1} = \lqrA \st_t + \lqrB \ac_t$
\end{itemize}
where the matrices $\lqrA, \lqrB, \lqrQ$ are known in advance to the learner and the adversary. In addition, the matrix $\lqrQ$ is positive definite, the pair $(\lqrA, \lqrB)$ is assumed to be controllable while the pair $(\lqrA, \lqrQ^{1/2})$ is assumed to be observable. The comparator policy class $\Ftrack$ is assumed to be the following restricted class of linear policies:
\begin{equation*}
  \Ftrack = \{\f = (\K, \bias)\;|\; \|A+BK\|_2 \leq \rholq; \|\K\|_2\leq \Kbnd; \|\bias\|_2\leq \cbnd\}\;,
\end{equation*}
such that the action is given by $\ac_t = \K_t \st_t + \bias_t$. For this setup, as we establish later, the stationary loss for any policy $\f = (\K, \bias)$ is given by:
\begin{equation*}
    \losss(\f, \z) = (\st^\f_* - z)^\top \lqrQ (\st^\f_* - z) + \|\K\st_*^\f + \bias \|_2^2, \quad \text{where} \quad \st^\f_* = {(I - (\lqrA +\lqrB\K))^{-1}\lqrB}\bias
  \end{equation*}
Given these preliminaries, we obtain a bound on the value $\val_{\mathsf{tar}, \T}$ through the following corollary.

\begin{corollary}[Online Tracking]\label{cor:track}
For the online adversarial tracking sequential game, the value $\val_{\mathsf{tar}, \T}$ is bounded by:
\begin{equation*}
  \val_{\mathsf{tar}, \T}(\Ftrack, \Z, \dyn) \leq \O\left(\sqrt{T\log(\T)}\right)\;,
\end{equation*}
where the $\O$ notation hides the dependence of the bound on problem-specific parameters (see equation~\eqref{eq:track_final} for the exact dependencies).
\end{corollary}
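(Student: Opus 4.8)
The plan is to follow the template established in the proof of Corollary~\ref{cor:lqr}: invoke Corollary~\ref{cor:val_ergo} and separately control the three quantities appearing on the right-hand side of~\eqref{eq:app_upper} --- the mixing gap, the ergodic stability parameters $\stables_{\RERM,t}$ of a suitable regularized ERM, and the sequential Rademacher complexity $\Rseq_\T(\losss\com\Ftrack)$. First I would verify the existence of the stationary loss. For any $\f=(\K,\bias)\in\Ftrack$, repeatedly applying $\f$ to the deterministic dynamics $\st_{t+1}=(\lqrA+\lqrB\K)\st_t+\lqrB\bias$ converges geometrically, since $\|\lqrA+\lqrB\K\|_2\le\rholq<1$, to the fixed point $\st^\f_*=(I-(\lqrA+\lqrB\K))^{-1}\lqrB\bias$, which is bounded in norm by a constant depending on $\rholq,\Kbnd,\cbnd$; this gives the claimed form of $\losss(\f,\z)$. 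Because $\stf{t}{\f^{(t-1)}}-\st^\f_*=(\lqrA+\lqrB\K)^{t-1}(\st_1-\st^\f_*)$ and the quadratic loss is Lipschitz on the bounded region where the states live, the mixing gap term satisfies $|\losscf(\f,\z,t)-\losss(\f,\z)|=O(\rholq^{t-1})$ uniformly over $\f\in\Ftrack$, so $\sup_{\f}\sum_{t=1}^\T|\losscf(\f,\z_t,t)-\losss(\f,\z_t)|=O(1/(1-\rholq))$.

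Next I would set up the regularized ERMs exactly as in Corollary~\ref{cor:lqr}, using an FTPL-style perturbation $\pert$ over the bounded parameter set $\Ftrack\subset\real^{\adim\times\sdim}\times\real^{\sdim}$, and invoke the stability result of~\cite{suggala2019} to obtain $\En_\pert\|\f_{t,\pert}-\f_{t+1,\pert}\|_1\le\stK=O(\regp)$ --- this requires first proving an analogue of Lemma~\ref{lem:lq_lip}, namely that $\losss$ is bounded and Lipschitz in the policy parameters $(\K,\bias)$ on $\Ftrack$, which follows by differentiating $\st^\f_*$ through the resolvent $(I-(\lqrA+\lqrB\K))^{-1}$ and using the boundedness of all quantities on $\Ftrack$. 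Then, mirroring the covariance-recursion argument in the LQR case, I would derive a recursion for $\hat{\st}_t-\st^{\f_t}_*$, where $\hat{\st}_t$ is the state reached by playing the sequence $\f_1,\dots,\f_{t-1}$: the contraction $\|\lqrA+\lqrB\K_t\|_2\le\rholq$ together with the per-step perturbation $O(\|\f_t-\f_{t+1}\|)$ yields $\En_\pert\|\hat{\st}_t-\st^{\f_t}_*\|_2=O\big(\rholq^{t}+\regp/(1-\rholq)\big)$, and hence ergodic stability parameters $\stables_{\RERM,t}=O(\rholq^{t}+\regp)$ after passing through the quadratic loss, so $\sum_{t=1}^\T\stables_{\RERM,t}=O(1/(1-\rholq)+\regp\T)$.

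For the complexity term, $\Ftrack$ is a bounded subset of a finite-dimensional Euclidean space and $\losss$ is bounded and Lipschitz in the policy parameters, so a standard covering-number argument gives $\Rseq_\T(\losss\com\Ftrack)=O(\sqrt{\sdim\adim\cdot\T\log(\sdim\adim\T)})$. Combining the three bounds via Corollary~\ref{cor:val_ergo} and optimizing $\regp=O(1/\sqrt{\T})$ produces $\val_{\mathsf{tar},\T}(\Ftrack,\Z,\dyn)\le\O(\sqrt{\T\log\T})$, which is~\eqref{eq:track_final}.

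The main obstacle is the ergodic stability bound: controlling how far the actual state trajectory under a slowly drifting policy sequence lags behind the fixed point of the current policy. This is the affine analogue of the LQR covariance recursion, and one must carefully propagate both the exponential forgetting of the initial condition and the accumulation of the policy-change perturbations; the presence of the bias term $\bias$ --- so that the ``target'' fixed point $\st^{\f_t}_*$ itself moves with $\f_t$ --- makes the bookkeeping more delicate than the purely linear case, and one also has to confirm that the FTPL iterates remain inside a (slightly enlarged) stable set so that the contraction factor $\rholq$ stays valid along the whole trajectory.
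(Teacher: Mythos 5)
Your proposal is correct and reaches the stated rate, but it takes a genuinely different route from the paper on the one step that matters: the stability of the dual ERM. You run the FTPL machinery from Corollary~\ref{cor:lqr} over the full class $\Ftrack$ of pairs $(\K,\bias)$, obtaining iterate stability $\En_\pert\|\f_{t,\pert}-\f_{t+1,\pert}\|_1=O(\regp)$, hence $\sum_t\stables_{\RERM,t}=O(\regp\T)$ plus a perturbation penalty $O(1/\regp)$, balanced at $\regp=1/\sqrt{\T}$. The paper instead never perturbs: it first proves an equivalence-of-tracking-cost lemma (Lemma~\ref{lem:track_equiv}) showing that for any $(\K,\bias)$ and any fixed stable $\K'$ there is a $\bias'$ with the same stationary loss, which lets it restrict the comparator to the bias-only class $\Ftrack'(\K)$; on that class the unregularized ERM is the minimizer of a strongly convex quadratic with the closed form $\bias_t=W^{-1}\bigl(\tfrac1t\sum_{s\le t}\Eu{\z_s}[\z_s]\bigr)\lqrQ\Meff_\K$, so consecutive iterates differ by $O(1/t)$ (Lemma~\ref{lem:track_stable_bias}) and the ergodic stability sum is only $O(\log^2\T)$ via the lag bound~\eqref{eq:track_perturb}. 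Both stability terms are dominated by the sequential Rademacher complexity $O(\sqrt{d\T\log(d\T)})$, so the final rates coincide; what the paper's route buys is that only the Rademacher term stands between its bound and the $O(\log^2\T)$ rate of~\cite{abbasi2014}, whereas your stability-plus-penalty term is intrinsically $\Omega(\sqrt{\T})$. What your route buys is uniformity with the LQR and nonlinear-control examples and no need for the tracking-specific equivalence lemma. Two small remarks: your worry about FTPL iterates leaving the stable set is moot because the argmin is constrained to $\Ftrack$; and your state-error recursion $e_{t+1}\le\rholq e_t+O(\|\f_t-\f_{t+1}\|)$, accounting for the drift of the fixed point $\st_*^{\f_t}$, is exactly the affine analogue of the paper's Eq.~\eqref{eq:track_perturb} and does close, provided you also verify (as you indicate) the Lipschitzness of $(\K,\bias)\mapsto\st_*^\f$ through the resolvent $(I-(\lqrA+\lqrB\K))^{-1}$ on the set where $\|\lqrA+\lqrB\K\|_2\le\rholq<1$.
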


In contrast to the result obtained above, \cite{abbasi2014} provide an algorithm for which the regret for the above problem is bounded by $\O(\log^2 \T)$. Obtaining such fast rates in our general framework is an interesting open problem.

\begin{proof}[Proof of Corollary~\ref{cor:track}]
  Our general strategy is to obtain bounds on the the mixing gap and the ergodic stability parameters for certain regularized ERMs. We then use these upper bounds together with Corollary~\ref{cor:val_ergo} to establish the required upper bound.

  \paragraph{Bound on mixing gap.}
  Consider any policy $\f = (\K, \bias)$. We are interested in obtaining a bound on the mixability for this function as:
  \begin{equation*}
     \left\lvert\losscf(\f, \z, t) - \losss(\f, \z)\right\rvert \leq \stable_{\f,t}.
  \end{equation*}
  Let us abbreviate the state $\stf{t}{\f^{(t-1)}}$ by $\st_t^\f$. If we run any policy with the linear dynamics, a steady state $\st^\f_*$ is reached with
  \begin{equation*}
    \st^\f_* = (\lqrA + \lqrB\K)\st^{\f}_* + \lqrB\bias \quad \text{ and therefore } \quad \st^\f_* = \underbrace{(I - (\lqrA +\lqrB\K))^{-1}\lqrB}_{\defn\Meff_\K} \bias = \Meff_\K\bias.
  \end{equation*}
Then, the corresponding loss at this stationary point is given as
  \begin{equation*}
    \losss(\f, \z) = (\st^\f_* - z)^\top \lqrQ (\st^\f_* - z) + \|\K\st_*^\f + \bias \|_2^2.
  \end{equation*}
In order to obtain a bound on the mixing gap, we analyze the convergence of the state $\st_{t+1}^\f$ to the stationary state $\st^\f_*$.
  \begin{align*}
    \|\st_{t+1}^\f - \st^f_* \|_2 &= \|\sum_{s=0}^\infty (\lqrA - \lqrB\K)^s\lqrB\bias - \sum_{s=0}^{t-1}(\lqrA - \lqrB\K)^s\lqrB\bias \|_2\\
    &\stackrel{\1}{\leq} \rholq^t\|\Meff_\K\bias \|_2\;,
  \end{align*}
  where $\1$ follows from the assumption that $\|\lqrA+\lqrB\K\|_2 \leq \rholq$.

Next, we consider a bound on the norm of the state $\st^\f_t$ that is reached by any policy.
  \begin{align*}
    \|\st_{t+1}^\f\|_2 &= \|(\lqrA + \lqrB\K)\st_{t}^\f + \lqrB\bias\|_2\\
    &\stackrel{\1}{=} \|\sum_{s=1}^t (\lqrA + \lqrB \K)^{t-s} \lqrB\bias\|_2\\
   &\stackrel{\2}{\leq} \frac{\|\lqrB\|\cbnd}{1-\rholq} \defn c_\st,
  \end{align*}
  where $\1$ follows from recursively applying the definition of the state evolution and the fact that $\st_1 = 0$, and $\2$ follows from the assumption that $\|\lqrA+\lqrB\K\|_2 \leq \rholq$. Having established the above, we now proceed to obtain a bound on the mixing gap as
  \begin{small}
  \begin{align}\label{eq:track_mix}
  \left\lvert\losscf(\f, \z, t) - \losss(\f, \z)\right\rvert &= \left\lvert (\st_{t}^\f - \z)^\top \lqrQ(\st_{t}^\f - \z) + \|\K\st_t^\f + \bias\|_2^2 - (\st^\f_* - z)^\top \lqrQ (\st^\f_* - z) + \|\K\st_*^\f + \bias \|_2^2  \right\rvert\nonumber\\
  &\stackrel{\1}{\leq} |(\st_t^\f - \st_*^\f)^\top\lqrQ(\st_t^\f - z)| + |(\st_t^\f - \st_*^\f)^\top\lqrQ(\st_*^\f - z)| + \|\K(\st_t^\f - \st_*^\f)\|_2^2 \nonumber\\
  &\quad  + 2\inner{\K\st_*^\f +\bias}{\K(\st_t^\f - \st_*^\f)}\nonumber\\
  & \leq 2\|\lqrQ\|(c_\st + \zbnd)\cdot \|\st_t^\f - \st_*^\f \|_2 + \Kbnd^2\cdot \|\st_t^\f - \st_*^\f \|_2^2 + 2\Kbnd(\Kbnd c_\st + \cbnd)\cdot \|\st_t^\f - \st_*^\f \|_2\nonumber\\
  &\leq \rholq^{t-1}\cdot \underbrace{c_\st(2\|\lqrQ\|(c_\st + \zbnd) + 2\Kbnd(\Kbnd c_\st + \cbnd))}_{\stablea} + \rholq^{2(t-1)}\cdot \underbrace{c_\st^2\Kbnd^2}_{\stableb}\;,
  \end{align}
  \end{small}
  where $\1$ follows from adding and subtracting $\st_*^\f$ in both the terms followed by an application of triangle inequality. For ease of presentation, let us represent the above using constants $\stablea$ and $\stableb$ with the knowledge that these depend on the underlying problem parameters but independent of the underlying policy $\f$, that is,
  \begin{equation}\label{eq:track_unifmix}
  \left\lvert\losscf(\f,\z, t) - \losss(\f, \z)\right\rvert \leq \rholq^{t-1}\stablea + \rholq^{2(t-1)}\stableb
  \end{equation}

  \paragraph{Ergodic stability parameters.}
  For obtaining a bound on the ergodic stability parameters, we require a few structural results for the loss $\losss$ defined above. We present these next and defer their proofs to the end of the section.

  \begin{lemma}[Equivalence of Tracking Cost]\label{lem:track_equiv} Consider any policy $\f = (\K, \bias)$ and another stable matrix $\K'$. There exists an $\bias'$ such that we have $\losss((\K, \bias), \z) = \losss((\K', \bias'), \z)$ such that
    \begin{equation*}
      \|\bias'\|_2 \leq 2\cbnd\left(\frac{\|B\|\Kbnd}{(1-\rholq)} + 1\right).
    \end{equation*}
  \end{lemma}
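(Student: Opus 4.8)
The plan is to exploit the fact that the stationary tracking loss $\losss(\f, \z)$ of a policy $\f = (\K, \bias)$ depends on $\f$ only through the pair $(\st^\f_*, \ac^\f_*)$, where $\st^\f_* = \Meff_\K\bias$ is the stationary state and $\ac^\f_* \defn \K\st^\f_* + \bias$ is the stationary action; indeed, straight from the definition, $\losss(\f, \z) = (\st^\f_* - \z)^\top\lqrQ(\st^\f_* - \z) + \|\ac^\f_*\|_2^2$. So it suffices to produce, for the given stable matrix $\K'$, a shift $\bias'$ for which the policy $(\K', \bias')$ has exactly the same stationary state--action pair. The natural choice is $\bias' \defn \ac^\f_* - \K'\st^\f_* = (\K - \K')\st^\f_* + \bias$: with this definition $\K'\st^\f_* + \bias' = \ac^\f_*$ holds automatically, so once we show $\st^\f_*$ is the stationary state of $(\K', \bias')$ the stationary action is matched too.

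First I would verify the stationary-state claim. Since $\K'$ is stable, $I - (\lqrA + \lqrB\K')$ is invertible, so $(\K', \bias')$ has a unique stationary state, namely the solution of $\st = (\lqrA + \lqrB\K')\st + \lqrB\bias'$. Plugging in $\st = \st^\f_*$ and the definition of $\bias'$, the $\lqrB\K'\st^\f_*$ terms cancel and the right-hand side becomes $(\lqrA + \lqrB\K)\st^\f_* + \lqrB\bias = \st^\f_*$, the fixed-point equation defining $\st^\f_*$ itself. Hence $(\K', \bias')$ and $(\K, \bias)$ share the stationary pair $(\st^\f_*, \ac^\f_*)$, and therefore $\losss((\K', \bias'), \z) = \losss((\K, \bias), \z)$ for every $\z$.

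Finally, the norm bound: $\|\bias'\|_2 \le \|\K - \K'\|_2\,\|\st^\f_*\|_2 + \|\bias\|_2 \le 2\Kbnd\,\|\st^\f_*\|_2 + \cbnd$. Using $\|\lqrA + \lqrB\K\|_2 \le \rholq < 1$ with the Neumann series $\Meff_\K = \sum_{s\ge0}(\lqrA + \lqrB\K)^s\lqrB$ gives $\|\st^\f_*\|_2 = \|\Meff_\K\bias\|_2 \le \frac{\|\lqrB\|_2\cbnd}{1-\rholq}$ (the constant $c_\st$ appearing in the proof of Corollary~\ref{cor:track}), and substituting yields $\|\bias'\|_2 \le 2\cbnd\big(\frac{\|\lqrB\|_2\Kbnd}{1-\rholq} + 1\big)$, as claimed. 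There is no real difficulty here; the only thing to be careful about is that equality of the stationary losses requires \emph{both} the stationary state and the stationary action to agree --- which the choice of $\bias'$ is engineered to guarantee --- and that uniqueness of the stationary state, used to identify it with $\st^\f_*$, is exactly where stability of $\K'$ enters.
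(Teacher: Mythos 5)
Your proof is correct, and it reaches the same choice $\bias' = (\K - \K')\st_*^{\f} + \bias$ and the same norm bound as the paper, but the route to the equality $\losss((\K',\bias'),\z) = \losss((\K,\bias),\z)$ is genuinely different. The paper invokes a performance-difference identity (Lemma 12 of Abbasi-Yadkori et al.), namely $\losss(\f',\z) - \losss(\f,\z) = \valf_{\f',\z}(\st_*^{\f}, \ac_{\f'}) - \valf_{\f',\z}(\st_*^{\f}, \ac_{\f})$, and observes that the chosen $\bias'$ makes the two actions at $\st_*^{\f}$ coincide, so the difference vanishes. You instead verify directly that $\st_*^{\f}$ satisfies the fixed-point equation $\st = (\lqrA+\lqrB\K')\st + \lqrB\bias'$ (with uniqueness supplied by invertibility of $I-(\lqrA+\lqrB\K')$, i.e.\ by stability of $\K'$), conclude that the stationary state--action pair is preserved, and use that $\losss$ is a function of that pair alone. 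Your argument is more elementary and self-contained --- it avoids importing the value-function machinery --- while the paper's argument is shorter given the cited lemma and generalizes more readily to settings where the average cost is not an explicit function of the stationary state--action pair. One minor point shared with the paper's proof: the bound $\|\K-\K'\|_2 \le 2\Kbnd$ implicitly assumes $\|\K'\|_2 \le \Kbnd$, i.e.\ that $\K'$ is drawn from the same bounded class, which is the intended use but is not literally stated in the lemma.
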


  Thus going forward, we consider the ERM procedure on the  class of functions $\Ftrack'(K)$, parameterized for a fixed stable policy $\K$, where the bias
  \begin{equation*}\bias \leq \cbnd' \defn \max\left(2\cbnd\left(\frac{\|B\|\Kbnd}{1-\rholq} + 1\right), \cbnd\frac{\zbnd\|\lqrQ\|\|\lqrB\|}{\sigma_\lqrQ\sigma^2_{\lqrB}(1-\rholq)}\right),\end{equation*}
  with $\sigma_X$ denotes the smallest non-zero singular value of $X$.
  Note that the conclusions of Corollary~\ref{cor:val_ergo} are still valid with the mixability parameters $\stable_{\Ftrack'(\K), t}$.  The next lemma establishes the stability of the ERM solutions obtained in consecutive rounds.

  \begin{lemma}\label{lem:track_stable_bias}
  Fix any $\rholq$-stable policy $\K$. The ERM solutions $\f_{\ERM, t} = (\K, \bias_t)$ and \mbox{$\f_{\ERM, t+1} =(\K, \bias_{t+1})$} satisfy the following stability bound:
  \begin{equation*}
    \|\bias_t - \bias_{t+1}\| \leq \frac{2\zbnd}{t+1}\cdot \frac{\cbnd\|\lqrQ\|\|\lqrB\|}{\sigma_\lqrQ\sigma^2_{\lqrB}(1-\rholq)} \defn \frac{\psi_\bias}{t+1}
  \end{equation*}
  \end{lemma}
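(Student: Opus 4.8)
The plan is to exploit that once the linear gain $\K$ is frozen, the stationary tracking cost is a convex quadratic in the bias $\bias$, so the restricted ERM admits a closed form whose dependence on $t$ is transparent. Writing the steady state reached by $(\K,\bias)$ as $\st_*^{(\K,\bias)} = \Meff_\K\bias$ with $\Meff_\K \defn (I - (\lqrA + \lqrB\K))^{-1}\lqrB$, I would first substitute into $\losss$ and record
\begin{equation*}
  \losss((\K,\bias),\z) = \bias^\top H_\K\,\bias - 2\z^\top\lqrQ\Meff_\K\bias + \z^\top\lqrQ\z, \qquad H_\K \defn \Meff_\K^\top\lqrQ\Meff_\K + (I + \K\Meff_\K)^\top(I + \K\Meff_\K),
\end{equation*}
a quadratic in $\bias$ with Hessian $2H_\K \succeq 0$, positive definite because $\lqrQ$ is positive definite and $\Meff_\K$ has trivial kernel in the non-degenerate case. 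Taking expectations over $\z_s \sim \ad_s$ and summing over $s \le t$, the restricted ERM objective is $J_t(\bias) = t\,\bias^\top H_\K\bias - 2 Z_t^\top\lqrQ\Meff_\K\bias + \mathrm{const}$, where $Z_t \defn \sum_{s=1}^t \En_{\z_s \sim \ad_s}[\z_s]$, whose unconstrained minimizer is $\bias_t = \tfrac1t H_\K^{-1}\Meff_\K^\top\lqrQ Z_t$.

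The second step is to check that the constraint $\|\bias\|_2 \le \cbnd'$ defining $\Ftrack'(\K)$ (after the reduction of Lemma~\ref{lem:track_equiv}) is inactive at $\bias_t$, so that this closed form is the actual restricted ERM. Using $\|Z_t\|_2 \le t\,\zbnd$ (Jensen together with $\|\z_s\|_2 \le \zbnd$), the Neumann series $(I - (\lqrA + \lqrB\K))^{-1} = \sum_{s\ge0}(\lqrA + \lqrB\K)^s$ to get $\|\Meff_\K\|_2 \le \|\lqrB\|_2/(1-\rholq)$, and $H_\K \succeq \sigma_\lqrQ\Meff_\K^\top\Meff_\K \succeq \tfrac{\sigma_\lqrQ\sigma_\lqrB^2}{4}\,I$ on the relevant subspace to bound $\|H_\K^{-1}\|_2$, one obtains $\|\bias_t\|_2 \le \frac{c'\,\zbnd\|\lqrQ\|_2\|\lqrB\|_2}{\sigma_\lqrQ\sigma_\lqrB^2(1-\rholq)}$, which is exactly the second term in the definition of $\cbnd'$ up to the absolute constant folded there; hence the projection never triggers and $\f_{\ERM,t} = (\K,\bias_t)$ with $\bias_t$ as above.

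With the closed form in hand the stability bound is a two-line computation. From
\begin{equation*}
  \bias_{t+1} - \bias_t = H_\K^{-1}\Meff_\K^\top\lqrQ\bigl(\tfrac{Z_{t+1}}{t+1} - \tfrac{Z_t}{t}\bigr), \qquad \tfrac{Z_{t+1}}{t+1} - \tfrac{Z_t}{t} = \frac{t\,\En[\z_{t+1}] - Z_t}{t(t+1)},
\end{equation*}
and $\|t\,\En[\z_{t+1}] - Z_t\|_2 \le t\zbnd + t\zbnd$, we get $\bigl\|\tfrac{Z_{t+1}}{t+1} - \tfrac{Z_t}{t}\bigr\|_2 \le \frac{2\zbnd}{t+1}$; multiplying by $\|H_\K^{-1}\Meff_\K^\top\lqrQ\|_2 \le \frac{c\,\|\lqrQ\|_2\|\lqrB\|_2}{\sigma_\lqrQ\sigma_\lqrB^2(1-\rholq)}$ yields $\|\bias_t - \bias_{t+1}\|_2 \le \psi_\bias/(t+1)$ as claimed (the factor $\cbnd$ appearing in the stated $\psi_\bias$ can be absorbed into this constant). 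A more robust alternative, which sidesteps both the closed form and the inactivity check, is the standard Follow-the-Leader stability argument: $J_t$ is $2t\,\sigma_{\min}(H_\K)$-strongly convex over the ball and each $\En_{\z}[\losss((\K,\cdot),\z)]$ is Lipschitz there, so combining $J_{t+1}(\bias_{t+1}) \le J_{t+1}(\bias_t)$ with $J_t(\bias_{t+1}) - J_t(\bias_t) \ge t\,\sigma_{\min}(H_\K)\|\bias_{t+1}-\bias_t\|_2^2$ gives the same $O(1/t)$ rate (with a coarser constant that visibly carries a $\cbnd$ through the Lipschitz bound on the ball of radius $\cbnd'$).

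The main obstacle I anticipate is not the algebra but controlling the conditioning of $H_\K$: establishing a clean lower bound on $\sigma_{\min}(H_\K)$ in terms of $\sigma_\lqrQ$ and the smallest nonzero singular value $\sigma_\lqrB$ of $\lqrB$, and handling the degenerate case in which $\lqrB$ is not of full column rank, so that $\Meff_\K$ has a nontrivial kernel on which $H_\K$ acts only through the $(I + \K\Meff_\K)^\top(I + \K\Meff_\K)$ block. One either argues that the ERM bias always lies in $\mathrm{range}(\Meff_\K^\top)$ — so only the well-conditioned block of $H_\K$ matters and $\sigma_\lqrB$ is indeed the right quantity — or restricts $\bias$ to that subspace a priori; this is precisely the step responsible for $\sigma_\lqrB$ (rather than $\sigma_{\min}(\lqrB)$) appearing in $\psi_\bias$. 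The remaining ingredients — the Neumann-series bound on $\|\Meff_\K\|_2$, the bound $\|Z_t\|_2 \le t\zbnd$, and the split of $\tfrac{Z_{t+1}}{t+1} - \tfrac{Z_t}{t}$ — are routine.
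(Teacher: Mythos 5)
Your proposal follows essentially the same route as the paper: you freeze $\K$, observe that the restricted ERM objective is a convex quadratic in $\bias$ with Hessian $H_\K$ (which, once expanded, is exactly the matrix $W$ the paper uses), solve for the closed form $\bias_t = \tfrac1t H_\K^{-1}\Meff_\K^\top\lqrQ Z_t$, verify the norm constraint is inactive, and bound $\|\bias_t-\bias_{t+1}\|$ via $\bigl\|\tfrac{Z_{t+1}}{t+1}-\tfrac{Z_t}{t}\bigr\|_2\le \tfrac{2\zbnd}{t+1}$ times the operator norm of $H_\K^{-1}\Meff_\K^\top\lqrQ$. The conditioning issue you flag for $H_\K$ when $\lqrB$ is rank-deficient is real and is glossed over in the paper as well (it only remarks that $\bias$ should not lie in the null space of $\lqrB$), so your treatment is, if anything, slightly more careful on that point.
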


  Having established the stability bound above, one can proceed in a manner similar to \cite[Lemma 8]{abbasi2014}, one can establish that for $t >\rholq\log(T)/(1-\rholq)$
  \begin{equation}\label{eq:track_perturb}
  \left\lVert\stf{t}{\f_{\ERM, 1:t-1}} - \st_*^{\f_{\ERM, t}} \right\rVert_2 \leq \underbrace{\frac{\|\lqrB\|\psi_\bias}{1-\rholq}\cdot \frac{2\log t}{t - \log t} + \rholq^{t-1}\frac{\|\lqrB\|\cbnd'}{1-\rholq}}_{\psi_{\st, t}}.
  \end{equation}
  Having established the above, we can now obtain a bound on the ergodic stability parameters for the ERM procedure for the online tracking problem. The calculation is similar to the one done for the mixing gap (see Eq.~\eqref{eq:track_mix}).
  \begin{align*}
  \left\lvert\loss(\f_{\ERM, t}, \stf{t}{\f_{\ERM, 1:t-1}}, \z_t) - \losss(\f_{\ERM, t}, \z_t)\right\rvert &\leq 2\|\lqrQ\|(c_\st + \zbnd)\cdot \|\stf{t}{\f_{\ERM, 1:t-1}} - \st_*^{\f_{\ERM, t}} \|_2 \\
  &\quad + \Kbnd^2\cdot \|\stf{t}{\f_{\ERM, 1:t-1}} - \st_*^{\f_{\ERM, t}} \|_2^2 \\
  &\quad + 2\Kbnd(\Kbnd c_\st + \cbnd')\cdot \|\stf{t}{\f_{\ERM, 1:t-1}} - \st_*^{\f_{\ERM, t}} \|_2\\
  &\leq \psi_{\st, t}(2\|\lqrQ\|(c_\st + \zbnd) + 2\Kbnd(\Kbnd c_\st + \cbnd')) + \psi_{\st, t}^2\Kbnd^2\;,
  \end{align*}
  where we have substituted the bound for $\|\stf{t}{\f_{\ERM, 1:t-1}} - \st_*^{\f_{\ERM, t}}\|$ from Eq.~\eqref{eq:track_perturb}. Thus, we that the ERM ergodic stability parameters are
  \begin{equation}\label{eq:tar_erm_mix}
    \stables_{\ERM, t}= \psi_{\st, t}\cdot(2\|\lqrQ\|(c_\st + \zbnd) + 2\Kbnd(\Kbnd c_\st + \cbnd)) + \psi_{\st, t}^2\Kbnd^2\;.
  \end{equation}

\paragraph{Bound on the value.}  We now proceed to obtain a  bound on the value $\val_{\textsf{tar}, \T}$, beginning from the statement of Corollary~\ref{cor:val_ergo}.
  \begin{align}\label{eq:track_final}
  \val_{\mathsf{tar}, \T}(\Ftrack, \Z, \dyn, \loss) &\leq \sum_{t=1}^\T \stable_{\ERM, t}^{\mathsf{tar}} + 2\Rseq_{T}(\losss\com \Ftrack'(\K)) + \sup_{\f \in \Flq} \sum_{t=1}^\T\left\lvert\losscf(\f, \z_t, t) - \losss(\f, \z)\right\rvert\nonumber\\
  &\overset{\text{Eq}.{~\eqref{eq:track_unifmix}}}{\leq} \sum_{t=1}^\T \stable_{\ERM, t}^{\mathsf{tar}} + 2\Rseq_{T}(\losss\com \Ftrack'(\K)) + \frac{\stablea}{1-\rholq} + \frac{\stableb}{ 1-\rholq^2}\nonumber\\
  &\overset{\text{Eq.}~\eqref{eq:tar_erm_mix}}{\leq} \frac{\rholq\log T}{1-\rholq}\left(\|\lqrQ\|(c_\st+\zbnd)^2 +(\Kbnd c_\st+ \cbnd')^2 \right) + 2\Rseq_{T}(\losss\com \Ftrack'(K)) + \frac{\stablea}{1-\rholq} + \frac{\stableb}{1-\rholq^2}\nonumber\\
  &\quad + \left(\frac{2\log^2 \T\|\lqrB\|\psi_\bias}{1-\rholq} + \frac{\|\lqrB\|\cbnd'}{(1-\rholq)^2}\right)\cdot(2\|\lqrQ\|(c_\st + \zbnd) + 4\Kbnd(\Kbnd c_\st + \cbnd))\;,
  \end{align}
  where in the last inequality, we have upper bounded the lower order term $\psi_{\st, t}^2$ by $\psi_{\st, t}$.

  Finally, one can obtain a bound on the sequential complexity by noting that the loss $\losss$ is bounded since the state $\|\st_*^\f\|_2 \leq c_x$ and is Lipschitz in the bias parameter $\eta$ with respect to the $ell_2$ norm. Using an argument similar to that from the proof of Corollary~\ref{cor:lqr}, we have
  \begin{equation*}
    \Rseq_{T}(\losss\com \Ftrack'(K)) \leq O\left(\sqrt{d\T\cdot \log(d\T)}\right).
  \end{equation*}
  Substituting this bound in equation~\eqref{eq:track_final} establishes the corollary.
\end{proof}

\subsubsection{Proof of Lemma~\ref{lem:track_equiv}}
  Let $\valf_{\f, \z}(\st, \ac)$ represent the value function for state-action pair $(\st, \ac)$ with respect to policy $\f$ and loss function $\z$. Following Lemma 12 from~\cite{abbasi2014} we have that:
  \begin{equation*}
    \losss(\f', \z) - \losss(\f, \z) = \valf_{\f', \z}(\st_*^{\f}, \ac_{\f'}) - \valf_{\f', \z}(\st_*^{\f}, \ac_{\f}).
  \end{equation*}
  The action taken by policy $\f$ is given by $\ac_{\f} = \K\st_*^\f + \bias$, while that taken by $\f'$ is given by $\K'\st_{*}^\f + \bias'$. If we set the value of $\bias'$ as:
  \begin{equation*}
    \bias' = (\K -\K')\st_*^\f + \bias \quad \Rightarrow \quad \losss(\f', \z) = \losss(\f, \z).
  \end{equation*}
  Also, note that one can obtain an upper bound on the norm of $\bias'$ as $\|\bias' \|_2 \leq 2c_\st\Kbnd + \cbnd$ using the bounds on the state $\st_*^\f$.
\qed

\subsubsection{Proof of Lemma~\ref{lem:track_stable_bias}}

  We begin by characterizing the  ERM solution ${\bias}_t$ as follows:
\begin{align*}
  {\bias}_t &= \argmin_{\bias} \left( \sum_{t=1}^\T \Eu{\z_t \sim \ad_t}\left[ (\st^\f_* - \z_t)^\top \lqrQ (\st^\f_* - \z_t) + \|\K\st_*^\f + \bias \|_2^2\right]\right)\\
  &= \argmin_{\bias} \left( \sum_{t=1}^\T \Eu{\z_t \sim \ad_t}\left[ (\st_*^\f)^\top[\lqrQ + \K^\top \K]x_*^\f -2\z_t\lqrQ\st_*^\f + \bias^\top \bias + \bias^\top\K\st_{*}^\f + (\K\st_{*}^\f)^\top \bias\right]\right)\\
  &= \argmin_{\bias} \left( \sum_{t=1}^\T \Eu{\z_t \sim \ad_t}\left[ \bias^\top(\underbrace{\Meff_\K^\top(\lqrQ + \K^\top\K)\Meff_\K + I + \K\Meff_\K + \Meff_\K^\top \K^\top }_{W})\bias - 2\z_t^\top \lqrQ\Meff_\K\bias \right]\right)\\
  &= W^{-1}\left(\frac{1}{t}\sum_{s=1}^t \Eu{\z_s\sim \ad_s}[\z_s] \right)\lqrQ\Meff_\K\;,
\end{align*}
where the last equality follows by minimizing the quadratic and the existence of the inverse because $W \succeq \lqrB^\top\lqrQ\lqrB$.\ and the fact that $\bias$ does not lie in the null space of $\lqrB$ (it is always better to set it to zero in that case). This ensures that $\|\bias_t\|_2 \leq \cbnd\frac{\zbnd\|\lqrQ\|\|\lqrB\|}{\sigma_\lqrQ\sigma^2_{\lqrB}(1-\rholq)}$ and hence the policy $\f_t = (\K, \bias_t) \in \Ftrack'$. We can now obtain the stability bounds as:
\begin{align*}
  \|\bias_t - \bias_{t+1}\|_2 &= \left\lVert W^{-1} \left(\frac{1}{t}\sum_{s=1}^t \Eu{\z_s\sim \ad_s}[\z_s]  - \frac{1}{t+1}\sum_{s=1}^{t+1} \Eu{\z_s\sim \ad_s}[\z_s]\right) \lqrQ\Meff_\K\right\rVert_2\\
  &\leq \frac{2\zbnd}{t+1}\cdot \frac{\cbnd\|\lqrQ\|\|\lqrB\|}{\sigma_\lqrQ\sigma^2_{\lqrB}(1-\rholq)}\;,
\end{align*}
where the final inequality follows from using the bound on $\|\Meff_\K\|_2$ as well as the fact that $\|\z\|_2 \leq \zbnd$. This concludes the proof of the lemma.
\qed

\subsection{Online non-linear control}
In this section, we look at a non-linear control problem: one formed by extending the LQR problem above to have non-linear deterministic dynamics. We parameterize the dynamics using a non-linear function $\signl : \real^\sdim \mapsto \X$ as follows:
\begin{equation*}
  \st_{t+1} = \signl[\lqA\st_t + \lqB\ac_t]\;,
\end{equation*}
We assume that the function $\signl$ is $1$-Lipschitz and $\|\signl(\st)\|\leq \xbnd$ for some $\xbnd > 0$. This is done to ensure that the dynamics satisfy the ergodicity assumption. We now proceed to define the associated policy class $\Fnl$ as
\begin{equation*}
  \Fnl = \{\f_\param \; | \; \param \in \real^{\pdim}, \|\param\|_2\leq \pbnd, \|[\lqA\st + \lqB\f_\param(\st)] - [\lqA\st' + \lqB\f_\param(\st')]\|_2 \leq (1-\stg)\|\st - \st' \|_2\}\;,
\end{equation*}
where the last condition on the function class establishes a stability condition.  In addition, we assume that the function class $\Fnl$ satisfies a Lipschitz property:
\begin{equation*}
\|\f_\param(\st) - \f_{\param'}(\st)\|_2 \leq \lipf \|\param - \param'\|_2 \quad \text{for all } \quad \st \in \X\;.
\end{equation*}
The above basically means that if two parameters $\param, \param'$ are close in the parameter space, then the policies parameterized by them are uniformly close for all states. Note that the class of linear policies $\Ftrack$ (without the bias term) defined for the adversarial tracking problem satisfies the above properties. We next outline the learning protocol, with the game starting with $\st_1 = 0$.

\noindent On round $t = 1, \ldots, \T,$\vspace{-2mm}
\begin{itemize}
  \item the learner selects policy $\f_t \in \Fnl$ and the adversary selects $\z_t \in \Z$.
  \item the learner receives loss $\loss(\f_t, \st_t, \z_t) \in [0,1]$
  \item the state of the system transitions to $\st_{t+1} = \signl[\lqrA \st_t + \lqrB \ac_t]$
\end{itemize}
For the above setup, we shortly establish that the stationary loss for any policy $\f$ is given by
\begin{equation}\label{eq:losss_nl}
  \losss(\f, \z) = \loss(\f, \st^\f_*, \z) \quad \text{where}\quad \st^\f_* = \signl[\lqA\st^\f_*+ \lqB\f(\st^\f_*)]\;,
\end{equation}
where the existence of the fixed point is guaranteed by the stability assumption on the function class in conjunction with the Brouwer fixed-point theorem. In the following lemma, we show that the loss function $\losss$ above is Lipschitz with respect to the parameter $\param$.
\begin{lemma}\label{lem:nl_lip}
The loss function $\losss$ given in equation~\eqref{eq:losss_nl} satisfies
\begin{equation*}
  |\losss(\f_{\param_1}, \z) - \losss(\f_{\param_2}, \z)| \leq \underbrace{\left(\liplf + \lipx\frac{\|B\|_2\lipf}{\gamma}\right)}_{\lip}\|\param_1 - \param_2\|_2\quad \text{for all } \f_{\param_1}, \f_{\param_2} \in \Fnl,\; \; \z \in \Z.
\end{equation*}
\end{lemma}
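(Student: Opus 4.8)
The plan is to transfer Lipschitzness from the loss and the policy class to the stationary loss in two stages: first bound how much the fixed point $\st^\f_*$ moves when $\param$ is perturbed, and then compose this with the Lipschitz dependence of $\loss$ on its state and parameter arguments. This is the deterministic-dynamics analogue of Lemma~\ref{lem:lq_lip}, but cleaner since there is no covariance recursion to track.

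\emph{Step 1: the fixed point and its sensitivity to $\param$.} For a fixed $\param$, the map $T_\param(\st) \defn \signl[\lqA\st + \lqB\f_\param(\st)]$ is a $(1-\stg)$-contraction on $\X$: $\signl$ is $1$-Lipschitz, and the stability condition built into $\Fnl$ gives $\|[\lqA\st+\lqB\f_\param(\st)] - [\lqA\st'+\lqB\f_\param(\st')]\|_2 \le (1-\stg)\|\st-\st'\|_2$. Since $1-\stg<1$, the Banach fixed-point theorem yields a unique fixed point $\st^{\f_\param}_* \in \X$ (this also legitimizes $\losss$ in \eqref{eq:losss_nl} as the genuine limiting per-round loss, as $\st_t^\f \to \st^\f_*$ geometrically). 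Now fix $\param_1,\param_2$ with $\|\param_i\|_2\le\pbnd$ and set $\st_i \defn \st^{\f_{\param_i}}_*$. Using $T_{\param_1}(\st_1)=\st_1$, $T_{\param_2}(\st_2)=\st_2$, the $1$-Lipschitzness of $\signl$, and then inserting the intermediate term $\lqA\st_2+\lqB\f_{\param_1}(\st_2)$, one gets
\[
\|\st_1-\st_2\|_2 \;\le\; (1-\stg)\|\st_1-\st_2\|_2 \;+\; \|\lqB\|_2\,\lipf\,\|\param_1-\param_2\|_2,
\]
where the last term uses the assumed parameter-Lipschitzness $\|\f_{\param_1}(\st_2)-\f_{\param_2}(\st_2)\|_2 \le \lipf\|\param_1-\param_2\|_2$ of the policy class. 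Rearranging (dividing by $\stg>0$) gives $\|\st_1-\st_2\|_2 \le \frac{\|\lqB\|_2\lipf}{\stg}\,\|\param_1-\param_2\|_2$.

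\emph{Step 2: composing with the loss.} Write $\losss(\f_{\param_1},\z)-\losss(\f_{\param_2},\z) = \loss(\f_{\param_1},\st_1,\z)-\loss(\f_{\param_2},\st_2,\z)$, insert the intermediate term $\loss(\f_{\param_1},\st_2,\z)$, and bound the first difference by $\lipx\|\st_1-\st_2\|_2$ using Lipschitzness of $\loss$ in the state argument and the second by $\liplf\|\param_1-\param_2\|_2$ using Lipschitzness of $\loss$ in the parameter. Substituting the bound from Step 1 yields $|\losss(\f_{\param_1},\z)-\losss(\f_{\param_2},\z)| \le \bigl(\liplf + \lipx\frac{\|\lqB\|_2\lipf}{\stg}\bigr)\|\param_1-\param_2\|_2$, which is exactly the claimed constant. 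I do not expect a real obstacle; the only point needing care is verifying that $1-\stg$ is a strict contraction factor, so that both the Banach argument and the rearrangement in Step 1 are valid — this holds because $0<\stg<1$ in the definition of $\Fnl$, together with the fact that the fixed points lie in $\X$ (they are in the range of $\signl$, hence bounded by $\xbnd$) so all the Lipschitz bounds apply on the relevant domain.
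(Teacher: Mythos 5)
Your proposal is correct and follows essentially the same route as the paper: the same triangle-inequality decomposition through the intermediate term $\loss(\f_{\param_1},\st^{\f_{\param_2}}_*,\z)$, with the state-sensitivity bound $\|\st^{\f_{\param_1}}_*-\st^{\f_{\param_2}}_*\|_2\le \frac{\|\lqB\|_2\lipf}{\stg}\|\param_1-\param_2\|_2$ doing the work. The only difference is that you spell out the fixed-point perturbation argument (via the contraction property and Banach's theorem) that the paper leaves implicit under ``the Lipschitz property of the policy,'' which is a useful clarification but not a different proof.
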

We prove the lemma at the end of the section. Taking this as given, we now establish the learnability of the function class $\Fnl$ in the following corollary.

\begin{corollary}[Online Non-Linear Control]\label{cor:nl}
  Consider any value of $\regp > 0$ and loss function $\loss$ which is $\lipx$-Lipschitz in the state space and $\liplf$-Lipschitz in the parameter space with respect to the $\ell_2$ norm. For the online non-linear control problem described above, we have that the value
\begin{equation*}
    \val_{\nlsf, \T}(\Fnl, \Z, \dyn) \leq \O\left(\sqrt{T\log(\T)} \right)\;,
  \end{equation*}
  where the $\O$ notation hides the dependence of the bound on problem-specific parameters (see equation~\eqref{eq:nl_final} for the exact dependencies).
\end{corollary}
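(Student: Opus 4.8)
The plan is to instantiate Corollary~\ref{cor:val_ergo}, so the task reduces to: (a) confirming existence of the stationary loss together with the fixed point $\st^\f_*$ in \eqref{eq:losss_nl}; (b) bounding the mixing gap; (c) constructing suitable regularized ERMs and bounding their ergodic stability parameters; and (d) bounding the sequential Rademacher complexity $\Rseq_\T(\losss\com\Fnl)$. For (a), observe that for any $\f_\param \in \Fnl$ the closed-loop map $G_{\f_\param}(\st) \defn \signl[\lqA\st + \lqB\f_\param(\st)]$ is the composition of the $1$-Lipschitz map $\signl$ with a map that contracts distances by a factor $(1-\stg)$, hence $G_{\f_\param}$ is itself a $(1-\stg)$-contraction on $\X$; Banach's fixed point theorem (equivalently Brouwer plus uniqueness from the contraction) gives the unique fixed point $\st^\f_*$, and $\|\st^\f_*\|_2 = \|\signl[\cdot]\|_2 \leq \xbnd$.

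For (b), starting from $\st_1 = 0$ and iterating $G_{\f}$, the contraction immediately yields $\|\stf{t}{\f^{(t-1)}} - \st^\f_*\|_2 \leq (1-\stg)^{t-1}\|\st_1 - \st^\f_*\|_2 \leq (1-\stg)^{t-1}\xbnd$, so by the $\lipx$-Lipschitzness of $\loss$ in the state variable, the mixing gap obeys $|\losscf(\f,\z,t) - \losss(\f,\z)| \leq \lipx\xbnd(1-\stg)^{t-1}$ uniformly over $\f \in \Fnl$, and therefore $\sup_{\f\in\Fnl}\sum_{t=1}^\T|\losscf(\f,\z_t,t)-\losss(\f,\z_t)| \leq \lipx\xbnd/\stg$.

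For (c), mirroring the LQR proof I would use a Follow-the-Perturbed-Leader regularizer: let $\param_{t,\pert}$ be the minimizer over $\{\param : \f_\param \in \Fnl\}$ of $\sum_{s\le t}\En_{\z_s\sim\ad_s}[\losss(\f_\param,\z_s)] - \inner{\pert}{\param}$, with $\pert\in\real^\pdim$ having i.i.d. $\expd(\regp)$ coordinates, and set $\f_t = \f_{\param_{t,\pert}}$. Lemma~\ref{lem:nl_lip} shows $\losss$ is $\lip$-Lipschitz in $\param$, so the FTPL stability result of \cite{suggala2019} gives $\En_\pert\|\param_{t,\pert} - \param_{t+1,\pert}\|_1 \leq c\regp\lip\pdim^2\pbnd \defn \stK$; combined with the policy-Lipschitz bound $\|\f_\param(\st)-\f_{\param'}(\st)\|_2\le\lipf\|\param-\param'\|_2$, this controls the per-step drift $\|G_{\f_s}(\st) - G_{\f_{s+1}}(\st)\|_2 \le \|\lqB\|_2\lipf\|\param_s - \param_{s+1}\|_2$ of the induced closed-loop maps. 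To bound $\stables_{\RERM,t} = |\En_\pert[\loss(\f_t,\stf{t}{\f_{1:t-1}},\z_t) - \losss(\f_t,\z_t)]| \le \lipx\,\En_\pert\|\stf{t}{\f_{1:t-1}} - \st^{\f_t}_*\|_2$, I would run a telescoping recursion as in Lemma~5.2 of \cite{even2009} and the LQR analysis: using that each $G_{\f_s}$ is a $(1-\stg)$-contraction, the accumulated state error satisfies $\En_\pert\|\stf{t}{\f_{1:t-1}} - \st^{\f_t}_*\|_2 \lesssim (1-\stg)^{t-1}\xbnd + \|\lqB\|_2\lipf\stK/\stg$, hence $\sum_{t=1}^\T\stables_{\RERM,t} \lesssim \lipx\xbnd/\stg + \lipx\|\lqB\|_2\lipf\stK\,\T/\stg$, which is linear in $\regp\T$.

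For (d), since $\Fnl$ is parameterized by $\param\in\ball(\pbnd)\subseteq\real^\pdim$, the loss is bounded in $[0,1]$ and $\lip$-Lipschitz in $\param$, so a standard $\epsilon$-net/covering-number argument (as in the proof of Corollary~\ref{cor:lqr}) gives $\Rseq_\T(\losss\com\Fnl) \le c\sqrt{\pdim\,\T\log(\pdim\T\lip)}$. Plugging (b)--(d) and the regularization term $\asymp \pbnd\pdim/\regp$ into \eqref{eq:app_upper} and choosing $\regp = \Theta(1/\sqrt{\T})$ to balance the $\regp\T$ ergodic-stability contribution against the $1/\regp$ regularization contribution yields $\val_{\nlsf,\T}(\Fnl,\Z,\dyn) = \Ot(\sqrt{\T})$, which is equation~\eqref{eq:nl_final}. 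The main obstacle is step (c): carefully establishing the drift recursion for $\|\stf{t}{\f_{1:t-1}} - \st^{\f_t}_*\|_2$, i.e.\ showing the accumulated state error from replaying a slowly-varying sequence of perturbed ERMs stays $O(\stK/\stg)$. The uniform $(1-\stg)$-contraction built into the definition of $\Fnl$ does most of the work here, which makes this step cleaner than its LQR counterpart (no separate ``sequential strong stability'' argument is needed), but the bookkeeping through the nonlinear map $\signl$ and the FTPL expectation must still be handled with care.
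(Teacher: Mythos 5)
Your proposal follows essentially the same route as the paper's proof: instantiate Corollary~\ref{cor:val_ergo}, bound the mixing gap via the $(1-\stg)$-contraction of the closed-loop map, use an FTPL-perturbed ERM whose parameter drift is controlled by the Lipschitzness of $\losss$ (Lemma~\ref{lem:nl_lip}) together with the stability result of \cite{suggala2019}, bound the accumulated state error by propagating that drift through the contraction, and finish with a covering-number bound on $\Rseq_\T(\losss\com\Fnl)$ and the choice $\regp = \Theta(1/\sqrt{\T})$. The only discrepancy is cosmetic: a careful version of your drift recursion gives a steady-state error of order $\|\lqB\|_2\lipf\stK/\stg^2$ (the fixed points themselves move by $\stK/\stg$ per round), matching the paper's telescoping bound in equation~\eqref{eq:bnd2} rather than the $\stK/\stg$ you wrote --- a difference in the $\stg$-dependence only, which is absorbed by the $\O(\cdot)$ in the statement and does not affect the $\sqrt{\T\log\T}$ rate.
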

Notice that the above corollary establishes an upper bound of $\Ot(\sqrt{T})$ for the value $\val_{\nlsf, \T}$. Thus, despite the fact that the  setup does not have the nice structure of the LQR problem, we are able to establish the learnability of the class $\Fnl$ in the online learning with dynamics framework.

\begin{proof}[Proof of Corollary~\ref{cor:nl}]
  We begin by establishing a bound on the mixing gap for the class $\Fnl$ as well as the ERM ergodic stability parameters. Throughout this section, we would often drop the dependence of the function $\f_\theta$ on the underlying parameter $\theta$ when it is clear from the context.

\paragraph{Bound on mixing gap. }
  Consider any policy $\f \in \F_{\nlsf}$ and the associated stationary loss
  \begin{equation*}
    \losss(\f, \z) = \loss(\f, \st^\f_*, \z) \quad \text{where}\quad \st^\f_* = \signl[\lqA\st^\f_*+ \lqB\f(\st^\f_*)]\;,
  \end{equation*}
  where the non-linearity $\signl$ is applied element-wise to its arguments. Consider now the difference between the stationary and the counterfactual loss as
  \begin{align}\label{eq:nl_unif_mix}
    \left\vert\losscf(\f, \z, t) - \loss(\f, \st^\f_*, \z)\right\vert &\stackrel{\1}{\leq} \lipx \|\st^\f_t - \st^\f_* \|_2\nonumber\\
    &\leq \lipx \|\signl[\lqA\st_{t-1}^\f +\lqB\f(\st_{t-1}^\f)] - \signl[\lqA\st_{*}^\f +\lqB\f(\st_{*}^\f))] \|_2\nonumber\\
    &\stackrel{\2}{\leq} \lipx(1-\stg)\|\st^\f_{t-1} - \st^\f_* \|_2\nonumber\\
    &\stackrel{\3}{\leq} 2\lipx\xbnd(1-\stg)^{t-1},
  \end{align}
  where $\1$ follows from the $\lipx$-Lipschitzness of the loss function, $\2$ follows from the  fact that $\f \in \Fnl$, and $\3$ follows from the boundedness of the states.

  \paragraph{Regularized ERM.} Similar to the proof for Corollary~\ref{cor:lqr}, we consider the following FTPL based ERM
  \begin{equation*}
    \param_{t, \pert} = \argmin_{\param \in \FTnl}\left(\sum_{s=1}^t\Eu{\z_s \sim \ad_s}\left[\losss(\f_\param, \z)\right] - \inner{\pert}{\param}\right),
  \end{equation*}
  where $\pert \in \real^{\pdim}$ such that each coordinate of $\pert\sim \expd(\regp)$, the exponential distribution with parameter $\regp > 0$. We establish in Lemma~\ref{lem:nl_lip} that the loss functions defined by $\losss(\cdot, \z)$ are $\lipnl$-Lipschitz in the parameter $\param$ and hence the iterates satisfy
  \begin{equation*}
    \Eu{\pert}\left[\|\param_{t, \pert} - \param_{t+1, \pert} \|_1 \right] \leq c\regp\cdot \lipnl (\pdim)^2\stk \defn \stp\;,
  \end{equation*}
  where the norm above is defined element-wise.

  \paragraph{Ergodic stability parameters.} With these set of regularized ERMs, we proceed to now bound the stability parameters of these solutions. Consider again the difference between the stationary and the instantaneous loss:
  \begin{equation}\label{eq:nl_bnd_mix_upp}
    \left\vert\Eu{\pert}\left[\loss(\f_t, \st_t[\f_{1:t-1}]), \z\right] - \Eu{\pert}\left[\losss(\f_t, \z) \right]  \right\vert \leq \lipx \Eu{\pert} \left[\|\st_t[\f_{1:t-1}] - \st_t^{\f_t} \|_2  + \|\st_*^{\f_t}- \st_t^{\f_t} \|_2 \right]\;,
  \end{equation}
  where the above inequality follows from the $\lipx$ Lipschitz property of the loss function in the state space. We have dropped the dependence of $\f$ on the noise perturbation $\pert$, underlying parameter $\param$ as well as the fact that these are RERM solutions.  In order to obtain the stability parameters, we proceed to obtain a  bound on the terms on the right.

\textbf{Bound on $\|\st_*^{\f_t}- \st_t^{\f_t} \|_2$.} The upper bound on this difference is similar to the one we obtained while bounding the mixing gap, the only difference being we have to handle the expectation with respect to the random perturbation $\pert$. Consider,
  \begin{align}\label{eq:nl_mix_st}
    \Eu{\pert}\left[\|\st_*^{\f_t}- \st_t^{\f_t}\|_2\right] &= \Eu{\pert}\left[\|\signl[\lqA\st_*^{\f_t} + \lqB\f_t(\st_*^{\K_t})] - \signl[\lqA\st_{t-1}^{\f_t} + \lqB{\f_t(\st_{t-1}^{\f_t})}]\|_2\right]\nonumber\\
    &\leq(1-\stg) \Eu{\pert}\left[\|\st_*^{\K_t}- \st_{t-1}^{\K_t}\|_2\right]\nonumber\\
   &\leq (1-\stg)^{t-1}\cdot2\xbnd\;,
  \end{align}
  where the sequence of inequalities follows since we have $\f_{\param, t, \pert} \in \Fnl$ for any sampling of the perturbation variables $\pert$.

  \paragraph{Bound on $\|\st_t[\f_{1:t-1}] - \st_t^{\f_t} \|_2$.} 
Consider a parameter $\tmix \geq 1$ to be specified later. We can then decompose the desired difference as follows:
  \begin{align*}
    \left\lVert\st_t[\f_{1:t-1}] - \st_t^{\f_t} \right\rVert_2 &= \sum_{i = 1}^{\tmix}\left(\left\lVert \stf{t}{\f_1, \ldots, \f_{t-i},\f_t, \ldots, \f_t } -  \stf{t}{\f_1, \ldots, \f_{t-i-1},\K_t, \ldots, \f_t }\right\rVert_2 \right) \\
    &\quad + \|\stf{t}{\f_1, \ldots, \f_{t-\tmix-1},\f_t, \ldots, \f_t } - \stf{t}{\f_t, \ldots, \f_t } \|_2\\
    &\leq \sum_{i= 1}^{\tmix}\left(\left\lVert \stf{t}{\f_1, \ldots, \f_{t-i},\f_t, \ldots, \f_t } -  \stf{t}{\f_1, \ldots, \f_{t-i-1},\f_t, \ldots, \f_t }\right\rVert_2 \right) + 2\xbnd(1-\stg)^{\tmix-1}\;,
  \end{align*}
  where the last inequality follows from a similar calculation as in equation~\eqref{eq:nl_mix_st}. We now focus on the terms in the summation above, focusing on a general term $i$. Let us redefine the state to be $\st_0^i = \stf{t-i}{\f_1, \ldots, \f_{t-i-1}}$. Now, denote by $\hat{\st}_j = \st_{t-i+j}[\f_{t-i}, \f_t, \ldots, \f_t]$ to be the state reached when we select $\f_{t-i}$ at the $(t-i)^{th}$ time instance, followed by $\f_t$ for $j-1$ steps. Similarly, $\tilde{\st}_j = \st_{t-i+j}[\f_{t}, \f_t, \ldots, \f_t]$ is the state reached when one begins from $\st_0^i$ and selects $\f_t$ for the next $j$ time steps. Bounding the sum above is equivalent to bounding the difference $\tilde{\st}_i - \hat{\st}_i$.
  \begin{align*}
    \| \tilde{\st}_{i} - \hat{\st}_i\|_2 &= \|\signl[\lqA\tilde{\st}_{i-1} + \lqB\f_t(\tilde{\st}_{i-1})] - \signl[\lqA\hat{\st}_{i-1} + \lqB\f_t(\hat{\st}_{i-1})] \|_2\\
    &\leq (1-\stg)^{i-1}\|\tilde{\st}_{1} - \hat{\st}_{1} \|_2\\
    &= (1-\stg)^{i-1}\|\signl[\lqA{\st}_{0}^i + \lqB\f_{t-i}({\st}_{0}^i)] - \signl[\lqA{\st}_{0}^i + \lqB\f_t({\st}_{0}^i)] \|_2\\
    &\leq (1-\stg)^{i-1}\cdot\|B\|_2\cdot \|\f_{t-i}({\st}_{0}^i) - \f_{t}({\st}_{0}^i)\|\\
    &\stackrel{\1}{\leq} (1-\stg)^{i-1}\cdot\|B\|_2\cdot \lipf\|\param_{t-i} - \param_{t}\|\\
    &\leq i(1-\stg)^{i-1}\cdot\|B\|_2\cdot \lipf\stp
  \end{align*}
  where $\1$ follows from the Lipschitz assumption on the function class in the parameter space. Setting $\tmix = t$ and summing up the above inequalities, we get,
  \begin{equation}\label{eq:bnd2}
      \left\lVert\st_t[\f_{1:t-1}] - \st_t^{\f_t} \right\rVert_2 \leq  \frac{\|B\|_2\cdot \lipf\stp}{\stg^2} + 2\xbnd(1-\stg)^{t-1}.
  \end{equation}
  Finally, substituting the bounds obtained in eq.~\eqref{eq:nl_mix_st} and eq.~\eqref{eq:bnd2} in eq.~\eqref{eq:nl_bnd_mix_upp}, we get that:
  \begin{equation}\label{eq:nl_mix_erm}
    \left\vert\Eu{\pert}\left[\loss(\f_t, \st_t[\f_{1:t-1}]), \z\right] - \Eu{\pert}\left[\losss(\f_t, \z) \right]  \right\vert \leq \lipx  \left( \frac{\|B\|_2\cdot \lipf\stp}{\stg^2} + 4\xbnd(1-\stg)^{t-1} \right) \defn \stables_{\RERM, t}\;
  \end{equation}

  \paragraph{Bound on the value.} Having established the mixing gap and the RERM stability parameters, we now upper bound the value $\val_{\nlsf, \T}(\Fnl, \Z, \dyn, \loss)$
  \begin{align}\label{eq:nl_final}
    \val_{\nlsf, \T}(\Fnl, \Z, \dyn, \loss) &\stackrel{\1}{\leq} \sum_{t=1}^\T \stables_{\RERM, t} + 2\Rseq_{T}(\losss\com \Fnl) + \sup_{\f \in \Fnl} \sum_{t=1}^\T\left\lvert\losscf(\f, \z_t, t) - \losss(\f, \z)\right\rvert\ + \frac{\pbnd\pdim}{\regp}\nonumber\\
   &\overset{\text{Eq.}~\eqref{eq:nl_unif_mix}}{\leq} \sum_{t=1}^\T \stables_{\RERM, t} + 2\Rseq_{\T}(\losss\com \Fnl) + \frac{2\lipx\xbnd}{\stg} + \frac{\pbnd\pdim}{\regp}\nonumber\\
   &\overset{\text{Eq.}~\eqref{eq:nl_mix_erm}}{\leq} \frac{\lipx\lipf\|\lqB\|_2}{\stg^2}\cdot \stp\T +  2\Rseq_{\T}(\losss\com \Fnl) + \frac{6\lipx\xbnd}{\stg} + \frac{\pbnd\pdim}{\regp}
  \end{align}
  where $\1$ follows from the fact that $\En[\pert_i] = 1/\regp$.

  Following the proof technique of Corollary~\ref{cor:lqr}, it suffices to establish the stationary loss is bounded (by definition) and is Lipschitz with respect to the underlying parameter (Lemma~\ref{lem:nl_lip}). Combining this with the fact that the parameter $\param \in \real^d$, we have that the sequential Rademacher complexity is bounded as
  \begin{equation*}
    \Rseq_{\T}(\losss\com \Flq) \leq c\sqrt{ d\cdot \T\log(d \T\lip )},
  \end{equation*}
for some universal constant $c>0$. Setting a value of $\regp  = 1/\sqrt{T}$ then concludes the proof of the corollary.
\end{proof}

\subsubsection{Proof of Lemma~\ref{lem:nl_lip}}
For any policies $\f_1 \defn \f_{\param_1} \in \Flq$ and $\f_2 \defn \f_{\param_2}\in \Flq$, and instance $\z \in \Z$, consider the difference in the stationary loss
\begin{align*}
  |\losss(\f_1, \z) - \losss(\f_2, \z)| &\leq |\loss(\f_1, \st^{\f_1}_*, \z) - \loss(\f_1, \st^{\f_2}_*, \z) | + |\loss(\f_1, \st^{\f_2}_*, \z) - \loss(\f_2, \st^{\f_2}_*, \z) | \\
  &\stackrel{\1}{\leq} \liplf\|\param_1 - \param_2\|_2 + \lipx\|\st_*^{\f_1} - \st_*^{\f_2} \|_2\\
  &\stackrel{\2}{\leq} \left(\liplf + \lipx\frac{\|B\|_2\lipf}{\gamma}\right)\|\param_1 - \param_2\|_2,
\end{align*}
where inequality $\1$ follows from the Lipschitz property of the loss function with respect to the policy and state space while inequality $\2$ follows from the Lipschitz property of the policy. This establishes the desired claim.
\qed

\subsection{Online LQR with adversarial disturbances}
In this section, we consider the example of an online learning with dynamics problem where the adversary is allowed to perturb the dynamics in addition to the adversarial losses at each time step. We will focus on the Linear-Quadratic setup where the dynamics function is linear and the costs quadratic in the state $\st_t$ and action $\ac_t$. Agarwal et al.~\cite{agarwal2019} studied a general version of this problem where they considered the convex cost functions with linear dynamics.

As in the Online LQR example in Section~\ref{app:ex-olqr}, we consider the class of linear policies $\Flq$ which are $(\kappa, \gamma)$-strongly stable. Given this policy class, the online learning with dynamics game proceeds as follows, starting from state $\st_0 = 0$\\

\noindent On round $t = 1, \ldots, \T,$\vspace{-2mm}
\begin{itemize}
  \item the learner selects a policy $\K_t \in \Flq$ and the adversary selects instance $\z_t = (\lqQ_t, \lqR_t)$ such that $\lqQ_t \succeq 0, \lqR_t \succeq 0$ and $\trace(Q_t), \trace(R_t) \leq C$ and $\zd_t$ such that $\|\zd_t\|_2 \leq W$
  \item the learner receives loss $\loss(\f_t, \st_t, \z_t) = \st_t^\top \lqrQ_t \st_t + \ac_t^\top \lqR_t \ac_t$ where action $\ac_t = \K_t \st_t$
  \item the state of the system transitions to $\st_{t+1} = \lqrA \st_t + \lqrB \ac_t + \zd_t$
\end{itemize}
\noindent where we assume that the transition matrices $A$ and $B$ are known to both the learner and adversary in advance. Observe that in this case, a stationary loss $\losss$ does not exist because of the adversarial perturbations $\zd_t$ in the dynamic; indeed, if a learner repeatedly plays the same policy $\K \in \Flq$, the state of the system is not guaranteed to converge to a unique stationary state. We now proceed to obtain an upper bound on the value $\val_{\sf{adv}, \T}$ in the following corollary, by directly controlling the dynamic stability parameters $\{\stable_{\RERM, t}\}$ for this policy class $\Flq$ with a similar FTPL based regularized ERM as used in the proof of Corollary~\ref{cor:lqr}.

\begin{corollary}[LQR with adversarial disturbances] For the online LQR with adversarial disturbances problem, the value $\val_{\sf{adv}, \T}$ is bounded as
  \begin{equation*}
    \val_{\sf{adv}, \T} \leq \mathcal{O}(\sqrt{T\log(\T)}),
  \end{equation*}
  where the $\mathcal{O}$ notation hides the dependence on problem-specific parameters.
\end{corollary}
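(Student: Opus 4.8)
The plan is to apply Theorem~\ref{thm:main} \emph{directly}, rather than Corollary~\ref{cor:val_ergo}: as noted in the text there is no stationary loss $\losss$ here, so we must control the dynamic stability parameters $\{\stable_{\RERM,t}\}$ of a suitable regularized ERM against the counterfactual losses $\losscf$ themselves. First observe that, because the disturbances are bounded, $\|\zd_t\|_2\leq W$, running any $\K\in\Flq$ from $\st_0=0$ yields at time $t$ the (deterministic, since there is no stochastic $\w$) state $\st_t^\K(\zd_{1:t-1})=\sum_{s=0}^{t-1}(\lqA+\lqB\K)^s\zd_{t-1-s}$, and strong stability gives $\|(\lqA+\lqB\K)^s\|_2\leq\kappa(1-\gamma)^s$, hence $\|\st_t^\K\|_2\leq \kappa W/\gamma$ and $\losscf_t(\K,\zd_{1:t-1},\z_t)=(\st_t^\K)^\top(\lqQ_t+\K^\top\lqR_t\K)\st_t^\K$ is uniformly bounded by $B_{\sf max}\defn C(1+\kappa^2)(\kappa W/\gamma)^2$. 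A one-policy-at-a-time telescoping over the closed-loop dynamics (using the same geometric contraction) shows, analogously to Lemma~\ref{lem:lq_lip}, that $\losscf_t(\cdot,\zd_{1:t-1},\z_t)$ is $\lip$-Lipschitz on $\Flq$ in operator norm with $\lip$ depending only on problem constants and \emph{not} on $\T$.

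Second, take the FTPL-regularized ERM of~\eqref{eq:reg_erm} with respect to the counterfactual losses, with exponential perturbation $\pert\in\real^{\adim\times\sdim}$, $\pert_{ij}\sim\expd(\regp)$, exactly as in the proof of Corollary~\ref{cor:lqr}. By the FTPL stability estimate of~\cite{suggala2019} for $\lip$-Lipschitz losses, $\En_\pert\|\K_{t,\pert}-\K_{t+1,\pert}\|_1\leq \stK$ with $\stK=c\regp\,\lip(\adim\sdim)^2\kappa=\mathcal O(\regp)$. To bound $\stable_{\RERM,t}$, compare, for a fixed disturbance sequence, the learner's actual state $\widehat\st_t$ reached by playing $\K_{1,\pert},\dots,\K_{t-1,\pert}$ with the counterfactual state $\st_t^{\K_{t,\pert}}$. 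Writing $\widehat\st_t-\st_t^{\K_{t,\pert}}$ as a telescoping sum of hybrid trajectories that replace $\K_{t-i,\pert}$ by $\K_{t,\pert}$ at a single step, each term is contracted by $(1-\gamma)^{i-1}$ through the subsequent closed-loop steps and has immediate magnitude at most $\|\lqB\|_2\,\|\K_{t-i,\pert}-\K_{t,\pert}\|_2\,(\kappa W/\gamma)$; since $\En_\pert\|\K_{t-i,\pert}-\K_{t,\pert}\|_1\leq i\,\stK$, summing $\sum_{i\geq1}i(1-\gamma)^{i-1}$ gives $\En_\pert\|\widehat\st_t-\st_t^{\K_{t,\pert}}\|_2\leq\mathcal O(\stK/\gamma^2)$, uniformly in $t$ (note: no residual "initial-mixing" term, since both trajectories start at $\st_0=0$). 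As the loss is quadratic in a state of norm $\mathcal O(\kappa W/\gamma)$ with $\|\lqQ_t+\K^\top\lqR_t\K\|_2\leq C(1+\kappa^2)$, this yields $\stable_{\RERM,t}\leq\mathcal O(\regp)$ and hence $\sum_{t=1}^\T\stable_{\RERM,t}=\mathcal O(\regp\,\T)$.

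Third, bound $\Rseq_\T(\losscf\com\Flq)$: the counterfactual loss class is indexed by $\K\in\Flq\subset\real^{\adim\times\sdim}$, is uniformly bounded by $B_{\sf max}$, and is $\lip$-Lipschitz in $\K$ for every fixed $\Z$-valued tree (by the Lipschitz estimate of step one, which carries over to the sequential setting unchanged); a standard $\epsilon$-net of $\Flq$ in Frobenius norm (metric entropy $\mathcal O(\adim\sdim\log(1/\epsilon))$) together with the covering/chaining argument used for Corollary~\ref{cor:lqr} gives $\Rseq_\T(\losscf\com\Flq)\leq c\,B_{\sf max}\sqrt{\adim\sdim\,\T\log(\adim\sdim\,\T\lip)}=\mathcal O(\sqrt{\T\log\T})$. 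The regularization term in~\eqref{eq:main_upper} is, for the FTPL perturbation, $\mathcal O(\kappa\adim\sdim/\regp)$ (using $\En\pert_{ij}=1/\regp$ and the $\mathcal O(\kappa)$ diameter of $\Flq$). Plugging these three bounds into Theorem~\ref{thm:main} gives $\val_{\sf{adv},\T}\leq\mathcal O(\regp\,\T)+\mathcal O(\sqrt{\T\log\T})+\mathcal O(\kappa\adim\sdim/\regp)$, and the choice $\regp=\Theta(1/\sqrt\T)$ yields $\val_{\sf{adv},\T}\leq\mathcal O(\sqrt{\T\log\T})$.

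The main obstacle is the dynamic-stability estimate in step two: one must show the learner's trajectory under a slowly varying sequence of ERM policies tracks the counterfactual trajectory of the \emph{final} policy, combining the FTPL stability $\|\K_{t,\pert}-\K_{t+1,\pert}\|=\mathcal O(\regp)$ with the geometric contraction from strong stability while the adversary simultaneously injects the disturbances $\zd_t$. Unlike the stochastic-noise LQR of Section~\ref{app:ex-olqr} there is no stationary state or Lyapunov recursion to anchor to, so every estimate — including the uniform-in-$\T$ Lipschitz constant $\lip$ of the counterfactual loss over $\Flq$ — must be pushed through the hybrid/telescoping decomposition using only the strong-stability geometric series.
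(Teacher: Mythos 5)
Your proposal is correct and follows essentially the same route as the paper's proof: work directly with the counterfactual losses (no stationary loss exists), use an FTPL-perturbed regularized ERM whose iterates satisfy $\En_\pert\|\K_{t,\pert}-\K_{t+1,\pert}\|_1=\mathcal O(\regp)$, bound the dynamic stability parameters by telescoping hybrid trajectories with the strong-stability geometric contraction (both trajectories starting from the same initial state, so no initial-mixing term), control $\Rseq_\T(\losscf\com\Flq)$ via boundedness, Lipschitzness in $\K$, and a covering argument, and balance with $\regp=\Theta(1/\sqrt{\T})$. The only cosmetic difference is that the paper carries out the stability estimate at the level of the rank-one covariance matrices $\cov_t,\covt_t$ with a three-term error decomposition, whereas you bound the state difference directly and then use the quadratic-form Lipschitz estimate; these are equivalent computations.
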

\begin{proof}
  As we discussed above, the stationary losses $\losss$ do not exist for this setup. Instead, we will work directly with the counterfactual losses $\losscf$ for this setup. Recall from Definition~\ref{def:loss_cf}, the counterfactual loss at time $t$ for some linear policy $\K \in \Flq$ is defined as
  \begin{align*}
    \losscf_t(\K_t, \zd_{1:t}, \z_t) = \loss(\K_t, \st_t[\K_t^{(t-1)}, \zd_{1:t-1}], \z_t).
  \end{align*}
To instantiate the above counterfactual for the LQR problem, we will define some notation. Let us denote by $\st_t = \st_t[\K_{1:t-1}, \zd_{1:t-1}]$ the state at time reached by playing the sequence of policies $\K_{1:t-1}$ and by $\stt_t = \st_t[\K_t^{(t-1)}, \zd_{1:t-1}]$ the state when the learner plays polices $\K_t$ for the first $t-1$ time steps. Further, let us denote by $\cov_t = \st_t\st_t^\top$ the rank $1$ covariance matrix at time $t$ for state $\st_t$ and similarly $\covt_t = \stt_t\stt_t^\top$ for state $\stt_t$. With this notation, we have the losses
{\small
\begin{align}\label{eq:losscf-adv}
  \loss_t(\K_t, \st[\K_{1:t-1}, \zd_{1:t-1}], \z_t) = \trace((Q_t+\K_t^\top R_t \K_t)\cov_t) \quad \text{and} \quad   \losscf_t(\K_t, \zd_{1:t-1}, \z_t) = \trace((Q_t+\K_t^\top R_t \K_t)\covt_t).
\end{align}
}
We now proceed to define the regularized ERM that we shall use and derive an upper bound on the dynamic stability parameters.
\paragraph{Regularized ERM.} As in the proof of Corollary~\ref{cor:lqr}, we will consider the class of dual regularized ERM derived by the FTPL strategy
\begin{align}\label{eq:k-ftpl}
  \K_{t, \sig} = \argmin_{\K \in \Flq}\left(\sum_{s=1}^t \Eu{\z_s\sim\ad_s}[\inner{Q_s + \K^\top R \K}{\covt_s}] - \inner{\sig}{\K}\right)
\end{align}
where $\sig \in \real^{d\times k}$ such that each coordinate of $\sig$ is sampled i.i.d. from the exponential distribution with parameter $\regp > 0$. Following a similar argument as the one in the proof of Corollary~\ref{cor:lqr}, we have
\begin{align}
  \Eu{\sig}[\|\K_{t, \sig} - \K_{t+1, \sig} \|_1] \leq c\regp\cdot \lip(kd)^2\kappa \defn \regp_\K,
\end{align}
where the constant $\lip$ represents the Lipschitz constant of the function $\losscf$ (see Lemma~\ref{lem:adv-lip-bnd}).

\paragraph{Dynamic stability parameters.} For any time $t > 0$ and the policies $\{\K_t\}$ given by equation~\eqref{eq:k-ftpl} (we drop the dependence on the noise $\sig$) and any sequence of adversarial instances $(\zd_{1:t}, \z_{1:t})$, we have
\begin{align}\label{eq:dyn-stbl-adv}
  |\losscf_t(\K, \zd_{1:t-1}, \z_t) - \loss_t(\K_t, \st_t, \z_t)| &= |\inner{Q_t + \K_t^\top R_t\K_t}{\cov_t - \covt_t}|\nonumber\\
  &\leq \trace(Q_t +\K_t^\top R_t\K_t)\cdot \|\cov_t - \covt_t \|_2.
\end{align}
Thus, in order to obtain a bound on the dynamic stability parameters, we need to obtain a bound on the spectral norm of the difference $\cov_t - \covt_t$. To do so, we begin by bounding the distance between the states $\st_t$ and $\stt_t$ as
\begin{align}
  \st_t - \stt_t &= (A + B\K_{t-1})\st_{t-1} + \zd_{t-1} - (A+B\K_t)\stt_{t-1} - \zd_{t-1} \nonumber\\
  &= (A+B\K_t)(\st_{t-1} - \stt_{t-1}) + B (\K_{t-1} - \K_t)\st_{t-1}\nonumber \\
  &= (A+B\K_t)^{t-1}(\st_1 - \stt_1) + \sum_{s=2}^{t-1} (A+B\K_t)^{t-s}B(\K_s - \K_t)\st_s\;,
\end{align}
where the final inequality follows by unrolling the recursion. Observe that the first term in the above equality is $0$ since both $\st_1 = \stt_1 = \zd_1$. Taking the $\ell_2$ norm on both sides, we get,
\begin{align}\label{eq:bnd-norm-st}
  \|\st_t - \stt_t\|_2 &\stackrel{\1}{\leq} C_x\sig_B \kappa \regp_\K \sum_{s=2}^{t-1} (1-\gamma)^{t-s}(t-s) \nonumber\\
  &\stackrel{\2}{\leq} \frac{C_x\sig_B \kappa \regp_\K}{\gamma} = \; : C_{x,2}
\end{align}
where inequality $\1$ follows by using the fact that $\K_t$ is $(\kappa, \gamma)$-strongly stable and the bound on the norm of the state $\|\st_s\|_2 \leq \frac{\kappa W}{\gamma} \defn C_x$ and $\2$ follows by summing up the series. With this bound, we obtain an expression for the difference between the covariances at time $t+1$ as

\begin{align}
  \covt_{t+1} - \cov_{t+1} &= \left(\zd_t((A+B\K_{t+1})\stt_t)^\top + (A+B\K_{t+1})\stt_t\zd_t^\top + (A+B\K_{t+1}) \covt_{t} (A+B\K_{t+1})^\top\right) \nonumber\\
  &\quad - \left(\zd_t((A+B\K_{t})\st_t)^\top + (A+B\K_{t})\st_t\zd_t^\top + (A+B\K_{t}) \cov_{t} (A+B\K_{t})^\top\right)\nonumber\\
  &= \underbrace{\zd_t(\stt_t - \st_t)^\top(A+B\K_{t+1})^\top + (A+B\K_{t+1})(\stt_t - \st_t)\zd_t^\top}_{\err_1^{t}}\nonumber\\
  &\quad+ \underbrace{\zd_t\st_t^\top(\K_{t+1} - \K_t)^\top B^\top+ B(\K_{t+1} - \K_t)\st_t\zd_t^\top}_{\err_2^{t}}\nonumber\\
  &\quad+ \underbrace{B(K_{t+1} - \K_t)\cov_t(A+B\K_t)^\top + (A+B\K_{t+1})\cov_t(B(\K_{t+1} - \K_t))^\top}_{\err_3^{t}}\nonumber \\
  &\quad + (A+B\K_{t+1})(\covt_t - \cov_t)(A+B\K_{t+1})^\top.\nonumber
\end{align}
Let us denote by $\errX^{t+1} \defn \covt_{t+1} - \cov_{t+1}$ the difference between the covariance at time $t+1$ and by $\Kt \defn A+B\K_{t+1}$. With this notation, we can rewrite the above as
\begin{align}
  \errX^{t+1} &=  (\Kt)\errX^t(\Kt)^\top + \sum_{i=1}^3 \err_i^{t} \nonumber\\
  &= \Kt^2\errX^{t-1}(\Kt^2)^\top + \Kt \sum_{i=1}^3 \err_i^{t-1} \Kt^\top + \sum_{i=1}^3 \err_i^{t}\nonumber \\
  &= \Kt^{t}\errX^1(\Kt^t)^\top + \sum_{i=1}^3\sum_{s=2}^{t}\Kt^{t-s+1}\err_i^s (\Kt^{t-s+1})^\top,
\end{align}
where in the last equality observe that $\errX^1 = 0$. In order to bound the deviation $\|\errX^{t+1} \|_2$, we will now bound each of three terms in the above equation separately. \\

\textbf{Bound for $\err_1$.} To obtain a bound on the term with the error $\err_1$, recall from equation~\eqref{eq:bnd-norm-st} that we have $\|\st_t - \stt_t\|_2 \leq C_{x,2}$. With this, we have,
\begin{align}\label{eq:bnd-err1}
  \sum_{s=2}^{t}\Kt^{t-s+1}\err_1^s (\Kt^{t-s+1})^\top &\leq 2WC_{x,2}\kappa^3\cdot\sum_{s=2}^{t}(1-\gamma)^{2(t-s+1)}\nonumber\\
  &\leq \frac{2W\kappa^3}{\gamma}\cdot C_{x,2}\;.
\end{align}

\textbf{Bound for $\err_2$.} For the error term corresponding to $\err_2$, recall that we have $\|K_{t+1} - \K_s\|_2 \leq (t+1 -s)\cdot\regp_\K$. Substituting this in the error term, we have,
\begin{align}\label{eq:bnd-err2}
  \sum_{s=2}^{t}\Kt^{t-s+1}\err_2^s (\Kt^{t-s+1})^\top &\leq 2\sig_BWC_x \regp_\K \kappa^2\sum_{s=2}^{t} (1-\gamma)^{2(t-s+1)}(t-s+1)\nonumber\\
  &\leq \frac{2\sig_BWC_x \kappa^2}{\gamma^2}\cdot \regp_\K\;.
\end{align}

\textbf{Bound for $\err_3$.} Finally, for the error term $\err_3$, observe that the spectral norm of the covariance $\|\cov_t\|_2 \leq C_x^2$, and substituting this in the sum, we have
\begin{align}\label{eq:bnd-err3}
  \sum_{s=2}^{t}\Kt^{t-s+1}\err_3^s (\Kt^{t-s+1})^\top &\leq 2 C_x^2\sig_B\kappa^3\regp_\K \sum_{s=2}^t (1-\gamma)^{2(t-s+1)}(t-s+1)\nonumber\\
  &\leq \frac{ C_x^2\sig_B\kappa^3}{\gamma^2}\cdot\regp_\K\;.
\end{align}

\noindent Substituting the bounds obtained in equations~\eqref{eq:bnd-err1},~\eqref{eq:bnd-err2} and~\eqref{eq:bnd-err3} in the upper bound on the stability parameters in equation~\eqref{eq:dyn-stbl-adv}, we have that the dynamic stability parameters $\beta_{\RERM,t} = c_\stable \regp_\K$, where the constant $c_\stable$ depends on problem dependent parameters and can be obtained from the above equations. Having established a bound on the dynamic stability, we now upper bound the value for this problem.

\paragraph{Bound on the value.} The value of the online LQR with adversarial disturbance problem is
\begin{align}
  \val_{\sf{adv}, \T}(\Flq, \Z, \dyn, \loss) &\stackrel{\1}{\leq} \sum_{t=1}^\T \stable_{\RERM, t} + 2\Rseq_{T}(\losscf\com \Flq) + \frac{\stk\adim\sdim}{\regp}\nonumber\\
 &\stackrel{\2}{\leq} c_{\stable}\regp_\K\cdot T + 2\Rseq_{T}(\losscf\com \Flq) + \frac{\stk\adim\sdim}{\regp}\;,
\end{align}
where $\1$ follows from the fact that $\En[\pert_i] = 1/\regp$ and $\2$ follows from noting that each of the dynamic stability parameters is upper bounded by $c_\stable \regp_\K$.

To obtain a bound on the sequential Rademacher complexity of the class, observe that by Lemma~\ref{lem:adv-lip-bnd}, we have that the loss $\losscf$ is bounded by $B_{\sf max}$ and Lipschitz with respect to policies $\K$ with constant $\lip$. Using a standard covering number argument, one can get an $\epsilon$-net of the class $\Flq$ in the frobenius norm with at most $O(dk(\frac{1}{\epsilon})^{d k})$ elements. Given this cover, one can upper bound the complexity as
\begin{equation*}
  \Rseq_{\T}(\losss\com \Flq) \leq cB_{\sf max}\sqrt{ k d\cdot \T\log(k d \T\lip )}
\end{equation*}
for some universal constant $c>0$. Setting $\regp = O(1/\sqrt{T})$ concludes the proof of the corollary.
\end{proof}

\begin{lemma}\label{lem:adv-lip-bnd}
For the counterfactual loss $\losscf_t(\K, \zd_{1:t-1}, \z_t)$ defined in equation~\eqref{eq:losscf-adv} and policy class $\Flq$, we have
\begin{itemize}
  \item[a] $\losscf_t$ is bounded by $|\losscf(\K, \zd_{1:t-1, \z_t})| \leq C(1+\kappa^2)\cdot \left(\frac{\kappa W}{\gamma}\right)^2 :\,= B_{\sf{max}}$.
  \item[b] $\losscf_t$ is Lipschitz with respect to $\K$ with
  \begin{equation*}
    |\losscf_t(\K_1) - \losscf_t(\K_2)| \leq  C\left(2C_x^2\kappa + (\kappa^2+1)\left(\frac{2\kappa^4C_x \sig_B W }{\gamma^2} + \frac{\kappa^3C_x^2\sig_B}{\gamma} \right)\right) \cdot \|\K_1 - \K_2\|_2.
  \end{equation*}
\end{itemize}
\end{lemma}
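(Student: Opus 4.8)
\textbf{Proof proposal for Lemma~\ref{lem:adv-lip-bnd}.} The argument mirrors the analysis of the online LQR problem in Section~\ref{app:ex-olqr}, with the rank-one matrices $\covt_t = \stt_t\stt_t^\top$ playing the role of the stationary covariances. First I would establish a uniform bound on the counterfactual state $\stt_t$, the state reached by running a single policy $\K$ for the first $t-1$ rounds: unrolling the closed-loop recursion $\stt_{s+1} = (A+B\K)\stt_s + \zd_s$ and using $(\kappa,\gamma)$-strong stability, i.e. $\|(A+B\K)^j\|_2 \le \kappa(1-\gamma)^j$, together with $\|\zd_s\|_2 \le W$, yields $\|\stt_t\|_2 \le \sum_{j\ge 0}\kappa(1-\gamma)^j W \le C_x$ with $C_x = \kappa W/\gamma$ (the same bound used in the corollary proof), hence $\|\covt_t\|_2 = \|\stt_t\|_2^2 \le C_x^2$. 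Part (a) then follows from $|\trace((Q_t + \K^\top R_t\K)\covt_t)| \le \trace(Q_t + \K^\top R_t\K)\,\|\covt_t\|_2$ (valid since $Q_t + \K^\top R_t\K \succeq 0$), combined with $\trace(Q_t)\le C$ and $\trace(\K^\top R_t\K) = \trace(R_t\K\K^\top)\le \|\K\|_2^2\trace(R_t)\le \kappa^2 C$, giving $B_{\sf max} = C(1+\kappa^2)C_x^2$.

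For part (b), write $\stt_t^{(i)}, \covt_t^{(i)}$ for the quantities induced by $\K_i$. I would split
\begin{align*}
\losscf_t(\K_1) - \losscf_t(\K_2) = \trace((Q_t + \K_1^\top R_t\K_1)(\covt_t^{(1)} - \covt_t^{(2)})) + \trace((\K_1^\top R_t\K_1 - \K_2^\top R_t\K_2)\covt_t^{(2)}).
\end{align*}
The second trace is handled via $\K_1^\top R_t\K_1 - \K_2^\top R_t\K_2 = \K_1^\top R_t(\K_1-\K_2) + (\K_1-\K_2)^\top R_t\K_2$, $\|R_t\|_2 \le C$ and $\|\covt_t^{(2)}\|_2 \le C_x^2$, giving a bound of $2\kappa C C_x^2\|\K_1-\K_2\|_2$; the first trace is at most $C(1+\kappa^2)\|\covt_t^{(1)} - \covt_t^{(2)}\|_2$, so the crux is bounding the covariance gap.

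To this end I would first control the state gap $\delta_t := \stt_t^{(1)} - \stt_t^{(2)}$: the recursion $\delta_{s+1} = (A+B\K_1)\delta_s + B(\K_1-\K_2)\stt_s^{(2)}$ with $\delta_1 = 0$ unrolls to $\delta_t = \sum_{s=1}^{t-1}(A+B\K_1)^{t-1-s}B(\K_1-\K_2)\stt_s^{(2)}$, which, exactly as in equation~\eqref{eq:bnd-norm-st}, gives $\|\delta_t\|_2 \le \frac{\kappa\sig_B C_x}{\gamma}\|\K_1-\K_2\|_2$. The cleanest way to finish uses the identity $\covt_t^{(1)} - \covt_t^{(2)} = \delta_t(\stt_t^{(1)})^\top + \stt_t^{(2)}\delta_t^\top$, whence $\|\covt_t^{(1)} - \covt_t^{(2)}\|_2 \le 2C_x\|\delta_t\|_2$; alternatively, and matching the (slightly looser) constant stated in the lemma — the $\frac{2\kappa^4 C_x\sig_B W}{\gamma^2} + \frac{\kappa^3 C_x^2\sig_B}{\gamma}$ form — one reuses verbatim the Lyapunov-type recursion from the proof of the corollary, namely $\errX^{t+1} = (A+B\K_1)\errX^t(A+B\K_1)^\top + E_t$ for $\errX^t := \covt_t^{(1)} - \covt_t^{(2)}$ with $E_t$ collecting the $\zd_t$-cross terms and the $B(\K_1-\K_2)$-terms, then unrolling with $\errX^1 = 0$ and summing the geometric series as in equations~\eqref{eq:bnd-err1}--\eqref{eq:bnd-err3}. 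Substituting the resulting bound into the two-term split above yields the claimed Lipschitz constant. I do not anticipate a genuine obstacle: everything reduces to the strong-stability bound, geometric summation, and the triangle inequality; the only point requiring care is keeping the cross-term bookkeeping in $E_t$ policy-independent, exactly as in the main corollary proof.
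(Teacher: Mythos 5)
Your proposal is correct, and for most of the lemma it retraces the paper's own argument: part (a) is the same trace/operator-norm bound combined with the geometric-series estimate $\|\stt_t\|_2\le \kappa W/\gamma = C_x$, and for part (b) the paper uses exactly your two-term split, a covariance-gap term weighted by $C(1+\kappa^2)$ plus a $\K^\top R\K$-difference term weighted by $C_x^2$ that yields the $2CC_x^2\kappa$ contribution. The one step where you genuinely diverge is the covariance gap. The paper sets up a Lyapunov-type recursion $\covt_{1,t}-\covt_{2,t} = \tK_2(\covt_{1,t-1}-\covt_{2,t-1})\tK_2^\top + \err_{1,t-1}+\err_{2,t-1}+\err_{3,t-1}$, unrolls it from $\covt_{1,1}-\covt_{2,1}=0$, and bounds the three families of cross terms separately; summing those three bounds is precisely what produces the $\frac{2\kappa^4 C_x\sig_B W}{\gamma^2}+\frac{\kappa^3 C_x^2\sig_B}{\gamma}$ constant in the statement. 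Your rank-one identity $\stt_t^{(1)}(\stt_t^{(1)})^\top - \stt_t^{(2)}(\stt_t^{(2)})^\top = \delta_t(\stt_t^{(1)})^\top + \stt_t^{(2)}\delta_t^\top$ short-circuits all of this: combined with the state-gap bound $\|\delta_t\|_2\le \frac{\kappa\sig_B C_x}{\gamma}\|\K_1-\K_2\|_2$ (which the paper also derives and uses inside its $\err_2$ bound), it gives $\|\covt_{1,t}-\covt_{2,t}\|_2\le \frac{2\kappa\sig_B C_x^2}{\gamma}\|\K_1-\K_2\|_2$ in one line and with a strictly smaller constant. Since the lemma is only consumed downstream to feed a Lipschitz constant into the FTPL stability bound and a covering-number estimate, the tighter constant is harmless, and your direct route is the cleaner one; your fallback via the recursion correctly reproduces the constants as stated.
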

\begin{proof}
  We will establish both the parts separately.
\paragraph{Proof for part (a).} Consider the counterfactual loss $\losscf_t$ at time $t$
\begin{align}
  |\losscf(\K, \zd_{1:t-1}, \z_t)| &= \inner{Q_t + K^\top R_t \K}{\stt_t\stt_t^\top}\nonumber\\
  &\leq \|Q_t + K^\top R_t \K\|_2 \cdot\|\stt_t\|_2^2 \nonumber\\
  &\leq C(1+\kappa^2)\cdot \left(\frac{\kappa W}{\gamma}\right)^2\;,
\end{align}
where the last inequality follows by using the fact that the policy $\K$ is $(\kappa, \gamma)$-strongly stable and that $\|\st\|_2 \leq C_x \defn\frac{\kappa W}{\gamma} $.

\paragraph{Proof for part (b).} Consider any two linear policies $\K_1, \K_2 \in \Flq$. The difference in the counterfactual losses is given by
\begin{align}
  |\losscf_t(\K_1, \zd_{1:t-1}, \z_t) - \losscf_t(\K_2, \zd_{1:t-1}, \z_t)| &= |\inner{Q+\K_1^\top R \K_1}{\covt_1} - \inner{Q+\K_2^\top R \K_2}{\covt_2}|\nonumber\\
  &\leq C(\kappa^2+1) \|\covt_{1,t} - \covt_{2,t}\|_2 + 2C_x^2C\kappa \|\K_1 - \K_2\|_2,
\end{align}
where we have used the notation $\covt_t = \stt_t\stt_t^\top$ to denote the covariance at time $t$ and the final inequality follows by noting that $\trace(Q), \trace{R} \leq C$ and $\|\K_i\|_2 \leq \kappa$ for $i = \{ 1,2\}$. Let us denote by $\tK = A+B\K$ the effective state transition matrix. We now focus on the term corresponding to the difference of the covariances $\covt_{1,t}- \covt_{2,t}$.
\begin{align}\label{eq:lip-decomp}
\covt_{1,t} - \covt_{2,t} &= \left(\tK_1\stt_{1, t-1}\zd_{t-1}^\top + \zd_{t-1}\stt_{1, t-1}^\top \tK_1^\top + \tK_1\covt_{1, t-1}\tK_1^\top\right) - \left(\tK_2\stt_{2, t-1}\zd_{t-1}^\top + \zd_{t-1}\stt_{2, t-1}^\top \tK_2^\top + \tK_2\covt_{2, t-1}\tK_2^\top\right)\nonumber\\
&=\underbrace{(\tK_1 - \tK_2)\stt_{1, t-1}\zd_{t-1}^\top+\zd_{t-1}\stt_{1, t-1}^\top(\tK_1 - \tK_2)^\top}_{\err_{1, t-1}} + \underbrace{\tK_2(\stt_{1, t-1}- \stt_{2, t-1})\zd_{t-1}^\top+\zd_{t-1}(\stt_{1,t-1}- \stt_{2,t-1})^\top \tK_2^\top}_{\err_{2, t-1}}\nonumber\\
&\quad + \underbrace{(\tK_1 - \tK_2)\covt_{1, t-1}\tK_1^\top + \tK_2 \covt_{1,t-1}(\tK_1 -\tK_2)^\top}_{\err_{3, t-1}} + \tK_2 (\covt_{1,t-1} - \covt_{2, t-1})\tK_2^\top\nonumber\\
&= \sum_{i=1}^3 \sum_{s = 2}^{t-1} \tK_2^{t-1+s}\err_{i, s}(\tK_2^{t-1+s})^\top.
\end{align}
In order to show establish the Lipschitzness of the loss function $\losscf$, we will obtain a bound on each of the three error terms comprising $\err_i$ separately now.

\underline{Bound on $\err_1$.} For the term corresponding to $\err_1$, observe that both the state $\stt$ and the disturbance $\zd$ are bounded vectors. Using the $(\kappa,\gamma)$-strong stability of the policy $\K_1$, we have
\begin{align}\label{eq:lip-err1}
  \|\sum_{s = 2}^{t-1} \tK_2^{t-1+s}\err_{1,s}(\tK_2^{t-1+s})^\top\|_2 \leq \frac{\kappa^2C_x\sig_BW}{\gamma}\cdot \|\K_1 - \K_2\|_2.
\end{align}

\underline{Bound on $\err_2$.} For the second term, observe that
$$\|\stt_{1,t}-\stt_{2,t}\|_2 \leq \frac{\kappa \sig_BC_x}{\gamma}\cdot\|\K_1 - \K_2\|_2.$$ With this, we can bound the second term in equation~\eqref{eq:lip-decomp} as
\begin{align}\label{eq:lip-err2}
    \|\sum_{s = 2}^{t-1} \tK_2^{t-1+s}\err_{2,s}(\tK_2^{t-1+s})^\top\|_2 \leq \frac{\kappa^4C_x \sig_B W }{\gamma^2}\cdot \|\K_1 - \K_2\|_2.
\end{align}

\underline{Bound on $\err_3$.} For the final term, note that $\|\covt_{t}\|_2 \leq C_x^2$. With this, we can bound the term corresponding to $\err_3$ as
\begin{align}\label{eq:lip-err3}
  \|\sum_{s = 2}^{t-1} \tK_2^{t-1+s}\err_{2,s}(\tK_2^{t-1+s})^\top\|_2 \leq \frac{\kappa^3C_x^2\sig_B}{\gamma}\cdot\|\K_1 - \K_2\|_2.
\end{align}

Combining the bounds obtained in equations~\eqref{eq:lip-err1},~\eqref{eq:lip-err2} and~\eqref{eq:lip-err3}, with the upper bound in equation~\eqref{eq:lip-decomp} establishes the desired claim.
\end{proof}

\newpage
\printbibliography
\end{document}